\let\appendices\relax
\newcommand{\calE}{{\cal E}}
\newcommand{\calF}{{\cal F}}
\newcommand{\calI}{{\cal I}}
\newcommand{\calK}{{\cal K}}
\newcommand{\calN}{{\cal N}}
\newcommand{\calR}{{\cal R}}
\newcommand{\calU}{{\cal U}}
\newcommand{\calZ}{{\cal Z}}
\newcommand{\bfe}{\mathbf{e}}
\newcommand{\bfh}{\mathbf{h}}
\newcommand{\bfl}{\mathbf{l}}
\newcommand{\bfm}{\mathbf{m}}
\newcommand{\bfn}{\mathbf{n}}
\newcommand{\bfp}{\mathbf{p}}
\newcommand{\bfu}{\mathbf{u}}
\newcommand{\bfv}{\mathbf{v}}
\newcommand{\bfz}{\mathbf{z}}
\newcommand{\bfeta}{\boldsymbol{\eta}}
\newcommand{\bfphi}{\boldsymbol{\phi}}
\newcommand{\bfchi}{\boldsymbol{\chi}}
\newcommand{\bfpsi}{\boldsymbol{\psi}}
\newcommand{\bfomega}{\boldsymbol{\omega}}
\newcommand{\bfE}{\mathbf{E}}
\newcommand{\bfR}{\mathbf{R}}
\newcommand{\bfX}{\mathbf{X}}
\newcommand{\bbE}{\mathbb{E}}
\newcommand{\bbR}{\mathbb{R}}
\newcommand{\prl}[1]{\left(#1\right)}
\newcommand{\crl}[1]{\left\{#1\right\}}
\newcommand{\scaleMathLine}[2][1]{\resizebox{#1\linewidth}{!}{$\displaystyle{#2}$}}
\def\negquad{\mkern-18mu}
\newcommand\BibTeX{{\rmfamily B\kern-.05em \textsc{i\kern-.025em b}\kern-.08em
T\kern-.1667em\lower.7ex\hbox{E}\kern-.125emX}}
\algnewcommand{\LineComment}[1]{\Statex \(\triangleright\) #1}
\newtheorem{proposition}{Proposition}
\newtheorem{corollary}{Corollary}
\theoremstyle{definition}
\newtheorem{definition}{Definition}
\newtheorem*{problem}{Problem}
\theoremstyle{remark}
\newsavebox{\tempbox}
\title{\LARGE \bf Semantic OcTree Mapping and Shannon Mutual Information Computation for Robot Exploration%
}
\author{Arash~Asgharivaskasi,~\IEEEmembership{Student Member,~IEEE,} Nikolay~Atanasov,~\IEEEmembership{Member,~IEEE}% <-this % stops a space
\thanks{We gratefully acknowledge support from ARL DCIST CRA W911NF-17-2-0181 and NSF FRR CAREER 2045945. The authors are with the Department of Electrical and Computer Engineering, University of California San Diego, CA 92093, USA {\tt\small \{aasghari,natanasov\}@eng.ucsd.edu}.}}
\begin{document}

% The paper headers
\markboth{IEEE Transactions on Robotics,~Vol.~X, No.~X, Month~20XX}%
{Asgharivaskasi and Atanasov: Semantic OcTree Mapping and Shannon Mutual Information Computation for Robot Exploration}
%\markboth{IEEE Transactions on Robotics}%
%{Shell \MakeLowercase{\textit{et al.}}: Bare Demo of IEEEtran.cls for IEEE Journals}
% The only time the second header will appear is for the odd numbered pages
% after the title page when using the twoside option.
% 
% *** Note that you probably will NOT want to include the author's ***
% *** name in the headers of peer review papers.                   ***
% You can use \ifCLASSOPTIONpeerreview for conditional compilation here if
% you desire.

% make the title area
\maketitle

\begin{abstract}
Autonomous robot operation in unstructured and unknown environments requires efficient techniques for mapping and exploration using streaming range and visual observations. Information-based exploration techniques, such as Cauchy-Schwarz quadratic mutual information (CSQMI) and fast Shannon mutual information (FSMI), have successfully achieved active binary occupancy mapping with range measurements. However, as we envision robots performing complex tasks specified with semantically meaningful concepts, it is necessary to capture semantics in the measurements, map representation, and exploration objective. This work presents Semantic octree mapping and Shannon Mutual Information (SSMI) computation for robot exploration. We develop a Bayesian multi-class mapping algorithm based on an octree data structure, where each voxel maintains a categorical distribution over semantic classes. We derive a closed-form efficiently-computable lower bound of the Shannon mutual information between a multi-class octomap and a set of range-category measurements using semantic run-length encoding of the sensor rays. The bound allows rapid evaluation of many potential robot trajectories for autonomous exploration and mapping. We compare our method against state-of-the-art exploration techniques and apply it in a variety of simulated and real-world experiments.
\end{abstract}

\begin{IEEEkeywords}
View Planning for SLAM, Reactive and Sensor-Based Planning, Vision-Based Navigation.
\end{IEEEkeywords}

\IEEEpeerreviewmaketitle

% \section*{Supplementary Material}
% \noindent Software and videos supplementing this paper are available at:
% \centerline{\url{https://arashasgharivaskasi-bc.github.io/SSMI_webpage}}

\section{Introduction}
\label{sec:introduction}

Accurate modeling, real-time understanding, and efficient storage of a robot's environment are key capabilities for autonomous operation. Occupancy grid mapping \cite{occ_mapping_1, occ_mapping_2} is a widely used, simple, yet effective, technique for distinguishing between traversable and occupied space surrounding a mobile robot. OctoMap \cite{octomap} is an extension of occupancy grid mapping, introducing adaptive resolution to improve the memory usage and computational cost of generating 3-D maps of large environments. Delegating increasingly sophisticated tasks to autonomous robots requires augmenting traditional geometric models with semantic information about the context and object-level structure of the environment \cite{semantic_1, semantic_2, semantic_3}. Robots also are increasingly expected to operate in unknown environments, with little to no prior information, in applications such as disaster response, environmental monitoring, and reconnaissance. This calls for algorithms allowing robots to autonomously explore unknown environments and construct low-uncertainty metric-semantic maps in real-time, while taking collision and visibility constraints into account.

\begin{figure}[t]
  \centering
  \includegraphics[width=0.9\linewidth]{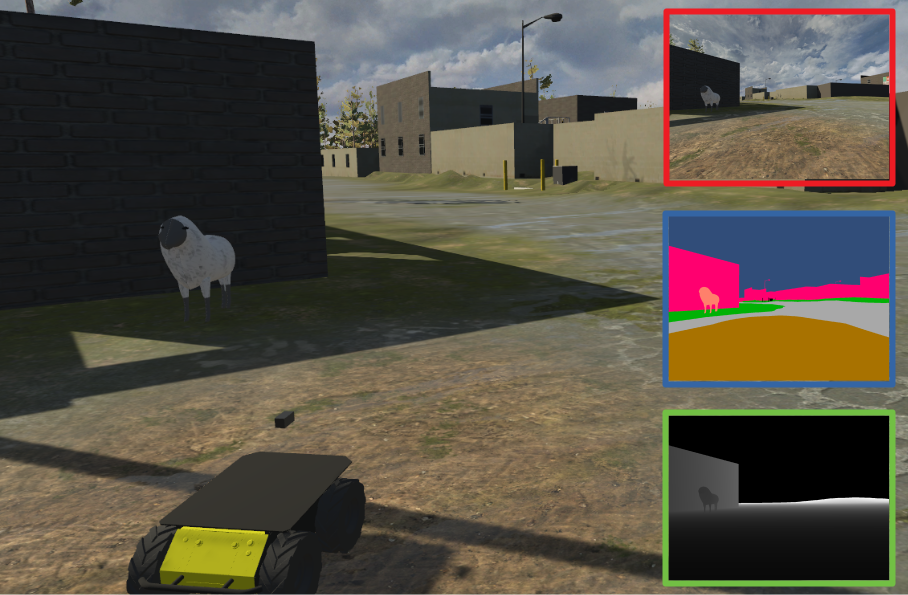}
  \caption{A robot autonomously explores an unknown environment using an RGBD sensor and a semantic segmentation algorithm.}
    \label{fig:prob_state}
\end{figure}

This paper considers the active metric-semantic mapping problem, requiring a robot to explore and map an unknown environment, relying on streaming distance and object category observations, e.g., generated by semantic segmentation over RGBD images \cite{bonnet}. See Fig.~\ref{fig:prob_state} as an illustration of the sensor data available for mapping and exploration. We extend information-theoretic active mapping techniques \cite{julian, csqmi, fsmi} from binary to multi-class environment representations. Our approach introduces a Bayesian multi-class mapping procedure which maintains a probability distribution over semantic categories and updates it via a probabilistic range-category perception model. Our main \textbf{contributions} are:
\begin{enumerate}
  \item a Bayesian multi-class octree mapping approach,
  \item a closed-form efficiently-computable lower bound for the Shannon mutual information between a multi-class octree map and a set of range-category measurements,
  \item efficient C++ implementation achieving real-time high-accuracy performance onboard physical robot systems in 3-D real-world experiments\footnote{Open-source software and videos supplementing this paper are available at \url{https://arashasgharivaskasi-bc.github.io/SSMI_webpage}.}.
\end{enumerate}
Unlike a uniform-resolution grid map, our semantic OctoMap enables efficient mutual information evaluation via run-length encoding of the range-category observations. As a result, the informativeness of many potential robot trajectories may be evaluated to (re-)select one that leads to the best trade-off between map uncertainty reduction and motion cost. Unlike traditional class-agnostic exploration methods, our model and information measure capture the uncertainty of different semantic classes, leading to faster and more accurate exploration. The proposed approach relies on general range and class measurements and general pose kinematics, making it suitable for either ground or aerial robots, equipped with either camera or LiDAR sensors, exploring either indoor or outdoor environments.

%%%%
We name our method \textit{Semantic octree mapping and Shannon Mutual Information} (SSMI) computation for robot exploration. This paper is an extended version of our previous conference paper \cite{ssmi_icra} enabling its application in large-scale 3-D environments. Direct application of the regular-grid method in the conference version to 3-D environments faces several challenges, including memory inefficiency, because 3-D environments are predominantly made up of free or unknown space, and information computation inefficiency due to na{\"i}ve summation over all voxels visited by sensor rays. We introduce (a) an octree representation for Bayesian multi-class mapping, performing octree ray-tracing and leaf node merging using semantic category distributions, (b) semantic run-length encoding (SRLE) for sensor rays which enables efficient mutual information computation between a multi-class octree and range-category measurements, and (c) a comprehensive set of experiments and comparisons with state-of-the-art techniques in 2-D and 3-D environments. We demonstrate the effectiveness of SSMI in real-world mapping and autonomous exploration experiments performed in real-time onboard different mobile robot platforms operating in indoor and outdoor environments.

\section{Related Work}
\label{sec:related_work}

Frontier-based exploration \cite{frontier} is a seminal work that highlighted the utility of autonomous exploration and active mapping. It inspired methods \cite{geo_exp_1, geo_exp_2} that rely on geometric features, such as the boundaries between free and unknown space (frontiers) and the volume that would be revealed by new sensor observations. Due to their intuitive formulation and low computational requirements, geometry-based exploration methods continue to be widely employed in active perception. Recent works include semantics-assisted indoor exploration \cite{geo_exp_3}, hex-decomposition-based coverage planning \cite{geo_exp_6}, and Laplace potential fields for safe outdoor exploration \cite{geo_exp_5}. More related to our work, receding-horizon ``next-best-view'' planning \cite{tare_1} presents an active octree occupancy mapping method which executes trajectories built from a random tree whose quality is determined by the amount of unmapped space that can be explored. Similarly, the graph-based exploration methods of \cite{tare_2} and \cite{tare_3} use local random trees in free space to sample candidate viewpoints for exploration, while a global graph maintains the connections among the frontiers in the map. Cao \textit{et al.} \cite{tare, geo_exp_6} introduced hierarchical active 3D coverage and reconstruction which computes a coarse coverage path at the global scale followed by a local planner that ensures collision avoidance via a high-resolution path. As shown by Corah and Michael \cite{corah}, coverage-based exploration strategies can be formulated as mutual-information maximization policies in the absence of sensor noise. However, in many real-world circumstances sensor measurements are corrupted by non-trivial noise, reducing the effectiveness of geometric exploration methods that do not capture probabilistic uncertainty. For example, due to the domain shift between the training data and the test environment, utilizing a pre-trained semantic segmentation model in the mapping process requires accounting for measurement uncertainty in the exploration policy.

The work by Elfes \cite{info_exp_1} is among the first to propose an information-based utility function for measuring and minimizing map uncertainty. Information-based exploration strategies have been devised for uncertainty minimization in robot localization or environment mapping \cite{info_exp_2, info_exp_3, info_exp_4}. Information-theoretic objectives, however, require integration over the potential sensor measurements, limiting the use of direct numerical approximations to short planning horizons. Kollar and Roy~\cite{info_exp_5} formulated active mapping using an extended Kalman filter and proposed a local-global optimization, leading to significant gains in efficiency for uncertainty computation and long-horizon planning. Unlike geometry-based approaches, information-theoretic exploration can be directly formulated for active simultaneous localization and mapping (SLAM) \cite{active_slam_1,active_slam_2,active_slam_3,active_slam_4}, aiming to determine a sensing trajectory that balances robot state uncertainty and visitation of unexplored map regions. Stachniss \textit{et al.}~\cite{info_exp_7} approximate information gain for a Rao-blackwellized particle filter over the joint state of robot pose and map occupancy. Julian \textit{et al.}~\cite{julian} prove that, for range measurements and known robot position, the Shannon mutual information is maximized over trajectories that visit unexplored areas. However, without imposing further structure over the observation model, computing the mutual information objective requires numerical integration. The need for efficient mutual information computation becomes evident in 3-D environments. Cauchy-Schwarz quadratic mutual information (CSQMI)~\cite{csqmi} and fast Shannon mutual information (FSMI)~\cite{fsmi} offer efficiently computable closed-form objectives for active occupancy mapping with range measurements. Henderson \textit{et al.}~\cite{info_exp_11} propose an even faster computation based on a recursive expression for Shannon mutual information in continuous maps.

The recent success of machine learning methods of perception has motivated learning autonomous exploration policies. Chen \textit{et al.} \cite{rl_exp_1} attempt to bridge the sim2sim and sim2real gaps via graph neural networks and deep reinforcement learning. This enables decision-making over graphs containing relevant exploration information which is provided by human experts in order to predict a robot's optimal sensing action in belief space. Lodel \textit{et al.}~\cite{rl_exp_2} introduce a deep reinforcement learning policy which recommends next best view that maximizes information gain via defining mutual information as the training reward. Zwecher \textit{et al.}~\cite{rl_exp_3} employ deep reinforcement learning to find an exploration policy that plans collision-free coverage paths, while another neural network provides a predicted full map given the partially observed environment. Zhang \textit{et al.}~\cite{rl_exp_4} propose a multi-agent reinforcement learning exploration method where regions of interest, free space, and robots are represented as graph nodes, and hierarchical-hops graph neural networks (H2GNN) are used to identify key information in the environment. Related to multi-robot exploration, the authors in \cite{rl_exp_5} utilize an actor-critic strategy to map an unknown environment, where Voronoi partitioning divides the exploration regions among the robots. As this paper demonstrates, incorporating semantic uncertainty in addition to geometric information in the exploration process can be beneficial. Additionally, using Shannon mutual information as an objective function may help train more generalizable exploration policies because it mitigates the need for training sensor-specific models. Hence, the techniques proposed in this paper are complementary to learning approaches and can provide robustness to measurement uncertainty and domain shift caused by sensor and operational condition variations.

Active semantic mapping has recently attracted much attention due to the proliferation of fast object detection and semantic segmentation algorithms implemented on mobile robot platforms. The authors in \cite{sem_mapping_2} use a two-layer architecture, where the knowledge representation layer provides a belief over the environment state to the action layer, which subsequently chooses an action to gather information or execute a task. The work in \cite{sem_mapping_4} presents a semantic exploration policy which takes an occluded semantic point cloud of an object, finds a match in a database to estimate the full object dimensions, and then generates candidate next observation poses to reconstruct the object. The next best view is computed via a volumetric information gain metric that computes visible entropy from a candidate pose. The semantic map used in this paper is a collection of bounding boxes around objects. Active semantic mapping has also been employed to develop sample-efficient deep learning methods. Blum \textit{et al.}~\cite{sem_mapping_1} propose an active learning method for training semantic segmentation networks where the novelty (epistemic uncertainty) of the input images is estimated as the distance from the training data in the embedding space, while a path planning method maximizes novelty of future input images along the planned trajectory, assuming novel images are spatially correlated. Georgakis \textit{et al.}~\cite{sem_mapping_3} actively train a hierarchical semantic map generation model that predicts occupancy and semantics given occluded input. The authors use an ensemble of map generation models in order to predict epistemic uncertainty of the predicted map. The uncertainty is then used to choose trajectories for actively training the model with new images that differ the most with the training data of the current model. SSMI distinguishes itself from the aforementioned works by introducing a dense Bayesian multi-class mapping with a closed-form uncertainty measure, as opposed to sampling-based uncertainty estimation. Moreover, our information-theoretic objective function directly models sensor noise specifications, unlike volumetric information gain.

This paper is most related to CSQMI~\cite{csqmi} and FSMI~\cite{fsmi} in that it develops a closed-form expression for mutual information. However, instead of a binary map and range-only measurements, our formulation considers a multi-class map with Bayesian updates using range-category measurements. Since the same occupancy map can be derived from many different multi-class maps, the information associated with various object classes will fail to be captured if we solely rely on occupancy information, as the case in CSQMI and FSMI. Therefore, we expect to perform exploration more efficiently by using the multi-class perception model, and consequently, expanding the notion of uncertainty to multiple classes.

\section{Problem Statement}
\label{sec:problem_statement}

% Robot and environment definition
Consider a robot with pose $\bfX_t \in SE(3)$ at time $t$ and deterministic discrete-time kinematics:
\begin{equation}
    \bfX_t := \begin{bmatrix} \bfR_t & \bfp_t \\ \mathbf{0}^\top & 1 \end{bmatrix}, \qquad \bfX_{t+1} =  \bfX_{t}\exp{(\tau \hat{\bfu}_t)},
\label{eq:dynamic_model}
\end{equation}
where $\bfR_t \in SO(3)$ is the robot orientation, $\bfp_t \in \mathbb{R}^3$ is the robot position, $\tau$ is the time step, and $\bfu_t := [\bfv_t^\top, \bfomega_t^\top]^\top \in \calU \subset \mathbb{R}^6$ is the control input, consisting of linear velocity $\bfv_t \in \mathbb{R}^3$ and angular velocity $\bfomega_t \in \mathbb{R}^3$. The function $\hat{(\cdot)}: \bbR^6 \rightarrow \mathfrak{se}(3)$ maps vectors in $\mathbb{R}^6$ to the Lie algebra $\mathfrak{se}(3)$. See \cite[Chapter~7]{barfoot} for a definition of the Lie groups $SO(3)$ and $SE(3)$ and the corresponding Lie algebras $\mathfrak{so}(3)$ and $\mathfrak{se}(3)$. The robot is navigating in an environment consisting of a collection of disjoint sets $\calE_k \subset \mathbb{R}^3$, each associated with a semantic category $k \in \calK := \crl{0,1,\ldots,K}$. Let $\calE_0$ denote free space, while each $\calE_k$ for $k >0$ represents a different category, such as building, vegetation, terrain (see Fig.~\ref{fig:prob_state}).

We assume that the robot is equipped with a sensor that provides information about the distance to and semantic categories of surrounding objects along a set of rays $\crl{\bfeta_b}_b$, where $b$ is the ray index, $\bfeta_b \in \mathbb{R}^3$ with $\|\bfeta_b\|_2 = r_{max}$, and $r_{max} > 0$ is the maximum sensing range.

\begin{definition}
A \emph{sensor observation} at time $t$ from robot pose $\bfX_t$ is a collection $\calZ_t := \crl{\bfz_{t,b}}_b$ of range and category measurements $\bfz_{t,b}:= (r_{t,b}, y_{t,b}) \in \mathbb{R}_{\geq 0} \times \calK$, acquired along the sensor rays $\bfR_t\bfeta_b$ with $\bfeta_b \in \crl{\bfeta_b}_b$ at robot position $\bfp_t$. 
\end{definition}

Such information may be obtained by processing the observations of an RGBD camera or a Lidar with a semantic segmentation algorithm~\cite{bonnet}. Fig.~\ref{fig:prob_state} shows an example where each pixel in the RGB image corresponds to one sensor ray $\bfeta_b$, while its corresponding values in the semantic segmentation and the depth images encode category $y_{t,b}$ and range $r_{t,b}$, respectively. The goal is to construct a multi-class map $\bfm$ of the environment based on the labeled range measurements. We model $\bfm$ as a grid of cells $i \in \calI := \{1, \ldots, N\}$, each labeled with a category $m_i \in \calK$. In order to model noisy sensor observations, we consider a probability density function (PDF) $p(\calZ_t \mid \bfm, \bfX_t)$. This observation model allows integrating the measurements into a probabilistic map representation using Bayesian updates. Let $p_t(\bfm) := p(\bfm \mid \calZ_{1:t}, \bfX_{1:t})$ be the probability mass function (PMF) of the map $\bfm$ given the robot trajectory $\bfX_{1:t}$ and observations $\calZ_{1:t}$ up to time $t$. Given a new observation $\calZ_{t+1}$ obtained from robot pose $\bfX_{t+1}$, the Bayesian update to the map PMF is:
\begin{equation}
\label{eq:bayes_rule}
p_{t+1}(\bfm) \propto p(\calZ_{t+1} | \bfm, \bfX_{t+1}) p_t(\bfm).
\end{equation}
We assume that the robot pose is known and omit the dependence of the map distribution and the observation model on it for brevity. We consider the following problem.

\begin{problem}
Given a prior map PMF $p_t(\bfm)$ at time $t$ and a finite planning horizon $T$, maximize the ratio:
\begin{equation}
\label{eq:sem_exp}
\begin{aligned}
\max_{\bfu_{t:t+T-1}} \frac{I\prl{\bfm; \calZ_{t+1:t+T} \mid \calZ_{1:t}}}{J(\bfX_{t:t+T-1}, \bfu_{t:t+T-1})} 
 \;\;\text{subject to}\;\;  \eqref{eq:dynamic_model}, \eqref{eq:bayes_rule},
\end{aligned}
\end{equation}
of the mutual information $I\prl{\bfm; \calZ_{t+1:t+T} \mid \calZ_{1:t}}$ between the map $\bfm$ and future sensor observations $\calZ_{t+1:t+T}$ to the motion cost $J(\bfX_{t:t+T-1},\bfu_{t:t+T-1})$ of the control sequence $\bfu_{t:t+T-1}$.
\end{problem}

The definitions of the mutual information and motion cost terms in \eqref{eq:sem_exp} are:
\begin{align}
&I\prl{\bfm; \calZ_{t+1:t+T} | \calZ_{1:t}} := \!\sum_{\bfm \in \calK^N} \!\int \cdots \int p(\bfm, \calZ_{t+1:t+T} | \calZ_{1:t}) \notag\\
& \quad \times \log \frac{p(\bfm, \calZ_{t+1:t+T} \mid \calZ_{1:t})}{p(\bfm | \calZ_{1:t})p(\calZ_{t+1:t+T}|\calZ_{1:t})} \prod_{\tau = 1}^T \prod_b d\bfz_{t+\tau,b}, \label{eq:mutual-information}\\
&J(\bfX_{t:t+T-1}, \bfu_{t:t+T-1}) := q(\bfX_{t+T}) + \sum_{\tau = 0}^{T-1} c(\bfX_{t+\tau},\bfu_{t+\tau}), \notag
\end{align}
where the integration in \eqref{eq:mutual-information} is over all possible values of all sensor beams over all times $\bfz_{t+\tau,b}$, and the strictly positive terms $q(\bfX)$ and $c(\bfX,\bfu)$ model terminal and stage motion costs (e.g., distance traveled, elapsed time), respectively.

We develop a multi-class extension to the log-odds occupancy mapping algorithm \cite[Ch.~9]{ProbabilisticRoboticsBook} in Sec.~\ref{sec:bayes_multi_class_mapping} and derive an efficient approximation to the mutual information term in Sec.~\ref{sec:info_plan}. In Sec.~\ref{sec:info_comp_octomap}, we present a multi-class extension of the OctoMap \cite{octomap} algorithm, alongside a fast computation of mutual information over a semantic OctoMap using run-length encoding. This allows autonomous exploration of large 3-D environments by rapidly evaluating potential robot trajectories online and (re-)selecting the one that maximizes the objective in \eqref{eq:sem_exp}. Finally, in Sec.~\ref{sec:experiments}, we demonstrate the performance of our approach in simulated and real-world experiments.

\section{Bayesian Multi-class Mapping}
\label{sec:bayes_multi_class_mapping}

\begin{figure*}[t]
    \captionsetup[subfigure]{justification=centering}
    \centering
    \begin{subfigure}[t]{0.49\linewidth}
        \includegraphics[width=\linewidth]{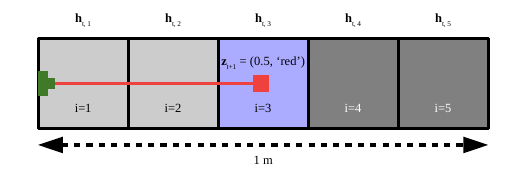}
        \caption{Map estimate at time $t$ with observation $\bfz_{t+1}$}
    \end{subfigure}%
    \hfill%
    \begin{subfigure}[t]{0.49\linewidth}
        \includegraphics[width=\linewidth]{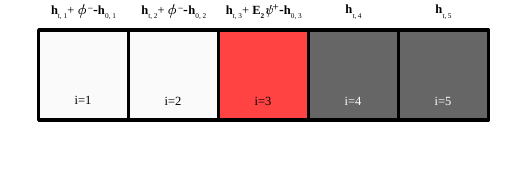}
        \caption{Posterior map estimate at time $t+1$}
    \end{subfigure}\\
    \begin{subfigure}[t]{\linewidth}
    \begin{subfigure}[t]{0.49\linewidth}
        \includegraphics[width=\linewidth]{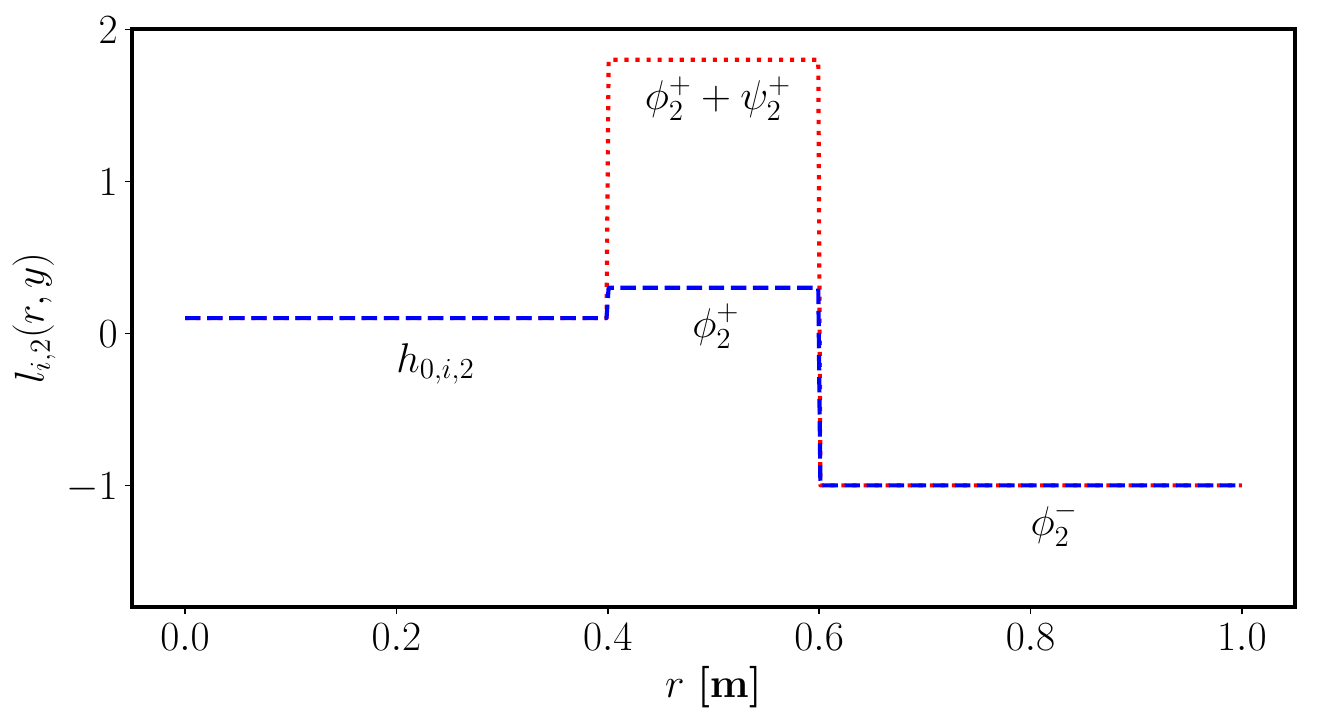}
    \end{subfigure}%
    \hfill%
    \begin{subfigure}[t]{0.49\linewidth}
        \includegraphics[width=\linewidth]{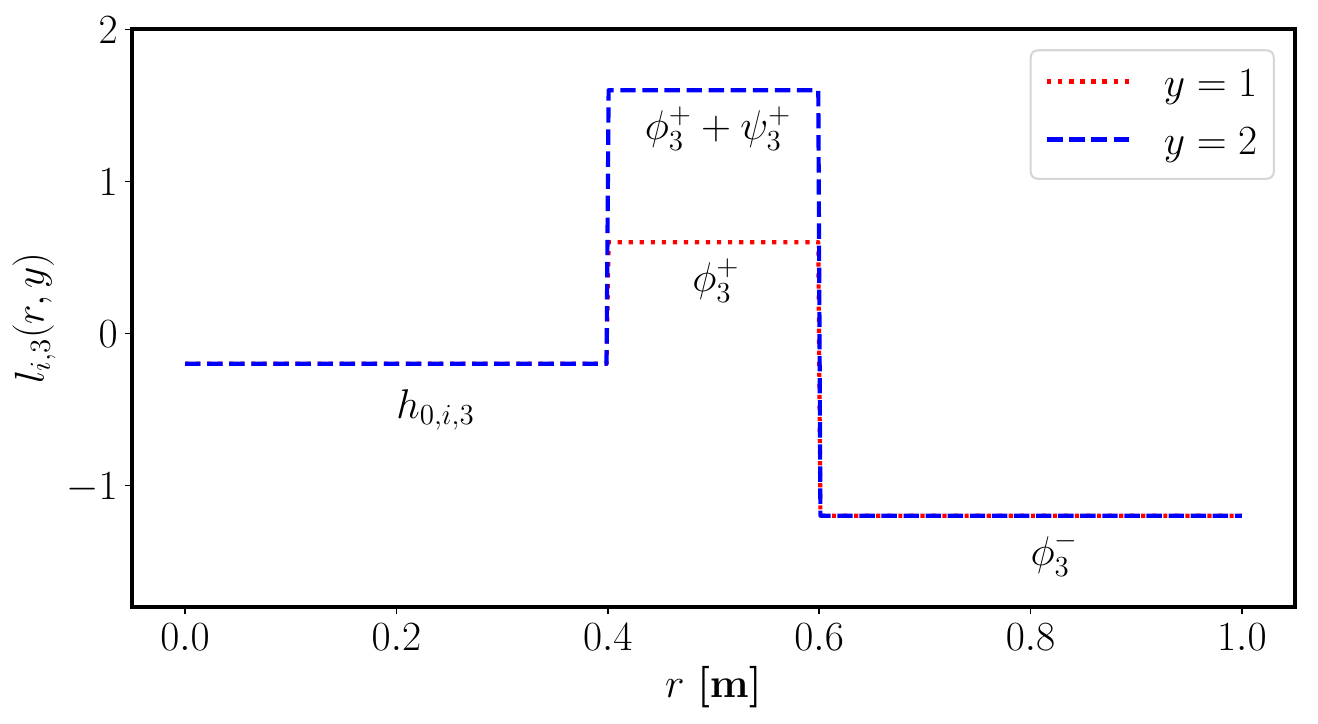}
    \end{subfigure}
    \caption{Inverse observation log-odds vector for $i = 3$}
    \end{subfigure}
    \vspace{0.01\textwidth}
    \caption{Illustration of the Bayesian multi-class mapping given a range-category observation $\bfz_{t+1} = (r_{t+1}, y_{t+1})$ for an environment with object classes of \textit{white} (free class), \textit{red}, and \textit{blue}, encoded as $y = 0$, $y = 1$, and $y = 2$, respectively: (a) Portion of the map $\bfm$ along the observation $\bfz_{t+1}$. Each cell has a multi-class log-odds vector $\bfh_{t, i} \in \bbR^3$ for $i \in \crl{1, \ldots, 5}$ at time $t$. The cell brightness encodes the occupancy probability, while the cell color represents the most likely category; (b) Map estimate at time $t + 1$ after update with $\bfz_{t+1}$. Note how each multi-class log-odds vector changes based on the inverse observation model $\bfl_i(\bfz_{t+1})$; (c) Second and third elements of the inverse observation log-odds vector $\bfl_i(\bfz_{t+1})$ for $i = 3$ as a function of range $r$ and category $y$ in observation $\bfz_{t+1}$. Note that the first element of $\bfl_i(\bfz_{t+1})$ is always zero.}
    \label{fig:perception}
\end{figure*}

This section derives the Bayesian update in \eqref{eq:bayes_rule}, using a \emph{multinomial logit model} to represent the map PMF $p_t(\bfm)$ where each cell $m_i$ of the map stores the probability of object classes in $\calK$. To ensure that the number of parameters in the model scales linearly with the map size $N$, we maintain a factorized PMF over the cells:
\begin{equation}
\label{eq:pdf_factorization}
p_t(\bfm) = \prod_{i=1}^N p_t(m_i).
\end{equation}
We represent the individual cell PMFs $p_t(m_i)$ over the semantic categories $\calK$ using a vector of log odds:
\begin{equation}
\bfh_{t,i} := \begin{bmatrix} \log \frac{p_t(m_i = 0)}{p_t(m_i = 0)} & \cdots & \log \frac{p_t(m_i = K)}{p_t(m_i = 0)} \end{bmatrix}^\top \!\!\in \mathbb{R}^{K+1},
\end{equation}
where the free-class likelihood $p_t(m_i = 0)$ is used as a pivot. Given the log-odds vector $\bfh_{t,i}$, the PMF of cell $m_i$ may be recovered using the softmax function $\sigma:\mathbb{R}^{K+1} \mapsto \mathbb{R}^{K+1}$:
\begin{equation}
p_t(m_i = k) = \sigma_{k+1}(\bfh_{t,i}) := \frac{\bfe_{k+1}^\top \exp(\bfh_{t,i})}{ \mathbf{1}^\top \exp(\bfh_{t,i})},
\end{equation}
where $\bfe_k$ is the standard basis vector with $k$-th element equal to $1$ and $0$ elsewhere, $\mathbf{1}$ is the vector with all elements equal to $1$, and $\exp(\cdot)$ is applied elementwise to the vector $\bfh_{t,i}$. To derive Bayes rule for the log-odds $\bfh_{t,i}$, we need to specify an observation model for the range and categry measurements.

\begin{definition}
The \emph{inverse observation model} of a range-category measurement $\bfz$ obtained from robot pose $\bfX$ along sensor ray $\bfeta$ with respect to map cell $m_i$ is a probability mass function $p(m_i | \bfz; \bfX, \bfeta)$.
\end{definition}

The Bayesian update in \eqref{eq:bayes_rule} for $\bfh_{t,i}$ can be obtained in terms of the range-category inverse observation model, evaluated at a new measurement set $\calZ_{t+1}$.

\begin{proposition}
\label{prop:log-odds-bayes-rule}
Let $\bfh_{t,i}$ be the log odds of cell $m_i$ at time $t$. Given sensor observation $\calZ_{t+1}$, the posterior log-odds are:
\begin{equation}
\label{eq:log-odds-bayes-rule}
\bfh_{t+1,i} = \bfh_{t,i} + \sum_{\bfz \in \calZ_{t+1}} \prl{ \bfl_i(\bfz) - \bfh_{0,i}}
\end{equation}
where $\bfl_i(\bfz)$ is the inverse observation model log odds:
\begin{equation}\label{eq:log-ods-vector}
\bfl_i(\bfz) := \begin{bmatrix} \log \frac{p(m_i = 0 | \bfz)}{p(m_i = 0 | \bfz)} & \cdots & \log \frac{p(m_i = K | \bfz)}{p(m_i = 0 | \bfz)} \end{bmatrix}^\top.
\end{equation}
\end{proposition}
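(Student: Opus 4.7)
The plan is to mirror the classical log-odds derivation for binary occupancy mapping, but carried out with the free class serving as the pivot category, so that every ratio is against $p(m_i=0\mid\cdot)$. I would start from the factorized prior in \eqref{eq:pdf_factorization} and observe that it is enough to prove the recursion cell-by-cell, since Bayes' rule in \eqref{eq:bayes_rule} combined with the product structure yields a marginal update of the form $p_{t+1}(m_i)\propto p(\calZ_{t+1}\mid m_i)\,p_t(m_i)$ once the conditional independence of the rays given the map is invoked (each ray in $\calZ_{t+1}$ contributes its own likelihood factor).

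Next I would convert the forward model $p(\bfz\mid m_i)$ into the inverse model $p(m_i\mid\bfz)$ using Bayes' rule one more time: $p(\bfz\mid m_i=k)=p(m_i=k\mid\bfz)\,p(\bfz)/p(m_i=k)$, where $p(m_i=k)$ is the prior at time $0$ and $p(\bfz)$ is a normalization constant that is independent of $k$. Substituting this into the cellwise posterior gives
\begin{equation*}
p_{t+1}(m_i=k)\;\propto\;p_t(m_i=k)\prod_{\bfz\in\calZ_{t+1}}\frac{p(m_i=k\mid\bfz)}{p(m_i=k)},
\end{equation*}
where all $k$-independent terms have been absorbed into the proportionality constant.

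The final step is to take the log-ratio against the pivot class $k=0$ on both sides. For each class $k\in\calK$, the proportionality constant cancels, and I obtain
\begin{equation*}
\log\frac{p_{t+1}(m_i=k)}{p_{t+1}(m_i=0)}=\log\frac{p_t(m_i=k)}{p_t(m_i=0)}+\sum_{\bfz\in\calZ_{t+1}}\!\!\left[\log\frac{p(m_i=k\mid\bfz)}{p(m_i=0\mid\bfz)}-\log\frac{p(m_i=k)}{p(m_i=0)}\right],
\end{equation*}
which is precisely the $(k{+}1)$-th component of \eqref{eq:log-odds-bayes-rule} once I identify the prior log-odds term $\log p(m_i=k)/p(m_i=0)$ as the $(k{+}1)$-th component of $\bfh_{0,i}$ and the inverse-model log-ratio as the corresponding component of $\bfl_i(\bfz)$. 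Stacking the identity over $k\in\calK$ (with the $k=0$ row being trivially $0=0$) yields the claimed vector recursion.

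The routine parts are the algebraic cancellations in the log-ratio; the only delicate point worth flagging is the conditional-independence assumption on the rays in $\calZ_{t+1}$, which is standard in occupancy mapping and is what makes the likelihood factor as a product over $\bfz\in\calZ_{t+1}$, producing the sum over measurements in \eqref{eq:log-odds-bayes-rule}. I would state this assumption explicitly at the start of the proof so that the step from joint to factored likelihood is transparent.
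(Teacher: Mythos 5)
Your proof is correct and follows essentially the same route as the paper's: both apply Bayes' rule to the factorized map PMF, rewrite the likelihood via the inverse observation model $p(m_i=k\mid\bfz)/p(m_i=k)$, and cancel the $k$-independent normalization by taking the odds ratio against the pivot class $k=0$ before stacking over $k\in\calK$. The only cosmetic difference is that the paper processes one measurement $\bfz\in\calZ_{t+1}$ at a time and iterates, whereas you factor the whole likelihood at once; your explicit flagging of the conditional-independence assumption on the rays is a welcome clarification that the paper leaves implicit.
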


\begin{proof}
See Appendix~\ref{app:log-odds-bayes-rule}.
\end{proof}

To complete the Bayesian multi-class mapping algorithm suggested by \eqref{eq:log-odds-bayes-rule} we need a particular inverse observation model. When a sensor measurement is generated, the sensor ray continues to travel until it hits an obstacle of category $\calK \setminus \{0\}$ or reaches the maximum sensing range $r_{max}$. The labeled range measurement $\bfz = (r,y)$ obtained from position $\bfp$ with orientation $\bfR$ indicates that map cell $m_i$ is occupied if the measurement end point $\bfp + \frac{r}{r_{max}} \bfR \bfeta$ lies in the cell. If $m_i$ lies along the sensor ray but does not contain the end point, it is observed as free. Finally, if $m_i$ is not intersected by the sensor ray, no information is provided about its occupancy. The map cells along the sensor ray can be determined by a rasterization algorithm, such as Bresenham's line algorithm \cite{bresenham}. We parameterize the inverse observation model log-odds vector in \eqref{eq:log-ods-vector} as:
\begin{gather}
\label{eq:log_inverse_observation_model}
\bfl_i((r,y)) := \begin{cases}
\bfphi^+ + \bfE_{y+1}\bfpsi^+, & \text{$r$ indicates $m_i$ is occupied},\\
\bfphi^-, & \text{$r$ indicates $m_i$ is free},\\
\bfh_{0,i}, & \text{otherwise},
\end{cases}
\raisetag{3ex}
\end{gather}
where $\bfE_k := \bfe_k \bfe_k^\top$ and $\bfpsi^+\!, \bfphi^-\!, \bfphi^+ \in \mathbb{R}^{K+1}$ are parameter vectors, whose first element is $0$ to ensure that $\bfl_i(\bfz)$ is a valid log-odds vector. This parameterization leads to an inverse observation model $p(m_i = k | \bfz) = \sigma_{k+1}(\bfl_i(\bfz))$, which is piecewise constant along the sensor ray. Fig.~\ref{fig:perception} illustrates our Bayesian multi-class mapping method.

To compute the mutual information between an observation sequence $\calZ_{t+1:t+T}$ and the map $\bfm$ in the next section, we will also need the PDF of a range-category measurement $\bfz_{\tau,b} \in \calZ_{t+1:t+T}$ conditioned on $\calZ_{1:t}$. Let $\calR_{\tau,b}(r) \subset \calI$ denote the set of map cell indices along the ray $\bfR_\tau \bfeta_b$ from robot position $\bfp_\tau$ with length $r$. Let $\gamma_{\tau,b}(i)$ denote the distance traveled by the ray $\bfR_\tau \bfeta_b$ within cell $m_i$ and $i_{\tau,b}^* \in \calR_{\tau,b}(r)$ denote the index of the cell hit by $\bfz_{\tau,b}$. We define the PDF of $\bfz_{\tau,b} = (r, y)$ conditioned on $\calZ_{1:t}$ as:
\begin{gather}
\label{eq:cond_prob_approx}
p(\bfz_{\tau,b} | \calZ_{1:t}) = \frac{p_t(m_{i_{\tau,b}^*} = y)}{\gamma_{\tau,b}(i_{\tau,b}^*)} \negquad \prod_{i \in \calR_{\tau,b}(r) \setminus \{i_{\tau,b}^*\}} \negquad p_t(m_i = 0).
\raisetag{2ex}
\end{gather}
This definition states that the likelihood of $\bfz_{\tau,b}= (r,y)$ at time $t$ depends on the likelihood that the cells $m_i$ along the ray $\bfR_{\tau}\bfeta_b$ of length $r$ are empty and the likelihood that the hit cell $m_{i_{\tau,b}^*}$ has class $y$. A similar model for binary observations has been used in \cite{julian, csqmi, fsmi}.

This section described how an observation affects the map PMF $p_t(\bfm)$. Now, we switch our focus to computing the mutual information between a sequence of observations $\calZ_{t+1:t+T}$ and the multi-class occupancy map $\bfm$.

\section{Informative Planning}
\label{sec:info_plan}

Proposition~\ref{prop:log-odds-bayes-rule} allows a multi-class formulation of occupancy grid mapping, where the uncertainty of a map cell depends on the probability of each class $p_t(m_i=k)$, instead of only the binary occupancy probability $1 - p_t(m_i = 0)$. Moreover, the inverse observation model in \eqref{eq:log_inverse_observation_model} may contain different likelihoods for the different classes which can be used to prioritize the information gathering for specific classes. Fig.~\ref{fig:mut_info_comp_a} shows an example where the estimated map of an environment with $3$ classes, $\textit{free}$, $\textit{class}_1$, $\textit{class}_2$, contains two regions with similar occupancy probability but different semantic uncertainty. In particular, the red and green walls have the same occupancy probability of $0.9$, as shown in Fig.~\ref{fig:mut_info_comp_b}, but the red region more certainly belongs to $\textit{class}_1$ and the green region has high uncertainty between the two classes. As can be seen in Fig.~\ref{fig:mut_info_comp_c}, the mutual information associated with a binary occupancy map cannot distinguish between the red and green regions since they both have the same occupancy probability. In contrast, the multi-class map takes into account the semantic uncertainty among different categories, as can be seen in Fig.~\ref{fig:mut_info_comp_d} where the uncertain green region has larger mutual information than the certain red region.

\begin{figure}[t]
    \captionsetup[subfigure]{justification=centering}
    \begin{subfigure}[t]{0.47\linewidth}
        \includegraphics[width=\linewidth]{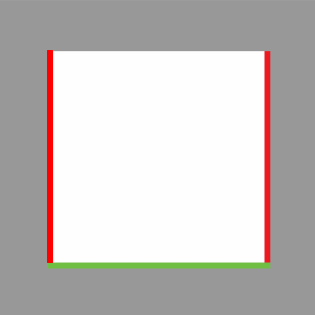}%
        \caption{}
        \label{fig:mut_info_comp_a}
    \end{subfigure}%
    \hfill%
    \begin{subfigure}[t]{0.47\linewidth}
        \centering
        \includegraphics[width=\linewidth]{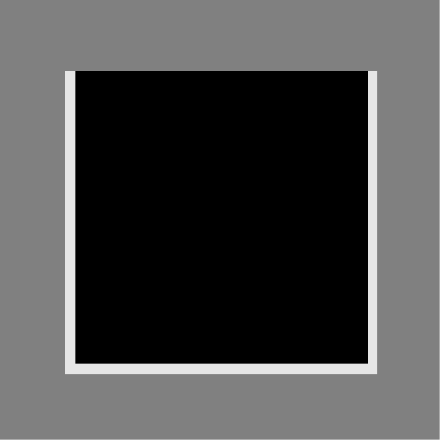}%
        \caption{}
        \label{fig:mut_info_comp_b}
    \end{subfigure}\\
    \begin{subfigure}[t]{0.47\linewidth}
        \includegraphics[width=\linewidth]{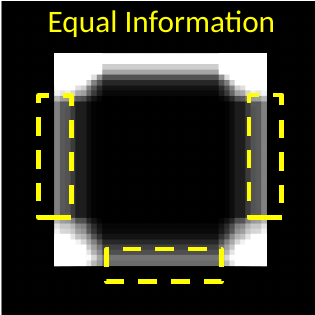}%
        \caption{}
        \label{fig:mut_info_comp_c}
    \end{subfigure}%
    \hfill%
    \begin{subfigure}[t]{0.47\linewidth}
        \includegraphics[width=\linewidth]{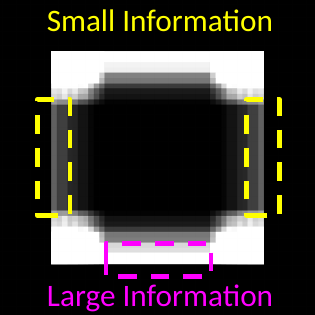}%
        \caption{}
        \label{fig:mut_info_comp_d}
    \end{subfigure}
  \caption{Comparison between the information surfaces of binary and multi-class map representations. (a) Environment with three classes $\textit{free}$, $\textit{class}_1$, and $\textit{class}_2$ where the white and gray regions represent free and unknown space, respectively, with $\bfp^{white}(m_i) = \left[1, 0, 0\right]$ and $\bfp^{gray}(m_i) = \left[0.3, 0.3, 0.3\right]$. The red and green regions have the same occupancy probability of $p(m_i = \text{occupied}) = 0.9$ but different class uncertainty, i.e., $\bfp^{red}(m_i) = \left[0.1, 0.8, 0.1\right]$ and $\bfp^{green}(m_i) = \left[0.1, 0.45, 0.45\right]$. (b) Binary occupancy map, where the intensity of each pixel is proportional to its occupancy probability, regardless of object class. (c) Occupancy mutual information surface. (d) Semantic mutual information surface. Each pixel in the information surfaces shows the value of mutual information between the map and a set of range-category observations, uniformly sampled from a $360^{\circ}$ field of view at each pixel location.}
  \label{fig:mut_info_comp}
\end{figure}

These observations suggest that more accurate uncertainty quantification may be achieved using a multi-class instead of a binary perception model, potentially enabling a more efficient exploration strategy. However, computing the mutual information term in \eqref{eq:mutual-information} is challenging because it involves integration over all possible values of the observation sequence $\calZ_{t+1:t+T}$. Our main result is an efficiently-computable lower bound on $I\prl{\bfm; \calZ_{t+1:t+T} | \calZ_{1:t}}$ for range-category observations $\calZ_{t+1:t+T}$ and a multi-class occupancy map $\bfm$. The result is obtained by selecting a subset $\underline{\calZ}_{t+1:t+T} = \crl{\bfz_{\tau,b}}_{\tau=t+1,b=1}^{t+T,B}$ of the observations $\calZ_{t+1:t+T}$ in which the sensor rays are non-overlapping. Precisely, any pair of measurements $\bfz_{\tau,b}$, $\bfz_{\tau',b'} \in \underline{\calZ}_{t+1:t+T}$ satisfies:
\begin{equation}
\label{eq:nonoverlapping}
\calR_{\tau,b}(r_{max}) \cap \calR_{\tau',b'}(r_{max}) = \emptyset.
\end{equation}
In practice, constructing $\underline{\calZ}_{t+1:t+T}$ requires removing intersecting rays from $\calZ_{t+1:t+T}$ to ensure that the remaining observations are mutually independent. The mutual information between $\bfm$ and $\underline{\calZ}_{t+1:t+T}$ can be obtained as a sum of mutual information terms between single rays $\bfz_{\tau,b} \in \underline{\calZ}_{t+1:t+T}$ and map cells $m_i$ observed by $\bfz_{\tau,b}$. This idea is inspired by CSQMI~\cite{csqmi} but we generalize it to multi-class observations and map.

\begin{proposition}
\label{prop:mut_inf_semantic}
Given a sequence of labeled range observations $\calZ_{t+1:t+T}$, let $\underline{\calZ}_{t+1:t+T} = \crl{\bfz_{\tau,b}}_{\tau=t+1,b=1}^{t+T,B}$ be a subset of non-overlapping measurements that satisfy \eqref{eq:nonoverlapping}. Then, the Shannon mutual information between $\calZ_{t+1:t+T}$ and a multi-class occupancy map $\bfm$ can be lower bounded as:
\begin{equation}
\label{eq:mut_inf_semantic}
\begin{aligned}
I(\bfm; &\calZ_{t+1:t+T} | \calZ_{1:t}) \geq I\prl{\bfm; \underline{\calZ}_{t+1:t+T} | \calZ_{1:t}}\\
&=\sum_{\tau=t+1}^{t+T} \sum_{b=1}^{B} \sum_{k=1}^K \sum_{n=1}^{N_{\tau,b}} p_{\tau,b}(n, k) C_{\tau,b}(n, k),
\end{aligned}
\end{equation}
where $N_{\tau,b} := | \calR_{\tau,b}(r_{max}) |$,
\begin{equation*}
\begin{aligned}
p_{\tau,b}(n, k) &:= p_t(m_{i_{\tau,b}^*} = k) \prod_{i \in \Tilde{\calR}_{\tau,b}(n) \setminus \{i_{\tau,b}^*\}} p_t(m_i = 0),\\
C_{\tau,b}(n, k) &:= f(\bfphi^+ + \bfE_{k+1}\bfpsi^+ - \bfh_{0,i_{\tau,b}^*}, \bfh_{t,i_{\tau,b}^*})\\
& + \sum_{i \in \Tilde{\calR}_{\tau,b}(n) \setminus \{i_{\tau,b}^*\}} f(\bfphi^--\bfh_{0,i}, \bfh_{t,i}),\\
f(\bfphi, \bfh) &:= \log\prl{ \frac{\mathbf{1}^\top \exp(\bfh)}{\mathbf{1}^\top \exp(\bfphi + \bfh)} } + \bfphi^\top \sigma(\bfphi + \bfh),
\end{aligned}
\end{equation*}
and $\Tilde{\calR}_{\tau,b}(n) \subseteq \calR_{\tau,b}(r_{max})$ is the set of the first $n$ map cell indices along the ray $\bfR_{\tau}\bfeta_b$, i.e., $\Tilde{\calR}_{\tau,b}(n) := \{i \mid i \in \calR_{\tau,b}(r), |\calR_{\tau,b}(r)| = n, r \leq r_{max}\}$.
\end{proposition}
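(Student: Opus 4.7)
The plan is to obtain the bound in three stages. First, apply monotonicity of mutual information to drop from $\calZ_{t+1:t+T}$ to the non-overlapping subset $\underline{\calZ}_{t+1:t+T}$. Second, exploit the non-overlap condition \eqref{eq:nonoverlapping} together with the cell-wise factorization \eqref{eq:pdf_factorization} to split $I(\bfm;\underline{\calZ}_{t+1:t+T}\mid\calZ_{1:t})$ into an additive sum of per-ray terms. Third, reduce each per-ray mutual information to an expected KL divergence over the discrete hit-cell index $n$ and class $k$, and evaluate that KL in closed form using the log-odds update \eqref{eq:log-odds-bayes-rule}; this is where the scalar function $f(\bfphi,\bfh)$ will appear.

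The first two stages are short. By the chain rule,
\begin{equation*}
I(\bfm;\calZ_{t+1:t+T}\mid\calZ_{1:t}) = I(\bfm;\underline{\calZ}_{t+1:t+T}\mid\calZ_{1:t}) + I(\bfm;\calZ_{t+1:t+T}\setminus\underline{\calZ}_{t+1:t+T}\mid\underline{\calZ}_{t+1:t+T},\calZ_{1:t}),
\end{equation*}
and the second summand is non-negative, giving the first inequality. For the decomposition, I observe that $\bfz_{\tau,b}$ depends on the map only through the cells in $\calR_{\tau,b}(r_{max})$ via \eqref{eq:cond_prob_approx}; by \eqref{eq:nonoverlapping} these cell sets are pairwise disjoint across the selected rays, and by \eqref{eq:pdf_factorization} the cells are mutually independent under $p_t$. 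Hence the selected observations are mutually independent both unconditionally and conditionally on $\bfm$, so joint entropies split and $I(\bfm;\underline{\calZ}_{t+1:t+T}\mid\calZ_{1:t})=\sum_{\tau,b} I(\bfm_{\calR_{\tau,b}};\bfz_{\tau,b}\mid\calZ_{1:t})$, with $\bfm_{\calR_{\tau,b}}$ the restriction of $\bfm$ to the cells touched by the ray.

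For the third stage, fix a ray and write $\bfz_{\tau,b}=(r,y)$. The inverse observation model \eqref{eq:log_inverse_observation_model} depends on $r$ only through the index $n$ of the cell that $r$ falls in, and \eqref{eq:cond_prob_approx} shows that the conditional density of $r$ given $(n,y)$ is uniform on the intra-cell segment, independently of $\bfm$. Thus $r$ is conditionally independent of $\bfm_{\calR_{\tau,b}}$ given $(n,y)$, and $I(\bfm_{\calR_{\tau,b}};(r,y)\mid\calZ_{1:t}) = I(\bfm_{\calR_{\tau,b}};(n,y)\mid\calZ_{1:t})$. Integrating \eqref{eq:cond_prob_approx} over $r$ inside cell $n$ cancels the $1/\gamma_{\tau,b}(i_{\tau,b}^*)$ factor and recovers the marginal $p_{\tau,b}(n,k)$ in the statement. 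Writing the remaining MI as $\mathbb{E}_{(n,k)}\!\brl{\mathrm{KL}(p(\bfm_{\calR_{\tau,b}}\mid n,k)\,\|\,p_t(\bfm_{\calR_{\tau,b}}))}$ and using that both prior and posterior factor across cells (the latter since the Bayes update \eqref{eq:log-odds-bayes-rule} acts cell by cell), only cells in $\Tilde{\calR}_{\tau,b}(n)$ contribute, with the shift $\bfphi^++\bfE_{k+1}\bfpsi^+-\bfh_{0,i_{\tau,b}^*}$ at the hit cell and $\bfphi^--\bfh_{0,i}$ at each free cell. A direct softmax-level computation yields
\begin{equation*}
\mathrm{KL}\prl{\sigma(\bfh+\bfphi)\,\|\,\sigma(\bfh)} = \bfphi^\top\sigma(\bfh+\bfphi) + \log\frac{\mathbf{1}^\top\exp(\bfh)}{\mathbf{1}^\top\exp(\bfh+\bfphi)} = f(\bfphi,\bfh),
\end{equation*}
and summing these per-cell contributions reproduces $C_{\tau,b}(n,k)$.

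The main obstacle will be the reduction from the continuous range $r$ to the discrete hit-cell index in stage three: one must verify carefully from \eqref{eq:cond_prob_approx} that the conditional law of $r$ given $(n,y)$ carries no further information about $\bfm_{\calR_{\tau,b}}$, so that the residual mutual information contributed by the within-cell value of $r$ vanishes. Once this sufficiency argument is in place, the remaining work is either bookkeeping (stages one and two) or a mechanical softmax calculation identifying $f$ with the per-cell KL.
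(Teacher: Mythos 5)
Your proposal is correct and follows essentially the same route as the paper's proof: the same per-ray/per-cell additive decomposition from non-overlap and the factorized prior, the same identification of $f(\bfphi,\bfh)$ as the per-cell KL divergence between the softmax posterior $\sigma(\bfphi+\bfh)$ and prior $\sigma(\bfh)$, and the same cancellation of the $1/\gamma$ factor (the paper integrates the piecewise-constant integrand over each intra-cell interval, which is equivalent to your sufficiency reduction from $r$ to the hit-cell index $n$). Your explicit chain-rule justification of the inequality $I(\bfm;\calZ_{t+1:t+T}\mid\calZ_{1:t})\geq I(\bfm;\underline{\calZ}_{t+1:t+T}\mid\calZ_{1:t})$ is a welcome addition that the paper leaves implicit.
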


\begin{proof}
See Appendix~\ref{app:mut-inf-semantic}.
\end{proof}

In \eqref{eq:mut_inf_semantic}, $p_{\tau, b}(n, k)$ represents the probability that the $n$-th map cell along the ray $\bfR_\tau \bfeta_b$ belongs to object category $k$ while all of the previous cells are free. The function $f(\bfphi, \bfh)$ denotes the log-ratio of the map PMF $\sigma(\bfh)$ and its posterior $\sigma(\bfphi + \bfh)$, averaged over object categories in $\calK$ (see \eqref{eq:f_func} in Appendix~\ref{app:mut-inf-semantic} for more details). As a result, $C_{\tau, b}(n, k)$ is the sum of log-ratios for the first $n$ cells along the ray $\bfR_\tau \bfeta_b$ under the same event as the one $p_{\tau, b}(n, k)$ is associated with. Therefore, the lower bound $I\prl{\bfm; \underline{\calZ}_{t+1:t+T} | \calZ_{1:t}}$ is equivalent to the expectation of summed log-ratios $C_{\tau, b}(n, k)$ over all possible instantiations of the observations in $\underline{\calZ}_{t+1:t+T}$.

Proposition~\ref{prop:mut_inf_semantic} allows evaluating the informativeness according to \eqref{eq:sem_exp} of any potential robot trajectory $\bfX_{t:t+T}$, $\bfu_{t:t+T-1}$. In order to perform informative planning, first, we identify the boundary between the explored and unexplored regions of the map, similar to \cite{frontier}. This can be done efficiently using edge detection, for example. Then, we cluster the corresponding map cells by detecting the connected components of the boundary. Each cluster is called a \textit{frontier}. A motion planning algorithm is used to obtain a set of pose trajectories to the map frontiers, determined from the current map PMF $p_t(\bfm)$. Alg.~\ref{alg:sem_exp} summarizes the procedure for determining a trajectory $\bfX^*_{t:t+T}$, $\bfu^*_{t:t+T-1}$ that maximizes the objective in \eqref{eq:sem_exp}, where $J(\bfX_{t:t+T-1}, \bfu_{t:t+T-1})$ is the length of the corresponding path. This kinematically feasible trajectory can be tracked by a low-level controller that takes the robot dynamics into account.

Evaluation of the mutual information lower bound in Proposition~\ref{prop:mut_inf_semantic} can be accelerated without loss in accuracy for map cells along the observation rays that contain equal PMFs. In the next section, we investigate this property of the proposed lower bound within the context of OcTree-based representations. We begin with proposing a multi-class version of the OctoMap technique, where map cells with equal multi-class probabilities can be compressed into a larger voxel. Next, a fast semantic mutual information formula is presented based on compression of range-category ray-casts over OcTree representations.

\begin{algorithm}[t]
\caption{Information-theoretic Path Planning}\label{alg:sem_exp}
\begin{algorithmic}[1]
  \renewcommand{\algorithmicrequire}{\textbf{Input:}}
  \renewcommand{\algorithmicensure}{\textbf{Output:}}
  \Require robot pose $\bfX_t$, map estimate $p_t(\bfm)$
  \State $\calF = \Call{findFrontiers}{p_t(\bfm)}$
  \For{$f \in \calF$}
    \State $\bfX_{t+1:t+T}, \bfu_{t:t+T-1} = \Call{planPath}{\bfX_t,p_t(\bfm),f}$
    \State Compute \eqref{eq:sem_exp} over $\bfX_{t:t+T}, \bfu_{t:t+T-1}$ via \eqref{eq:mut_inf_semantic}% Proposition~\eqref{prop:mut_inf_semantic}
  \EndFor
  \State \Return $\bfX^*_{t:t+T}$, $\bfu^*_{t:t+T-1}$ with the highest value
\end{algorithmic}
\end{algorithm}

\section{Information Computation for Semantic OctoMap Representations}
\label{sec:info_comp_octomap}

\begin{figure*}[t]
    \captionsetup[subfigure]{justification=centering}
    \centering
    \begin{subfigure}[t]{0.36\linewidth}
    \centering
    \includegraphics[width=\linewidth]{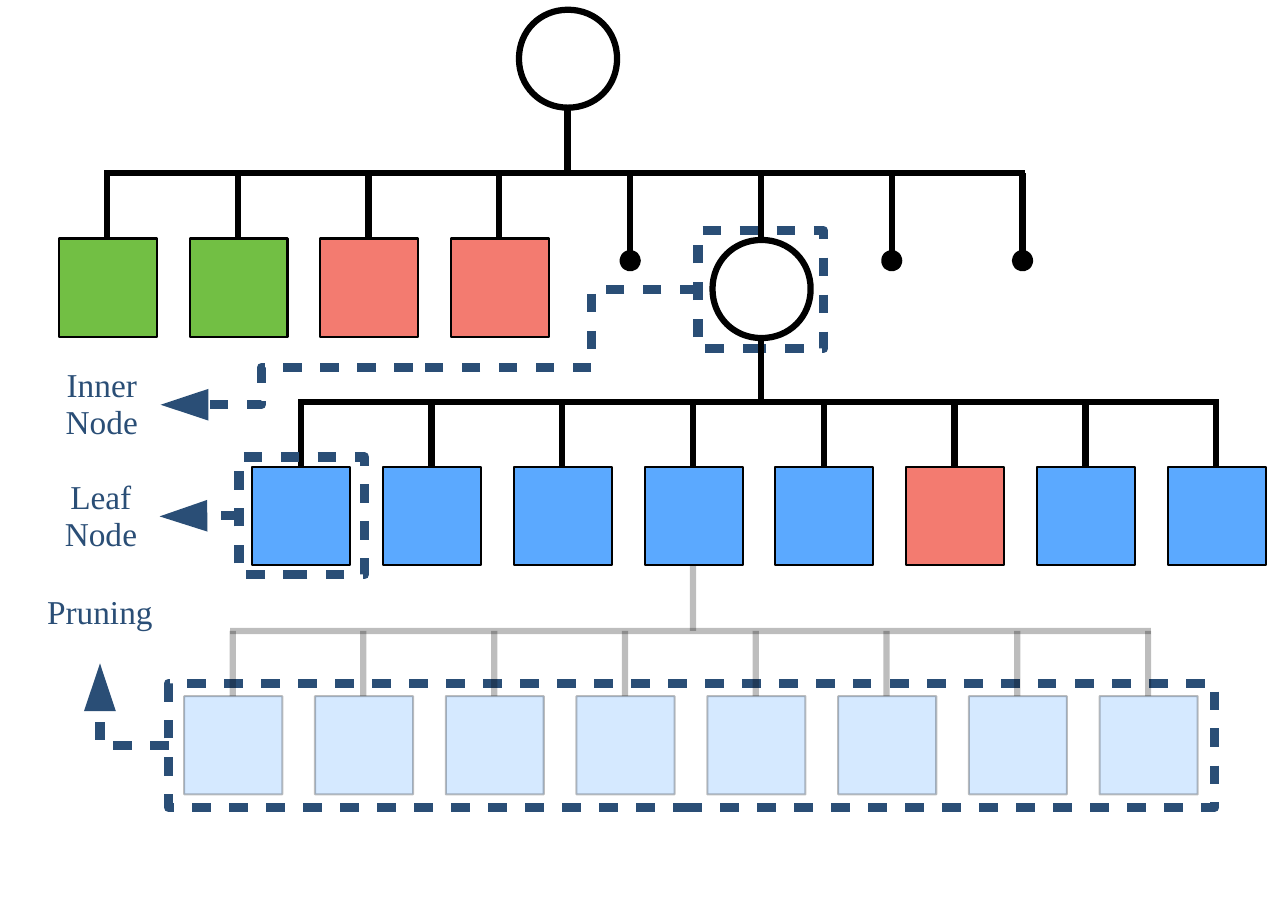}
    \caption{}
    \label{fig:OcTree_a}
    \end{subfigure}%
    \hfill%
    \begin{subfigure}[t]{0.60\linewidth}
    \centering
    \includegraphics[width=\linewidth]{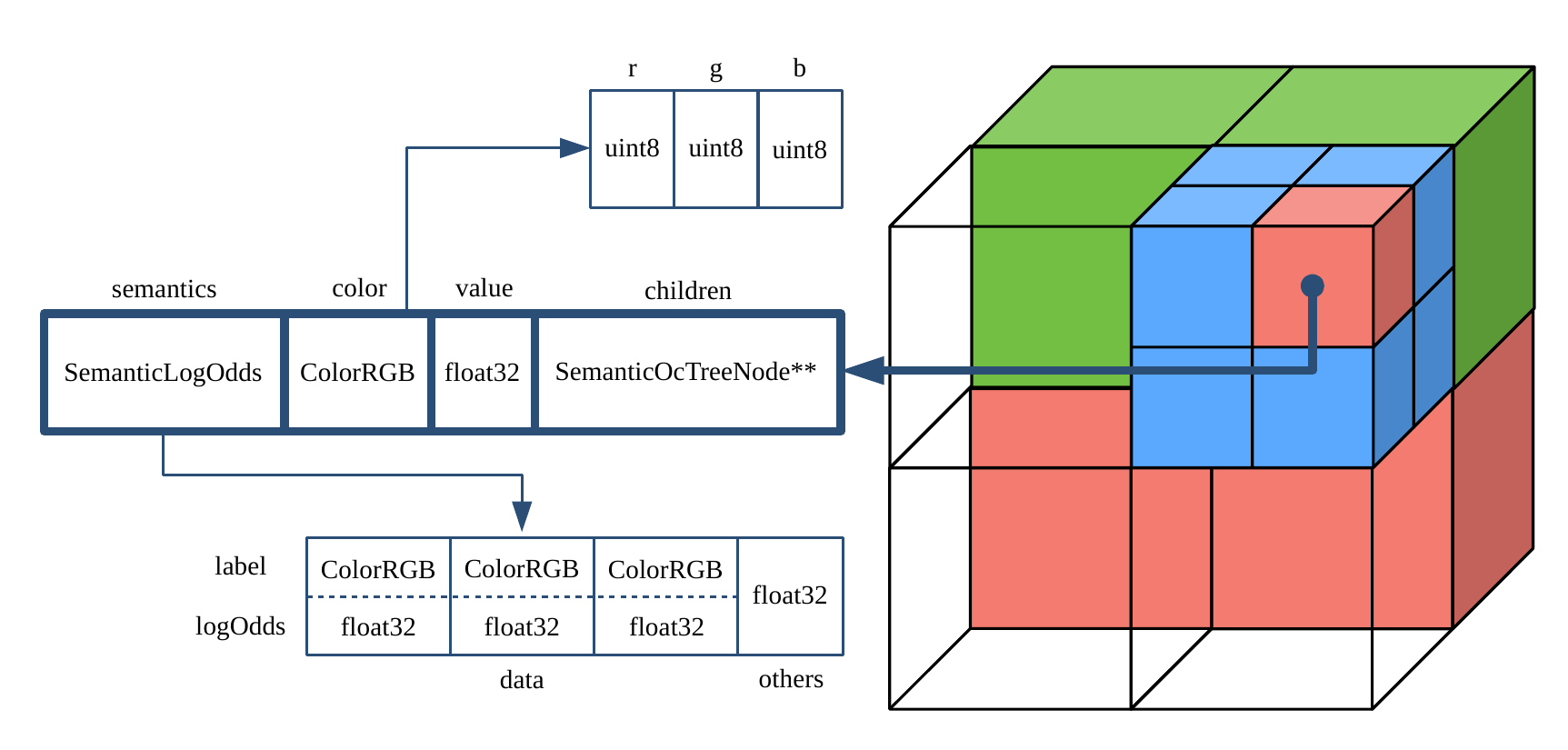}
    \caption{}
    \label{fig:OcTree_b}
    \end{subfigure}
    \caption{Example of a semantic OctoMap: (a) A white circle represents an \textit{inner} node such that its children collectively cover the same physical space as the inner node itself. A colored square represents a partition of the 3-D space where all downstream nodes contain identical semantic and occupancy values; therefore, they can be \textit{pruned} into a \textit{leaf} node. Lastly, black dots represent \textit{unexplored} spaces of the environment. (b) geometric representation of the same OcTree with an overview of the \textit{SemanticOcTreeNode} class.}
    \label{fig:OcTree}
\end{figure*}

Utilizing a regular-grid discretization to represent a 3-D environment has prohibitive storage and computation requirements. Large continuous portions of many real environments are unoccupied, suggesting that adaptive discretization is significantly more efficient. OctoMap \cite{octomap} is a probabilistic 3-D mapping technique that utilizes an OcTree data structure to obtain adaptive resolution, e.g., combining many small cells associated with free space into few large cells. In this section, we develop a multi-class version of OctoMap and propose an efficient multi-class mutual information computation which benefits from the OcTree structure.

\subsection{Semantic OctoMap}

An OcTree is a hierarchical data structure containing nodes that represent a section of the physical environment. Each node has either 0 or 8 children, where the latter corresponds to the 8 octants of the Euclidean 3-D coordinate system. Thus, the children of a parent node form an eight-way octant partition of the space associated with the parent node. Fig.~\ref{fig:OcTree} shows an example of a multi-class OcTree data structure.

We implement a \textit{SemanticOcTreeNode} class as a building block of the multi-class OcTree structure. A \textit{SemanticOcTreeNode} instance stores occupancy, color, and semantic information of its corresponding physical space, as shown in Fig.~\ref{fig:OcTree_b}. The most important data members of the \textit{SemanticOcTreeNode} class are:
\begin{itemize}
  \setlength\itemsep{0.5ex}
  \item \textit{children}: an array of pointers to \textit{SemanticOcTreeNode} storing the memory addresses of the $8$ child nodes,
  \item \textit{value}: a \textit{float} variable storing the log-odds occupancy probability of the node,
  \item \textit{color}: a \textit{ColorRGB} object storing the RGB color of the node,
  \item \textit{semantics}: a \textit{SemanticLogOdds} object maintaining a categorical probability distribution over the semantic labels in the form of a log-odds ratio.
\end{itemize}
For performance reasons, the \textit{SemanticLogOdds} class only stores the multi-class log-odds for the $3$ most likely class labels, with each label represented by a unique RGB color. In this case, the log-odds associated with the rest of the labels lump into a single \textit{others} variable. This relives the multi-class OcTree implementation from dependence on the number of labels that the object classifier can detect. Moreover, it significantly improves the speed of the mapping algorithm in cases with many semantic categories. See Sec.~\ref{subsec:octomap_class_comp} for an analysis of mapping time versus the number of stored classes.

The implementation of the multi-class OcTree is completed by defining a \textit{SemanticOcTree} class, which is derived from the \textit{OccupancyOcTreeBase} class of the OctoMap library \cite{octomap} and uses a \textit{SemanticOcTreeNode} as its node type. Fig.~\ref{fig:UML} illustrates the derivation of the \textit{SemanticOcTree} and \textit{SemanticOcTreeNode} classes as a UML diagram.

\begin{figure}[t]
  \centering
  \includegraphics[width=\linewidth]{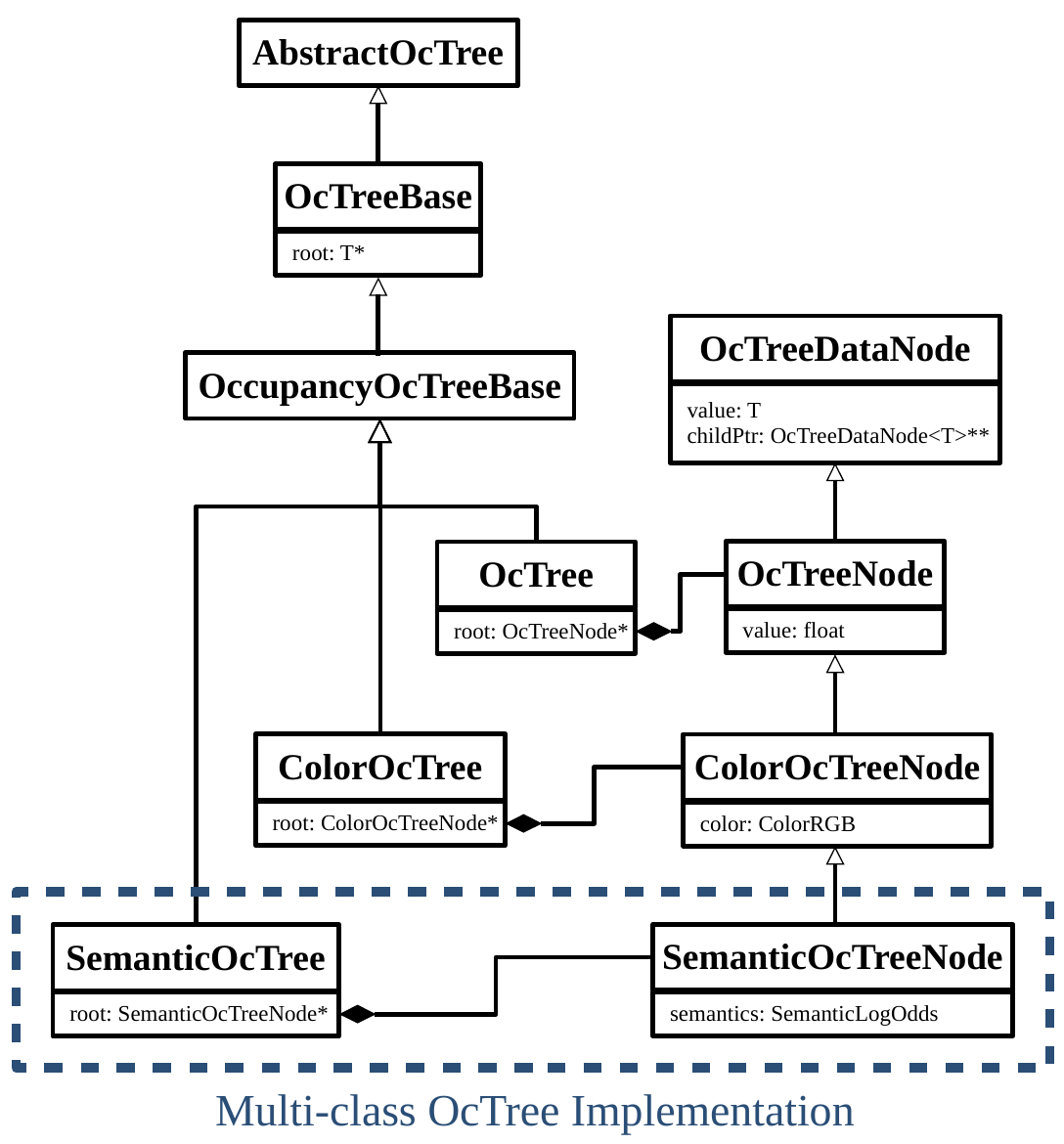}
  \caption{UML diagram showing the class inheritance used for the implementation of a multi-class OcTree.}
\label{fig:UML}
\end{figure}

In order to register a new observation to a multi-class OcTree, we follow the standard ray-casting procedure over an OcTree, as in \cite{octomap}, to find the observed leaf nodes. Then, for each observed leaf node, if the observation demands an update, the leaf node is recursively expanded to the smallest resolution and the multi-class log-odds of the downstream nodes are updated using \eqref{eq:log-odds-bayes-rule}. At the ray's end point, which indicates an occupied cell, we also update the \textit{color} variable by averaging the observed color with the current stored color of the corresponding node. Alg.~\ref{alg:fuse_obs} details the Bayesian update procedure for the multi-class OcTree.

\begin{algorithm}[t]
\caption{Multi-class OcTree Update of Node $\bfn_i$}\label{alg:fuse_obs}
\begin{algorithmic}[1]
 \renewcommand{\algorithmicrequire}{\textbf{Input:}}
 \renewcommand{\algorithmicensure}{\textbf{Output:}}
 \Require OcTree node $\bfn_i$, observation $\bfz = (r,y)$, mixing coefficient $\alpha$
 \State $s = \bfn_i.semantics$
 \State $s.d = \bfn_i.semantics.data$
 \State $s.o = \bfn_i.semantics.others$
 \If{$\bfz$ indicates free}
    \State Update $s$ with $\phi^-$
 \ElsIf{$\bfz$ indicates class $y$}
    \If{class $y$ is among the 3 most likely classes in $s$}
        \State Update $s$ with $\bfphi^+ + \bfE_{y+1}\bfpsi^+$
    \Else
        \LineComment{Derive $h_{aux}$ as a portion $\alpha$ of \textit{others} class}
        \State $h_{aux} = s.o + \log{\alpha}$ 
        \State $s.o \mathrel{+}= \phi^+_{others} + \log{(1-\alpha)}$
        \State $s_c = \Call{concat}{s.d, (y, h_{aux})}$
        \State Update $s_c$ with $\bfphi^+ + \bfE_{y+1}\bfpsi^+$
        \State Perform descending sort on $s_c$ with respect to log-odds values\label{alg:sorting}
        \LineComment{Pick 3 most likely classes}
        \State $s.d = s_c[0:2]$
        \LineComment{Combine the least likely class with \textit{others} class}
        \State $s.o = \log\Big(\exp{(s_c[3])} + \exp{(s.o)}\Big)$ \label{alg:others}
    \EndIf
 \EndIf
 \LineComment{Apply thresholds $\underline{s}$ and $\overline{s}$ for log-odds values}
 \State $s_f \leftarrow \min\crl{\max\crl{s_f, \underline{s}},\overline{s}}$
 \State $\bfn_i.semantics = s$
 \State \Return $\bfn_i$
\end{algorithmic}
\end{algorithm}

\begin{algorithm}[t]
\caption{Semantic Fusion of Two Child Nodes}\label{alg:semantic_fusion}
\begin{algorithmic}[1]
 \renewcommand{\algorithmicrequire}{\textbf{Input:}}
 \renewcommand{\algorithmicensure}{\textbf{Output:}}
 \Require OcTree nodes $\bfn_i$ and $\bfn_j$
 \State $s_i = \bfn_i.semantics$
 \State $s_j = \bfn_j.semantics$
 \LineComment{Non-repeating list of classes in $s_i$ and $s_j$}
 \State $\mathcal{K}_f = \Call{uniqueClass}{s_i, s_j}$
 \LineComment{Object instantiation for the fused semantics}
 \State $s_{f} = SemanticLogOdds()$
 \LineComment{Slice $s_i.o$ into smaller probabilities}
 \State $o_i = s_i.o - \log(1 + \mathcal{K}_f.size - s_i.d.size)$
 \State $o_j = s_j.o - \log(1 + \mathcal{K}_f.size - s_j.d.size)$
 \For{$y \in \mathcal{K}_f$}
    \If{$y \notin s_i.d.label \wedge y \in s_j.d.label$}
        \State $s_f.d.\Call{append}{y, \frac{o_i + s_j.d[y].logOdds}{2}}$
    \ElsIf{$y \in s_i.d.label \wedge y \notin s_j.d.label$}
        \State $s_f.d.\Call{append}{y, \frac{s_i.d[y].logOdds + o_j}{2}}$
    \Else
        \State $s_f.d.\Call{append}{y, \frac{s_i.d[y].logOdds + s_j.d[y].logOdds}{2}}$
    \EndIf
 \EndFor
 \State Perform descending sort on $s_f.d$ with respect to log-odds values
 \State $expOthers = \exp(\frac{o_i + o_j}{2})$
 \For{$i > 3$}
    \State $expOthers \mathrel{+}= \exp(s_f.d[i].logOdds)$
 \EndFor
 \State $s_f.d[3:end].\Call{remove}{ }$
 \State $s_f.o = \log(expOthers)$
 \State $s_f \leftarrow \min\crl{\max\crl{s_f, \underline{s}},\overline{s}}$
 \State \Return $s_f$
\end{algorithmic}
\end{algorithm}

To obtain a compressed OctoMap, it is necessary to define a rule for information fusion from child nodes towards parent nodes.
%, so that a coarser representation of the OctoMap can be computed. This is useful for navigation where efficient collision checking is prioritized over detailed environment representation. 
Depending on the application, different information fusion strategies may be implemented. For example, a conservative strategy would assign the multi-class log-odds of the child node with the highest occupancy probability to the parent node. In this work, we simply assign the average log-odds vector of the child nodes to their parent node as shown in Alg.~\ref{alg:semantic_fusion}. The benefit of an OctoMap representation is the ability to combine similar cells (leaf nodes) into a large cell (inner node).
%As mentioned earlier, the main benefit of the OctoMap representation is the possibility of combining similar cells (leaf nodes) into a large cell (inner node). 
This is called \textit{pruning} the OcTree. Every time after an observation is integrated to the map, starting from the deepest inner node, we check for each inner node if \begin{inparaenum}[1)] \item the node has 8 children, \item its children do not have any children of their own, and \item its children all have equal multi-class log-odds\end{inparaenum}. If an inner node satisfies all of these three criteria, its children are pruned and the inner node is converted into a leaf node with the same multi-class log-odds as its children. This helps to compress the majority of the free cells into a few large cells, while the occupied cells usually do not undergo pruning since only their surfaces are observed by the sensor and their inside remains an unexplored region. Due to sensor noise, it is unlikely that cells belonging to the same class (e.g., free or occupied by the same obstacle) attain identical multi-class log-odds. Maximum and minimum limits for the elements of the multi-class log-odds are used so that each cell arrives at a stable state as its multi-class log-odds entries reach the limits. Stable cells are more likely to share the same multi-class probability distribution, consequently increasing the chance of OcTree pruning. However, thresholding causes loss of information near $p_t(m_i=k) = 1$, $k \in \calK$ which can be controlled by the maximum and minimum limits.

\subsection{Information Computation}

A ray cast through an OcTree representation may visit several large cells within which the class probabilities are homogeneous. We exploit this property to obtain the mutual information between a multi-class OctoMap and a single ray as a summation over a subset of OcTree leaf nodes instead of individual map cells. This simplification provides a significant performance gain with no loss of accuracy. The following formulation can be considered a multi-class generalization of the run-length encoding technique introduced by \cite{fsmi}, using the mutual information lower bound in \eqref{eq:mut_inf_semantic} and the multi-class OcTree defined earlier in this section.

Suppose that the map cells along a single beam $\bfR_{\tau}\bfeta_b$ have piecewise-constant multi-class probabilities such that the set $\{m_i \mid i \in \calR_{\tau, b}(r_{max})\}$ can be partitioned into $Q_{\tau, b}$ groups of consecutive cells indexed by $\calR_{\tau, b}^q(r_{max})$, $q = 1, \ldots, Q_{\tau, b}$, where:
\begin{equation}
\begin{split}
    p_t(m_i = k) = p_t(m_j = k), \\ \forall i, j \in \calR_{\tau, b}^q(r_{max}), \;\;\; \forall k \in \calK.
\end{split}
\end{equation}
In this case, the log-odds probabilities encountered by a ray cast can be compressed using semantic run-length encoding, defined as below.

\begin{definition}
A \textit{semantic run-length encoding} (SRLE) of a ray $\bfR_\tau \bfeta_b$ cast through a multi-class OcTree is an ordered list of tuples of the form $[(\omega_{\tau,b,q}, \bfchi_{t,q})]^{Q_{\tau, b}}_{q=1}$, where $\omega_{\tau,b,q}$ and $\bfchi_{t,q}$ respectively represent the width and the log-odds vector of the intersection between the ray and the cells in $\calR_{\tau, b}^q(r_{max})$. The width $\omega_{\tau,b,q}$ is the number of OcTree \textit{elements} along the ray intersection, where an OcTree element is a cell with the smallest physical dimensions.
\end{definition}

\begin{figure}[t]
  \centering
  \includegraphics[width=\linewidth]{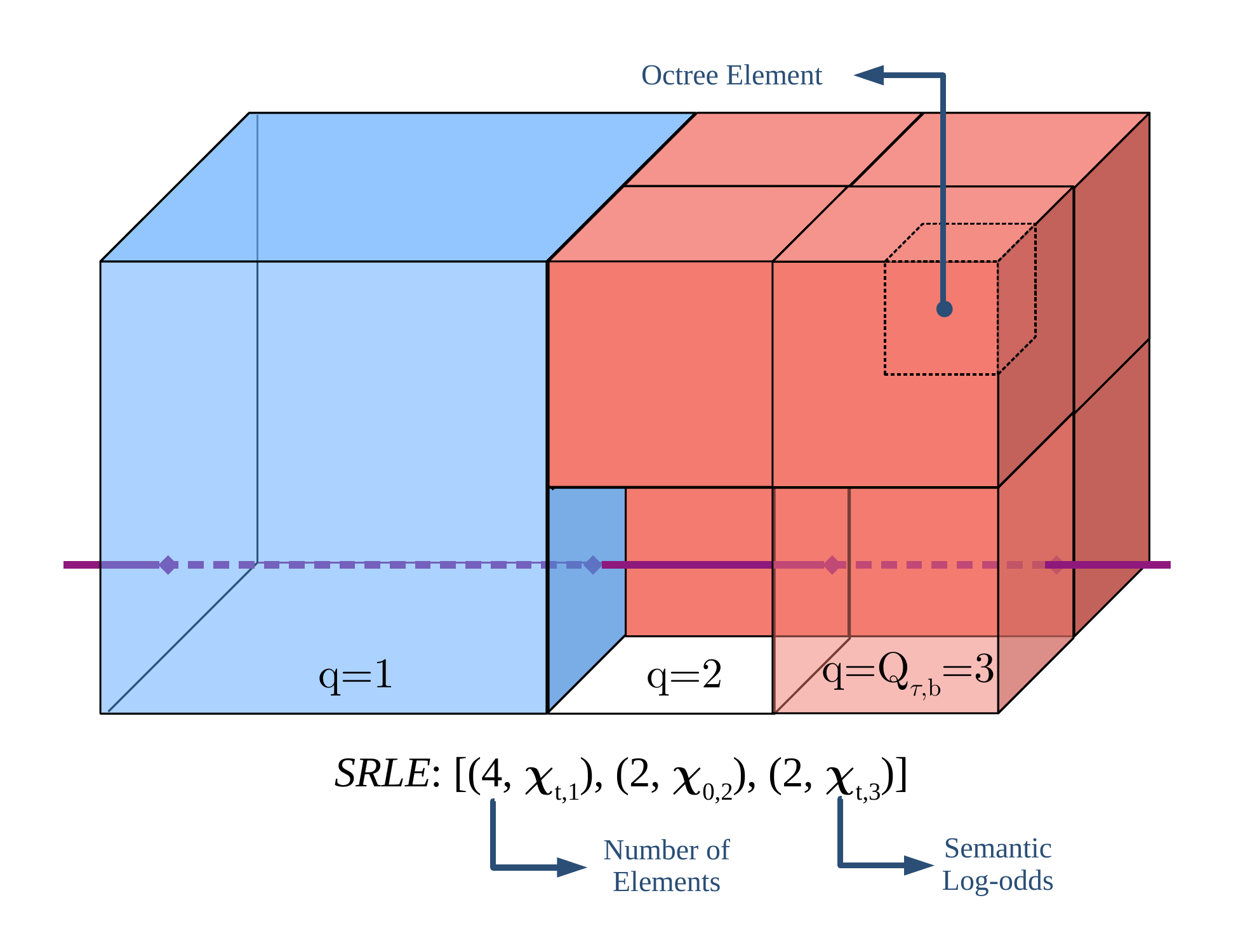}
  \caption{Ray cast representation as semantic run-length encoding (SRLE). The multi-class log-odds $\bfchi_{t,q}$ are uniform within each cube. The voxel corresponding to $q=2$ is unexplored, hence its multi-class log-odds are denoted as $\bfchi_{0,2}$.}
\label{fig:SRLE}
\end{figure}

Fig.~\ref{fig:SRLE} shows an example of SRLE over a semantic OctoMap. While SRLE can be used in a uniform-resolution grid map, it is particularly effective of a multi-class OcTree, which inherently contains large regions with homogeneous multi-class log-odds. Additionally, the OcTree data structure allows faster ray casting since it can be done over OcTree leaf nodes \cite{ray_trace_octomap_1, ray_trace_octomap_2}, instead of a uniform-resolution grid as in \cite{bresenham}. 

SRLE ray casting delivers substantial gains in efficiency for mutual information computation since the contribution of each group $\{m_i \mid i \in \calR_{\tau, b}^q(r_{max})\}$ in the innermost summation of \eqref{eq:mut_inf_semantic} can be obtained in closed form. 

\begin{proposition}
\label{prop:mut_inf_semantic_octomap}
The Shannon mutual information between a single range-category measurement $\bfz_{\tau, b}$ and a semantic OctoMap $\bfm$ can be computed as:
\begin{equation}
\label{eq:mut_inf_semantic_octomap_single}
    I(\bfm; \bfz_{\tau, b} | \calZ_{1:t}) = \sum_{k=1}^{K} \sum_{q=1}^{Q_{\tau, b}} \rho_{\tau, b}(q,k) \Theta_{\tau, b}(q,k)
\end{equation}
where $Q_{\tau, b}$ is the number of partitions along the ray $\bfR_\tau \bfeta_b$ that have identical multi-class log-odds and the multi-class probabilities for each partition are denoted as:
\begin{equation*}
\begin{aligned}
    \begin{cases} \pi_t(q,k) = p_t(m_i = k) \\ \bfchi_{t,q} = \bfh_{t,i} \end{cases} \;\; \forall i \in \calR_{\tau,b}^q(r_{max}).
\end{aligned}
\end{equation*}
Furthermore, defining $\omega_{\tau,b,q} = | \calR_{\tau,b}^q(r_{max}) |$ as the width of $q$-th partition, we have:
\begin{equation*}
\begin{aligned}
    \rho_{\tau, b}(q,k) &:= \pi_t(q,k) \prod_{j=1}^{q-1} \pi_t^{\omega_{\tau,b,j}}(j,0),\\
    \Theta_{\tau, b}(q,k) &:= \beta_{\tau, b}(q,k) \frac{1 - \pi_t^{\omega_{\tau,b,q}}(q,0)}{1 - \pi_t(q,0)} + \\ &\frac{f(\bfphi^--\bfchi_{0,q}, \bfchi_{t,q})}{(1 - \pi_t(q,0))^2} \Big[(\omega_{\tau,b,q} - 1) \pi_t^{\omega_{\tau,b,q} + 1}(q,0) \\ &\qquad\qquad\qquad - \omega_{\tau,b,q} \pi_t^{\omega_{\tau,b,q}}(q,0) + \pi_t(q,0)\Big],\\
    \beta_{\tau, b}(q,k) &:= f(\bfphi^+ + \bfE_{k+1}\bfpsi^+ - \bfchi_{0,q}, \bfchi_{t,q}) \\ &+ \sum_{j=1}^{q-1} \omega_{\tau,b,j} f(\bfphi^--\bfchi_{0,j}, \bfchi_{t,j}).
\end{aligned}
\end{equation*}
\end{proposition}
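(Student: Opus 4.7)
The plan is to specialize Proposition~\ref{prop:mut_inf_semantic} to the single-ray case (for which the inequality becomes an equality, as there is no overlap to discard) and then exploit the piecewise-constant structure of the log-odds along the ray to collapse the inner summation into closed form, in the spirit of the original RLE derivation of FSMI but generalized to the multi-class log-odds vectors.

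First I would apply Proposition~\ref{prop:mut_inf_semantic} with $T=1$ and a single beam, yielding
\begin{equation*}
I(\bfm;\bfz_{\tau,b}\mid\calZ_{1:t}) = \sum_{k=1}^{K}\sum_{n=1}^{N_{\tau,b}} p_{\tau,b}(n,k)\,C_{\tau,b}(n,k).
\end{equation*}
Then I would re-index the outer sum over cell positions $n$ by the partition $q$ containing the hit cell $i^{*}_{\tau,b}$: writing $n = \bigl(\sum_{j<q}\omega_{\tau,b,j}\bigr) + n'$ with $n'\in\{1,\dots,\omega_{\tau,b,q}\}$, the shared log-odds inside each partition turn the product in $p_{\tau,b}(n,k)$ into
\begin{equation*}
p_{\tau,b}(n,k) \;=\; \pi_t(q,k)\,\pi_t^{n'-1}(q,0)\!\prod_{j=1}^{q-1}\pi_t^{\omega_{\tau,b,j}}(j,0) \;=\; \rho_{\tau,b}(q,k)\,\pi_t^{n'-1}(q,0),
\end{equation*}
and the sum defining $C_{\tau,b}(n,k)$ into
\begin{equation*}
C_{\tau,b}(n,k) \;=\; \beta_{\tau,b}(q,k) \;+\; (n'-1)\,f\!\bigl(\bfphi^{-}-\bfchi_{0,q},\bfchi_{t,q}\bigr),
\end{equation*}
because the free-cell log-ratio $f(\bfphi^{-}-\bfh_{0,i},\bfh_{t,i})$ is constant on each partition, appearing $\omega_{\tau,b,j}$ times in full partitions $j<q$ and $n'-1$ times in the current partition.

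With this reparameterization the $n$-sum factors into two sums over $n'=1,\dots,\omega_{\tau,b,q}$: a geometric sum $\sum_{n'} \pi_t^{n'-1}(q,0) = \bigl(1-\pi_t^{\omega_{\tau,b,q}}(q,0)\bigr)/\bigl(1-\pi_t(q,0)\bigr)$, and an arithmetic-geometric sum $\sum_{n'}(n'-1)\pi_t^{n'-1}(q,0)$, which I would evaluate by differentiating the finite geometric series $\sum_{m=0}^{\omega-1} x^m = (1-x^\omega)/(1-x)$ with respect to $x$ and multiplying by $x$, producing exactly the bracketed factor $(\omega_{\tau,b,q}-1)\pi_t^{\omega_{\tau,b,q}+1}(q,0) - \omega_{\tau,b,q}\pi_t^{\omega_{\tau,b,q}}(q,0) + \pi_t(q,0)$ divided by $(1-\pi_t(q,0))^2$. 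Pulling $\rho_{\tau,b}(q,k)$ outside and identifying the two contributions with the claimed $\Theta_{\tau,b}(q,k)$ gives \eqref{eq:mut_inf_semantic_octomap_single}.

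The routine pieces are the single-ray instantiation of Proposition~\ref{prop:mut_inf_semantic} and the geometric identity; the only delicate step is the bookkeeping of indices, ensuring that the "hit" cell in position $n'$ within partition $q$ contributes the occupied log-ratio $f(\bfphi^{+}+\bfE_{k+1}\bfpsi^{+}-\bfchi_{0,q},\bfchi_{t,q})$ exactly once (absorbed into $\beta_{\tau,b}(q,k)$) while the preceding $n'-1$ cells of the same partition contribute the free log-ratio $f(\bfphi^{-}-\bfchi_{0,q},\bfchi_{t,q})$. I expect this combinatorial accounting between the "in-progress" partition $q$ and the completed partitions $j<q$ to be the main place where an off-by-one error could creep in; one minor edge case to handle is $\pi_t(q,0)=1$, where both closed forms must be interpreted as their limits (the geometric sum becomes $\omega_{\tau,b,q}$ and the arithmetic-geometric sum becomes $\omega_{\tau,b,q}(\omega_{\tau,b,q}-1)/2$), consistent with continuity of the final expression.
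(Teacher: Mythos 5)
Your proposal matches the paper's own proof in Appendix~\ref{app:mut-inf-semantic-octomap}: both start from the single-ray instance of Proposition~\ref{prop:mut_inf_semantic}, re-index the cell sum by partition (your $n'$ is the paper's $n-\omega_{\tau,b,1:q-1}$), factor $p_{\tau,b}$ and $C_{\tau,b}$ using the constant log-odds within each partition, and evaluate the resulting geometric and arithmetic--geometric sums in closed form to obtain $\Theta_{\tau,b}(q,k)$. The argument is correct, and your remark on the $\pi_t(q,0)=1$ limit is a sensible addition the paper omits.
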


\begin{proof}
See Appendix~\ref{app:mut-inf-semantic-octomap}.
\end{proof}

In \eqref{eq:mut_inf_semantic_octomap_single}, $\rho_{\tau, b}(q, k)$ relates to the event that the partition $\calR_{\tau,b}^q(r_{max})$ belongs to category $k$ while all of the previous partitions along the ray $\bfR_\tau \bfeta_b$ are free. Analogous to the definition of $C_{\tau, b}(n, k)$ in Proposition~\ref{prop:mut_inf_semantic}, $\beta_{\tau, b}(q,k)$ is the weighted sum of log-ratios $f(\bfphi, \bfchi)$ for the first $q$ partitions along the ray $\bfR_\tau \bfeta_b$ under the same event as the one $\rho_{\tau, b}(q, k)$ is associated with. Accumulating the multi-class probabilities within the partition $\calR_{\tau,b}^q(r_{max})$ yields $\Theta_{\tau, b}(q,k)$, see \eqref{eq:within_section_sum_res} for more details. Therefore, the mutual information in \eqref{eq:mut_inf_semantic_octomap_single} is equivalent to the expectation of accumulated log-ratios $\Theta_{\tau, b}(q,k)$ over all possible instantiations of $\bfz_{\tau, b}$.

Proposition~\ref{prop:mut_inf_semantic_octomap} allows an extension of the mutual-information lower bound in Proposition~\ref{prop:mut_inf_semantic} to semantic OctoMap representations, summarized in the corollary below. The proof follows directly from the additive property of mutual information between a semantic OctoMap and a sequence of independent observations.

\begin{corollary}
Given a sequence of range-category observations $\calZ_{t+1:t+T}$, the Shannon mutual information between $\calZ_{t+1:t+T}$ and a semantic OctoMap $\bfm$ can be lower bounded as:
\begin{equation}
\label{eq:mut_inf_semantic_octomap}
\begin{aligned}
I(\bfm; &\calZ_{t+1:t+T} | \calZ_{1:t}) \geq I\prl{\bfm; \underline{\calZ}_{t+1:t+T} | \calZ_{1:t}}\\
&=\sum_{\tau=t+1}^{t+T} \sum_{b=1}^{B} \sum_{k=1}^K \sum_{q=1}^{Q_{\tau,b}} \rho_{\tau, b}(q,k) \Theta_{\tau, b}(q,k),
\end{aligned}
\end{equation}
where $\underline{\calZ}_{t+1:t+T}$ is a subset of non-overlapping measurements that satisfy \eqref{eq:nonoverlapping}, and $\rho_{\tau, b}(q,k)$ and $\Theta_{\tau, b}(q,k)$ are defined in Proposition~\ref{prop:mut_inf_semantic_octomap}.
\end{corollary}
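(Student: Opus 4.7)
My plan is to read the corollary as the composition of two earlier results: Proposition~\ref{prop:mut_inf_semantic} supplies the selection argument that replaces $\calZ_{t+1:t+T}$ with its non-overlapping subset $\underline{\calZ}_{t+1:t+T}$ at the cost of an inequality, while Proposition~\ref{prop:mut_inf_semantic_octomap} supplies the closed-form expression \eqref{eq:mut_inf_semantic_octomap_single} for the mutual information between a single range-category ray and the semantic OctoMap. The only extra ingredient needed is an additive decomposition across the rays in $\underline{\calZ}_{t+1:t+T}$. I would therefore organize the proof as three steps: (i) reduce to $\underline{\calZ}_{t+1:t+T}$, (ii) split the mutual information additively across rays, and (iii) substitute Proposition~\ref{prop:mut_inf_semantic_octomap} into each summand.

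For step (i) I would mirror the argument already used in Proposition~\ref{prop:mut_inf_semantic}: by the chain rule,
\begin{equation*}
I(\bfm; \calZ_{t+1:t+T} \mid \calZ_{1:t}) = I(\bfm; \underline{\calZ}_{t+1:t+T} \mid \calZ_{1:t}) + I\!\left(\bfm; \calZ_{t+1:t+T} \setminus \underline{\calZ}_{t+1:t+T} \mid \underline{\calZ}_{t+1:t+T}, \calZ_{1:t}\right),
\end{equation*}
and the second summand is nonnegative. Nothing in this step depends on whether $\bfm$ is stored as a uniform-resolution grid or as an OcTree, so the argument transfers verbatim from the grid case.

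For step (ii), the key observation is that each measurement $\bfz_{\tau, b}$'s likelihood under \eqref{eq:cond_prob_approx} depends only on the cells indexed by $\calR_{\tau, b}(r_{max})$, while the non-overlap condition \eqref{eq:nonoverlapping} makes these index sets pairwise disjoint across rays in $\underline{\calZ}_{t+1:t+T}$. Combined with the factorized prior \eqref{eq:pdf_factorization}, this renders the rays in $\underline{\calZ}_{t+1:t+T}$ both marginally and conditionally (given $\bfm$) mutually independent under the posterior conditioned on $\calZ_{1:t}$. Iterating the chain rule and dropping the now-vacuous conditioning then yields
\begin{equation*}
I(\bfm; \underline{\calZ}_{t+1:t+T} \mid \calZ_{1:t}) = \sum_{\tau=t+1}^{t+T} \sum_{b=1}^{B} I(\bfm; \bfz_{\tau, b} \mid \calZ_{1:t}).
\end{equation*}
Plugging \eqref{eq:mut_inf_semantic_octomap_single} into each summand and interchanging the finite sums over $k$ and $q$ with those over $\tau$ and $b$ reproduces \eqref{eq:mut_inf_semantic_octomap}.

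The main obstacle is making the additive decomposition in step (ii) airtight. The one-line appeal to \emph{independent rays} conceals the need to verify both that conditioning on $\calZ_{1:t}$ plus the disjointness of $\calR_{\tau, b}(r_{max})$ across rays yields genuine conditional independence of the $\bfz_{\tau, b}$ given $\bfm$, and that the cell blocks they probe are themselves independent under the factored prior so that every cross-term $I(\bfm; \bfz_{\tau, b} \mid \bfz_{\tau', b'}, \calZ_{1:t}) - I(\bfm; \bfz_{\tau, b} \mid \calZ_{1:t})$ in the chain-rule expansion vanishes. Once this is in hand, the remaining steps are routine algebra.
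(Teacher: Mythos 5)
Your proposal is correct and follows essentially the same route as the paper: the paper states that the corollary ``follows directly from the additive property of mutual information between a semantic OctoMap and a sequence of independent observations,'' which is exactly your steps (i)--(iii) --- drop the overlapping rays via the chain rule (Proposition~\ref{prop:mut_inf_semantic}), decompose additively over the now-independent rays using the factorized prior \eqref{eq:pdf_factorization} and the disjointness condition \eqref{eq:nonoverlapping}, and substitute the per-ray closed form \eqref{eq:mut_inf_semantic_octomap_single} from Proposition~\ref{prop:mut_inf_semantic_octomap}. Your added care in verifying the conditional independence underlying the additive decomposition only makes explicit what the paper leaves implicit.
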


The same approach as in Alg.~\ref{alg:sem_exp} is used for autonomous exploration over a semantic OctoMap. However, we employ the information computation formula of \eqref{eq:mut_inf_semantic_octomap} to quantify the informativeness of candidate robot trajectories. The active mapping method in Alg.~\ref{alg:sem_exp} provides a greedy exploration strategy, which does not change subsequent control inputs based on the updated map distribution. Greedy exploration may be sub-optimal and manifests itself as back and forth travel between map frontiers. We alleviate this behavior by (a) computing the information along the whole trajectory as opposed only at the frontiers or next best view, and (b) re-plan frequently to account for the updated map distribution. Discounted by distance traveled as the cost of a trajectory, this leads to a more accurate calculation of information gain along a candidate path which rules out most of the back and forth visiting behaviour. It is also important to mention that the main scope of this work is introduction of a novel multi-class semantic OcTree representation and the mutual information between such model and range-category observations. Our method enables fast and accurate evaluation of information for any set of candidate trajectories, likes of which can be generated by random tree methods \cite{fsmi, info_exp_8} or hierarchical planning strategies \cite{tare} or, in the simplest form, a greedy approach that computes paths to each frontier. We believe utilizing our proposed information measure to score candidate viewpoints would be complementary, rather than an alternative, to the state-of-the-art exploration methods that use sophisticated optimization strategies \cite{tare_2, tare, arash_iros22}.

\begin{figure}[t]
  \includegraphics[width=0.48\linewidth]{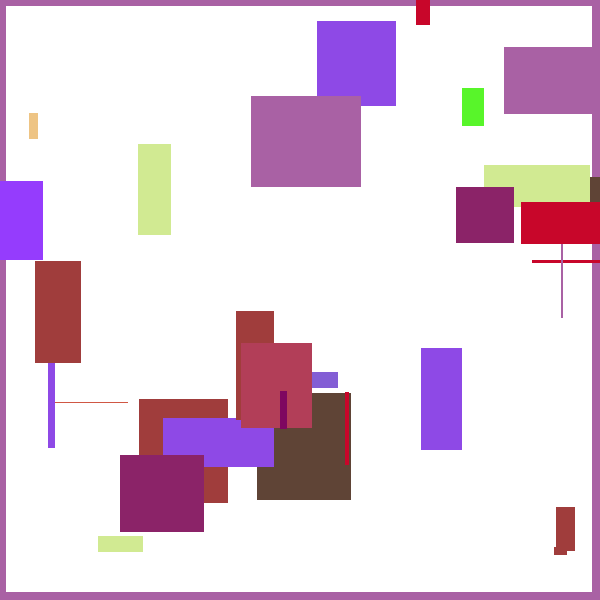}%
  \hfill%
  \includegraphics[width=0.48\linewidth]{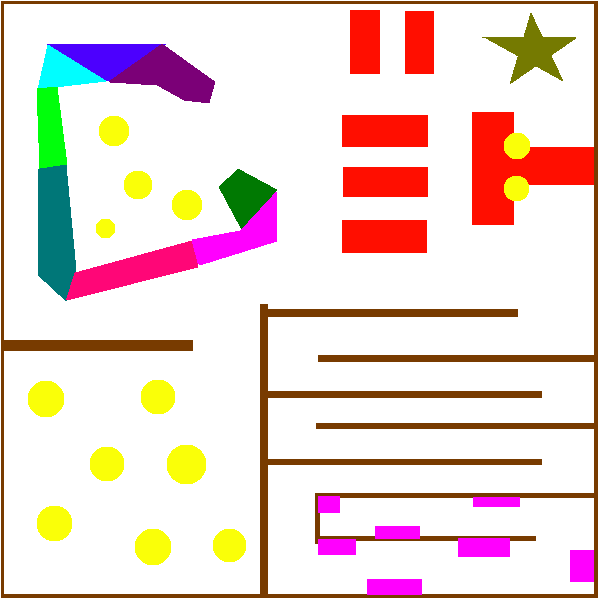}
  \caption{Synthetic environments used for comparisons among frontier-based exploration \cite{frontier}, FSMI \cite{fsmi}, and SSMI. Different semantic categories are represented by distinct colors. Left: An instance of procedurally generated random environment with $10$ object classes. Right: Hand-designed environment with corridor and block structures with $12$ object classes.}
  \label{fig:2d_true_map}
\end{figure}

\begin{figure*}[t]
    \captionsetup[subfigure]{justification=centering}
    \begin{subfigure}[t]{0.2575\linewidth}
    \includegraphics[width=\linewidth]{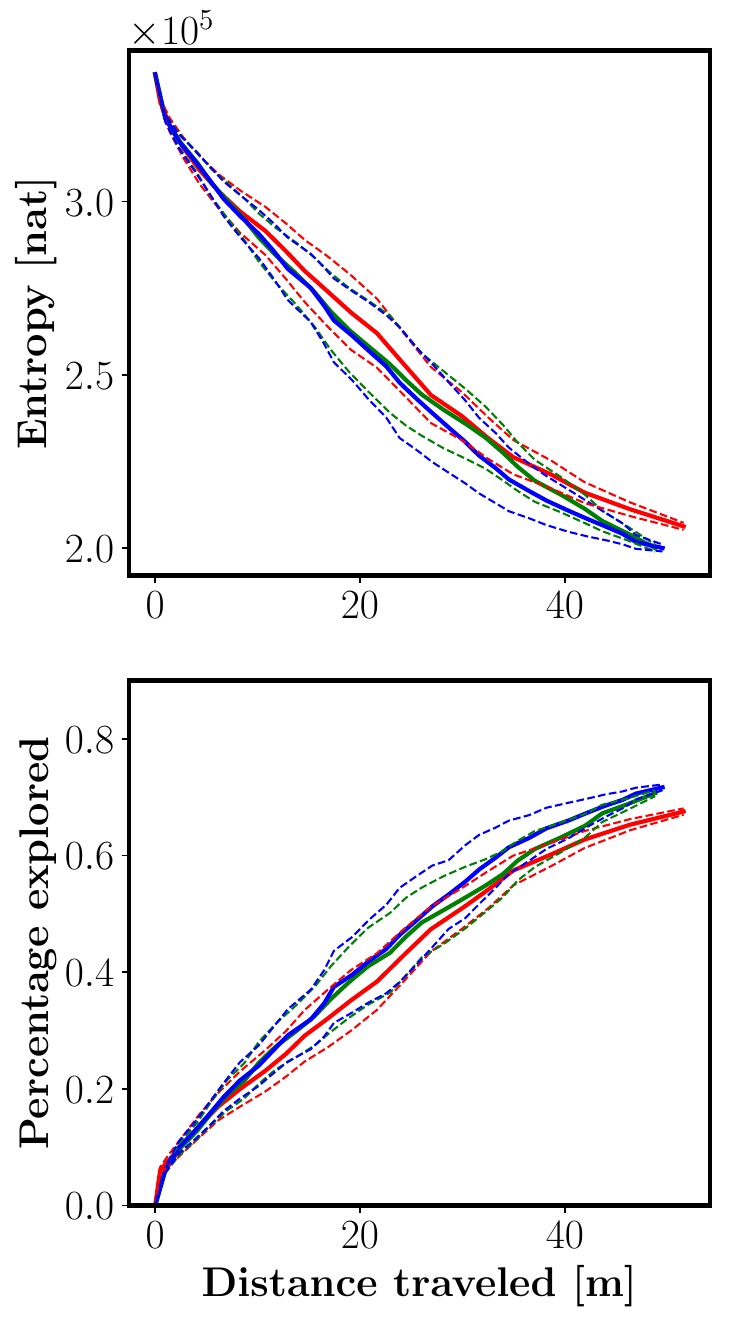}
    \caption{Random map / Binary exploration}
    \label{fig:2d_exp_results_a}
    \end{subfigure}%
    \hfill%
    \begin{subfigure}[t]{0.24\linewidth}
    \includegraphics[width=\linewidth]{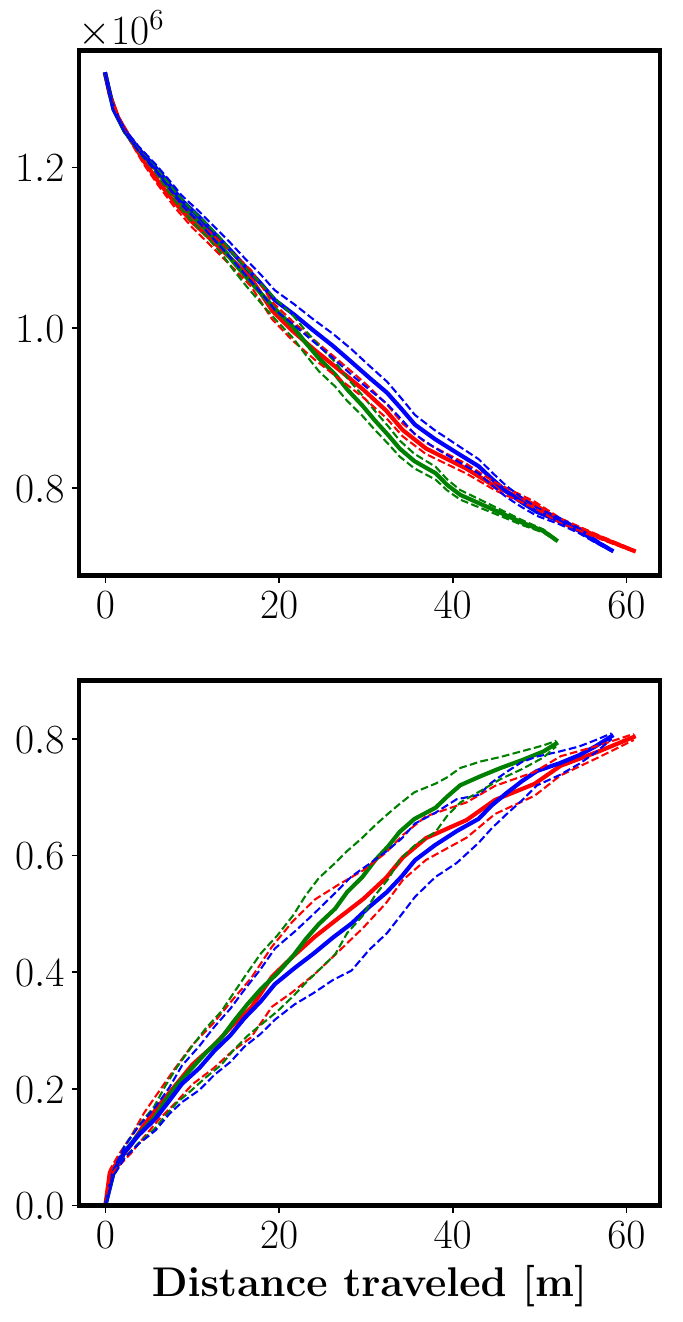}
    \caption{Random map / Semantic exploration}
    \label{fig:2d_exp_results_b}
    \end{subfigure}%
    \hfill%
    \begin{subfigure}[t]{0.24\linewidth}
    \includegraphics[width=\linewidth]{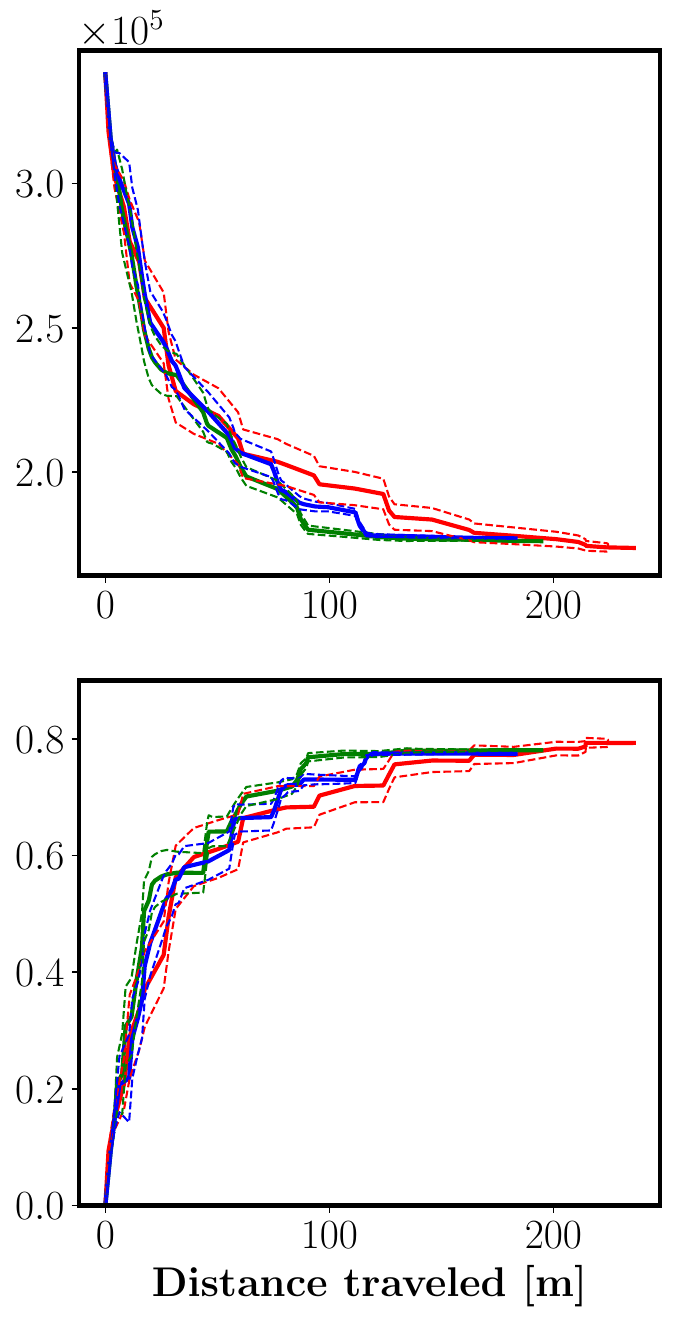}
    \caption{Structured map / Binary exploration}
    \label{fig:2d_exp_results_c}
    \end{subfigure}%
    \hfill%
    \begin{subfigure}[t]{0.24\linewidth}
    \includegraphics[width=\linewidth]{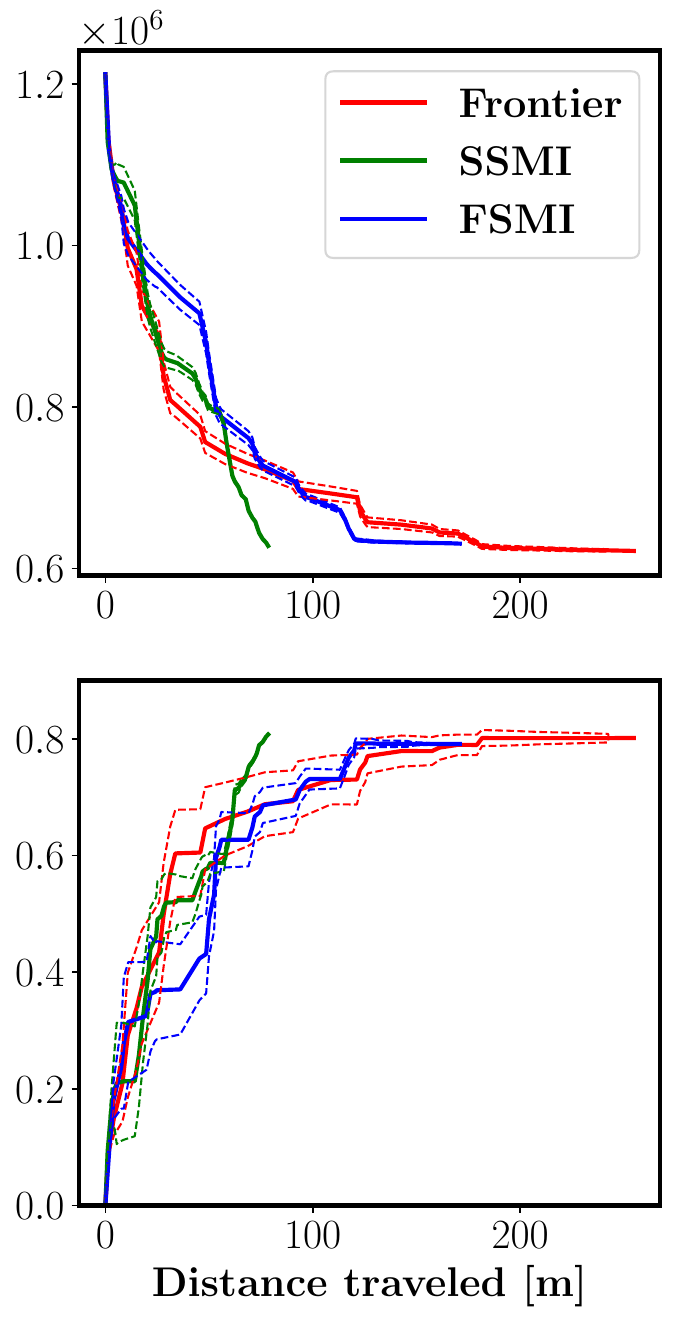}
    \caption{Structured map / Semantic exploration}
    \label{fig:2d_exp_results_d}
    \end{subfigure}%
    \caption{Simulation results for active mapping on the environments in Fig.~\ref{fig:2d_true_map}, for $20$ exploration iterations. Solid and dotted lines represent mean and $1$ standard deviation from the mean, respectively. (a), (b): Exploration performance averaged over $10$ random environments with $3$ random starting positions for each instance. Exploration in the random environments sometimes did not terminate before the maximum number of iterations, and therefore the corresponding curves do not flatten. This can be attributed to the fact that random maps, with the same size as the structured map, contain more frontiers that need to be explored in each iteration since each scan has a higher probability of being occluded in multiple angles due to the lack of certain patterns such as corridors. (c), (d): Exploration performance on the structured environment averaged over $3$ random starting positions. For the structured map, the exploration terminates before reaching the maximum number of iterations, which explains the flat curves at the end of the corresponding plots.}
    \label{fig:2d_exp_results}
\end{figure*}

\subsection{Computational Complexity}

Note that the mutual information computations in both \eqref{eq:mut_inf_semantic} and \eqref{eq:mut_inf_semantic_octomap} can be performed recursively. For \eqref{eq:mut_inf_semantic}, we have:
\begin{align}
p_{\tau, b}(n+1, k) &= p_{\tau, b}(n, k) \frac{p_{t}(m_{j^*_{\tau,b}}=k) p_{t}(m_{i^*_{\tau,b}}=0)}{p_{t}(m_{i^*_{\tau,b}}=k)},\notag\\
C_{\tau, b}(n+1, k) &= C_{\tau, b}(n, k) \notag\\ 
-& f(\bfphi^+ + \bfE_{k+1}\bfpsi^+ - \bfh_{0,i_{\tau,b}^*}, \bfh_{t,i_{\tau,b}^*}) \label{eq:p_C_recurse}\\ 
+& f(\bfphi^+ + \bfE_{k+1}\bfpsi^+ - \bfh_{0,j_{\tau,b}^*}, \bfh_{t,j_{\tau,b}^*}) \notag\\ 
+& f(\bfphi^--\bfh_{0,i_{\tau,b}^*}, \bfh_{t,i_{\tau,b}^*}),\notag
\end{align}
where $j_{\tau,b}^*$ and $i^*_{\tau,b}$ correspond to the index of farthest map cell in $\Tilde{\calR}_{\tau,b}(n+1)$ and $\Tilde{\calR}_{\tau,b}(n)$, respectively. A similar recursive pattern can be found in \eqref{eq:mut_inf_semantic_octomap}:
\begin{equation}
\begin{aligned}
\rho_{\tau, b}(q+1, k) &= \rho_{\tau, b}(q, k) \frac{\pi_{t}(q+1,k) \pi^{\omega_{\tau,b,q}}_{t}(q,0)}{\pi_{t}(q,k)},\\
\beta_{\tau, b}(q+1, k) &= \beta_{\tau, b}(q, k) \\ &- f(\bfphi^+ + \bfE_{k+1}\bfpsi^+ - \bfchi_{0,q}, \bfchi_{t,q}) \\ &+ f(\bfphi^+ + \bfE_{k+1}\bfpsi^+ - \bfchi_{0,q+1}, \bfchi_{t,q+1}) \\ &+ \omega_{\tau, b, q} f(\bfphi^--\bfchi_{0,q}, \bfchi_{t,q}).
\end{aligned}
\label{eq:}
\end{equation}
This implies that the innermost summations of \eqref{eq:mut_inf_semantic} and \eqref{eq:mut_inf_semantic_octomap} can be obtained in $O(N_{\tau,b})$ and $O(Q_{\tau,b})$, respectively, where $N_{\tau,b}$ is the number of map cells along a single ray $\bfR_{\tau}\bfeta_b$ up to its maximum range, and $Q_{\tau,b}$ is the number of groups of consecutive cells that possess the same multi-class probabilities. In an environment containing $K$ object classes, evaluating the informativeness of a trajectory composed of $T$ observations, where each observation contains $B$ beams, has a complexity of $O(T B K N_{\tau,b})$ for a regular-grid multi-class representation and a complexity of $O(T B K Q_{\tau,b})$ for a multi-class OcTree representation.

As we demonstrate in Sec.~\ref{subsec:SRLE}, for a ray $\bfR_{\tau}\bfeta_b$ we often observe that $Q_{\tau,b}$ is significantly smaller than $N_{\tau,b}$ thanks to the OcTree pruning mechanism. Since $N_{\tau,b}$ scales linearly with the map resolution, the complexity of information computation over a semantic OctoMap grows sub-linearly with respect to the inverse of the OcTree element dimensions, which is a parameter analogous to the map resolution.

\section{Experiments}
\label{sec:experiments}

\begin{figure}[t]
    \captionsetup[subfigure]{justification=centering}
    \begin{subfigure}[t]{0.48\linewidth}
    \includegraphics[width=\linewidth]{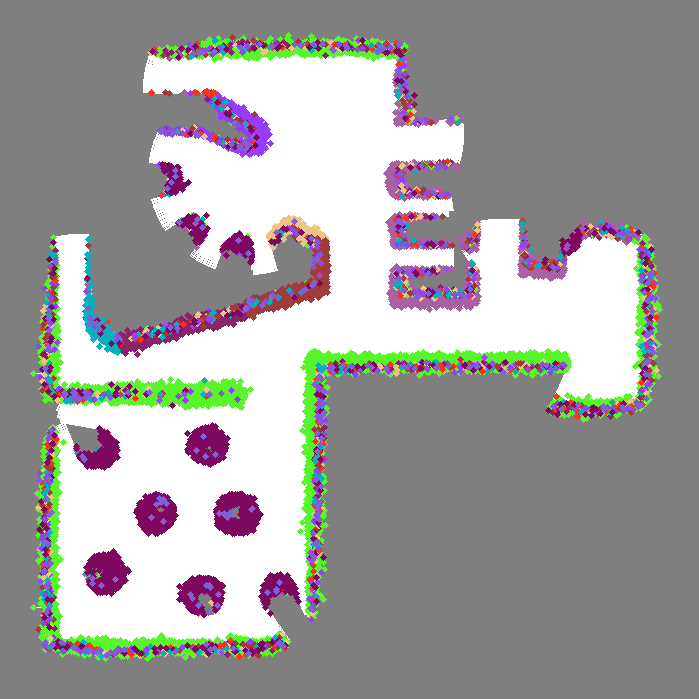}
    \caption{Partially explored semantic map}
    \label{fig:2d_info_surface_a}
    \end{subfigure}%
    \hfill%
    \begin{subfigure}[t]{0.48\linewidth}
    {%
    \setlength{\fboxsep}{0pt}%
    \setlength{\fboxrule}{0.01pt}%
    \fbox{\includegraphics[width=\linewidth]{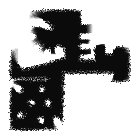}}%
    }%
    \caption{Entropy surface}
    \vspace{0.1\textwidth}%
    \label{fig:2d_info_surface_b}
    \end{subfigure}\\
    \begin{subfigure}[t]{0.48\linewidth}
    \includegraphics[width=\linewidth]{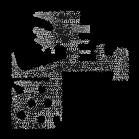}
    \caption{Occupancy mutual information surface}
    \label{fig:2d_info_surface_c}
    \end{subfigure}%
    \hfill%
    \begin{subfigure}[t]{0.48\linewidth}
    \includegraphics[width=\linewidth]{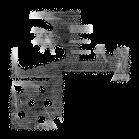}
    \caption{Multi-class mutual information surface}
    \label{fig:2d_info_surface_d}
    \end{subfigure}
    \caption{Comparison between different mutual information formulations used for exploration. (a), (b): A snapshot of 2-D exploration showing the map estimate and the corresponding uncertainty, where the entropy for each pixel $i$ is computed as ~\eqref{eq:pixel_entropy}. (c), (d): The mutual information used to find the informative trajectory by FSMI and SSMI, respectively. Brighter pixels indicate larger values.}
    \label{fig:2d_info_surface}
\end{figure}

In this section, we evaluate the performance of SSMI in simulated and real-world experiments. We compare SSMI with two baseline exploration strategies, i.e., frontier-based exploration~\cite{frontier} and FSMI~\cite{fsmi}, in a 2-D active binary mapping scenario in Sec.~\ref{subsec:exp_2d_bin} and a 2-D active multi-class mapping scenario in Sec.~\ref{subsec:exp_2d_multi}. All three methods use our range-category sensor model in \eqref{eq:log_inverse_observation_model} and our Bayesian multi-class mapping in \eqref{eq:log-odds-bayes-rule} but select informative robot trajectories $\bfX_{t+1:t+T}(\bfu_{t:t+T-1})$ based on their own criteria. In Sec.~\ref{subsec:SRLE}, we evaluate the improvement in ray tracing resulting from SRLE through an experiment in a 3-D simulated Unity environment. In Sec.~\ref{subsec:octomap_class_comp}, we investigate the influence of the number of stored semantic classes on mapping performance. Additionally, in Sec.~\ref{subsec:3-D_exp_sim}, we use a similar 3-D simulation environment to apply SSMI alongside Frontier, FSMI, and hierarchical coverage maximization method TARE~\cite{tare}. In this section we use our OcTree-based multi-class information computation introduced in Sec.~\ref{sec:info_comp_octomap} in order to demonstrate large-scale realistic active multi-class mapping. Finally, in Sec.~\ref{subsec:real_world_map} and Sec.~\ref{subsec:real_world_exp}, we test SSMI mapping and exploration in real environments using ground wheeled robots. An open-source implementation of SSMI is available on GitHub\footnote{\url{https://github.com/ExistentialRobotics/SSMI}.}.

In each planning step of 2-D exploration, we identify frontiers by applying edge detection on the most likely map at time $t$ (the mode of $p_t(\bfm)$). Then, we cluster the edge cells by detecting the connected components of the boundaries between explored and unexplored space. We plan a path from the robot pose $\bfX_t$ to the center of each frontier using $A^*$ graph search and provide the path to a low-level controller to generate $\bfu_{t:t+T-1}$. For 3-D exploration, we first derive a 2-D occupancy map by projecting the most likely semantic OctoMap at time $t$ onto the $z = 0$ surface and proceed with similar steps as in 2-D path planning.

\subsection{2-D Binary Exploration}
\label{subsec:exp_2d_bin}

We consider active binary occupancy mapping first. We compare SSMI against Frontier and FSMI in $1$ structured and $10$ procedurally generated 2-D environments, shown in Fig.~\ref{fig:2d_true_map}. A 2-D LiDAR sensor is simulated with additive Gaussian noise $\calN(0,0.1)$. Fig.~\ref{fig:2d_exp_results_a} and Fig.~\ref{fig:2d_exp_results_c} compare the exploration performance in terms of map entropy reduction and percentage of the map explored per distance traveled among the three methods. SSMI performs similarly to FSMI in that both achieve low map entropy by traversing significantly less distance compared to Frontier.

\subsection{2-D Multi-class Exploration}
\label{subsec:exp_2d_multi}

Next, we use the same 2-D environments in Fig.~\ref{fig:2d_true_map} but introduce range-category measurements. Range measurements are subject to additive Gaussian noise $\calN(0,0.1)$, while category measurements have a uniform misclassification probability of $0.35$. Fig.~\ref{fig:2d_exp_results_b} and Fig.~\ref{fig:2d_exp_results_d} compare the semantic exploration performance for all three strategies. SSMI reaches the same level of map entropy as FSMI and Frontier but traverses a noticeably shorter distance. This can be attributed to the fact that only SSMI distinguishes map cells whose occupancy probabilities are the same but their per-class probabilities differ from each other. To further illustrate this, we visualize the entropy and information surfaces used by FSMI and SSMI. Fig.~\ref{fig:2d_info_surface_a} shows a snapshot of semantic exploration while Fig.~\ref{fig:2d_info_surface_b} visualizes the entropy of each pixel $i$ computed as:
\begin{equation}
    H(m_i | \calZ_{1:t}) = - \sum_{k = 0}^K p_t(m_i = k) \log{p_t(m_i = k)},
\label{eq:pixel_entropy}
\end{equation}
where $\calZ_{1:t}$ denote realized observations until time $t$. The task of exploration can be regarded as minimizing the conditional entropy summed over all pixels, i.e., map entropy. However, since the observations are not known in advance, we resort to estimate the reduction in uncertainty by computing the expectation over the observations. Accounting for the prior uncertainty in map, we arrive at maximizing mutual information as our objective, which is related to entropy as follows:
\begin{equation}
    H(m_i) - \bbE_{\calZ_{1:t}}\{H(m_i | \calZ_{1:t})\} = I(m_i; \calZ_{1:t}).
\label{eq:pixel_entropy_to_info}
\end{equation}

Therefore, the exploration performance is highly dependent upon the mutual information formulation, since it directly dictates how the uncertainty is quantified. As shown in Fig.~\ref{fig:2d_info_surface_d} and resulted from capturing per-class uncertainties, semantic mutual information of SSMI, computed in \eqref{eq:mut_inf_semantic} provides a smoother and more accurate estimation of information-rich regions compared to the binary mutual information formula used by FSMI (equation (18) in \cite{fsmi}) shown in Fig.~\ref{fig:2d_info_surface_c}.

\begin{figure}[t]
  \centering
  \includegraphics[width=\linewidth]{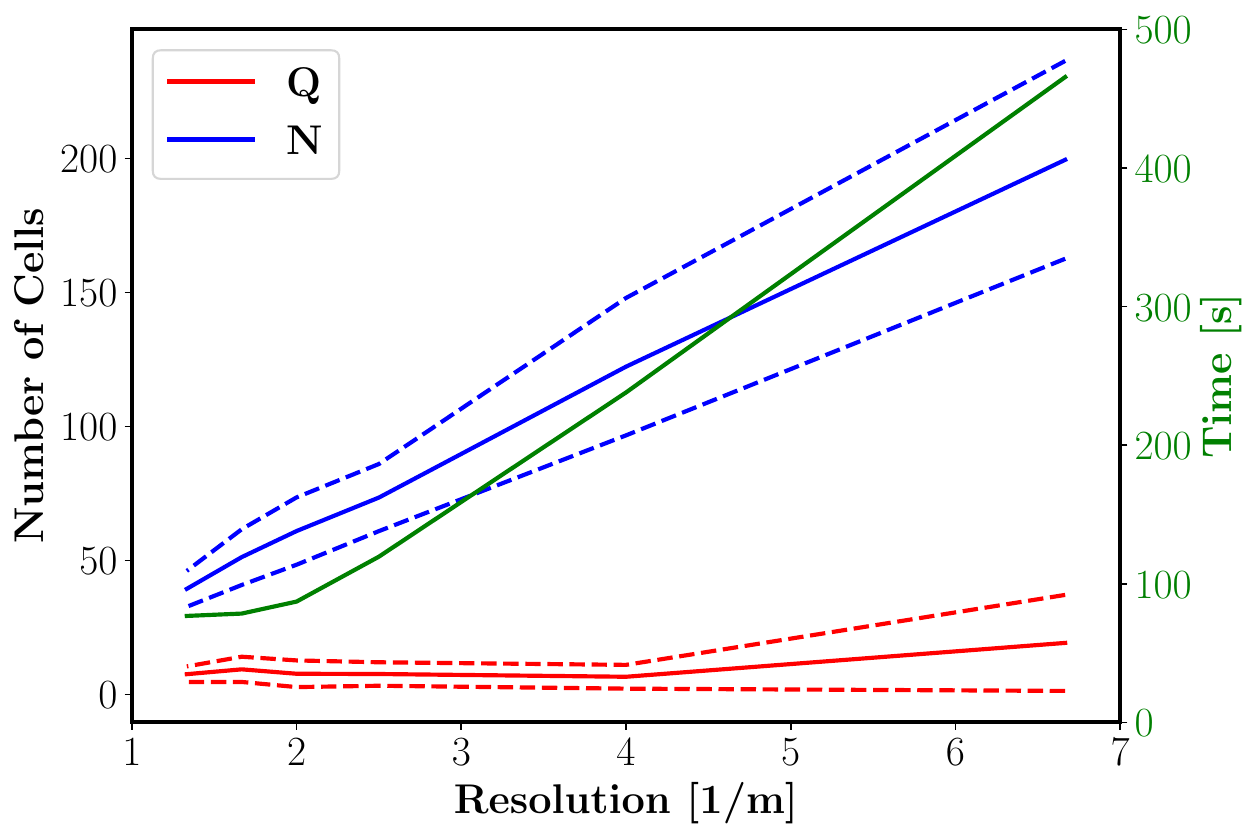}
  \caption{Variation of the visited OctoMap cells and OcTree elements denoted as $Q$ and $N$, respectively, with respect to the map resolution. Solid blue and red lines represent the average values for $Q$ and $N$ over all ray castings, while the dashed lines show one standard deviation from the average. The green curve shows the total exploration time for each map resolution. All measurements are accumulated in the course of $5$ exploration iterations.}
\label{fig:RLE_curve}
\end{figure}

\begin{figure*}[t]
    \begin{subfigure}[t]{0.32\linewidth}
    \includegraphics[width=\linewidth]{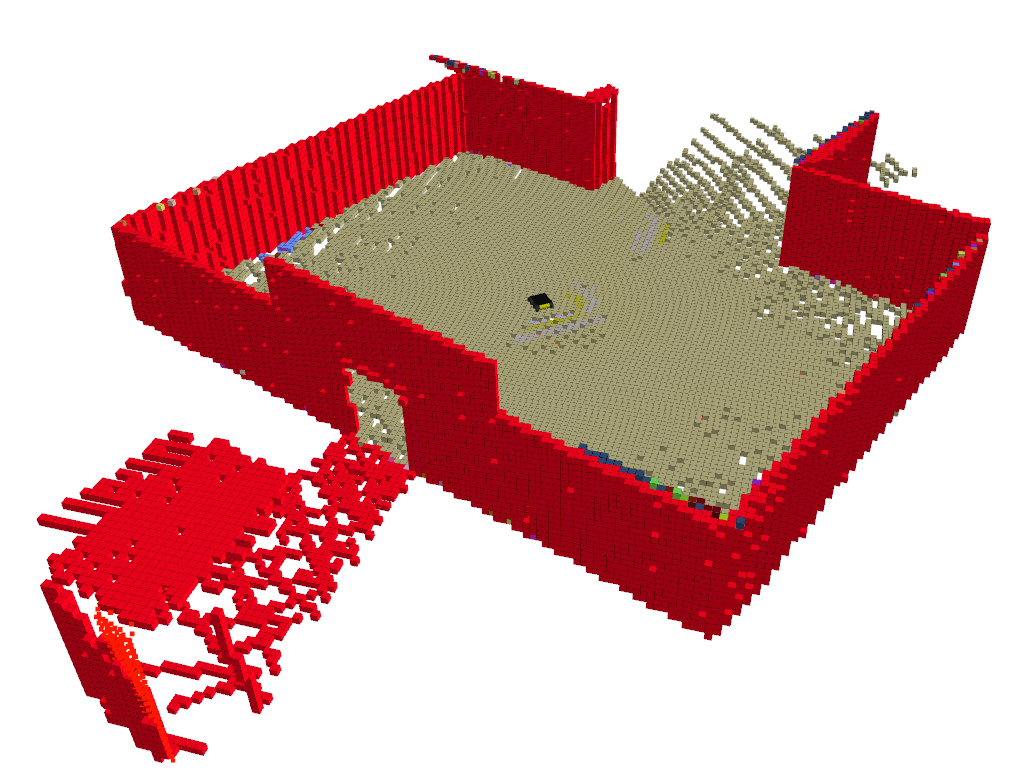}
    \captionsetup{justification=centering}
    \caption{The robot begins exploration.}
    \label{fig:3d_sim_env_a}
    \end{subfigure}%
    \hfill%
    \begin{subfigure}[t]{0.32\linewidth}
    \includegraphics[width=\linewidth]{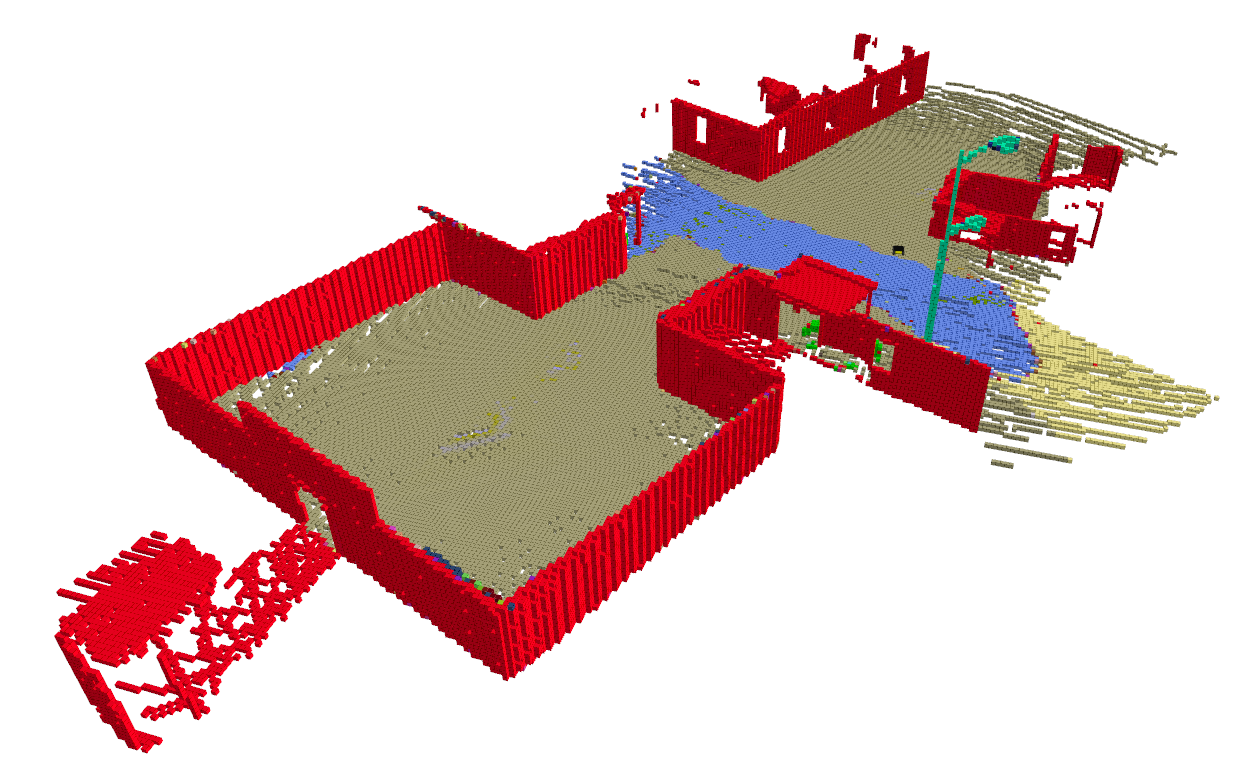}
    \caption{After $16$ iterations, the robot starts to refine previously explored areas to fill partially observed objects.}
    \label{fig:3d_sim_env_b}
    \end{subfigure}%
    \hfill%
    \begin{subfigure}[t]{0.32\linewidth}
    \includegraphics[width=\linewidth]{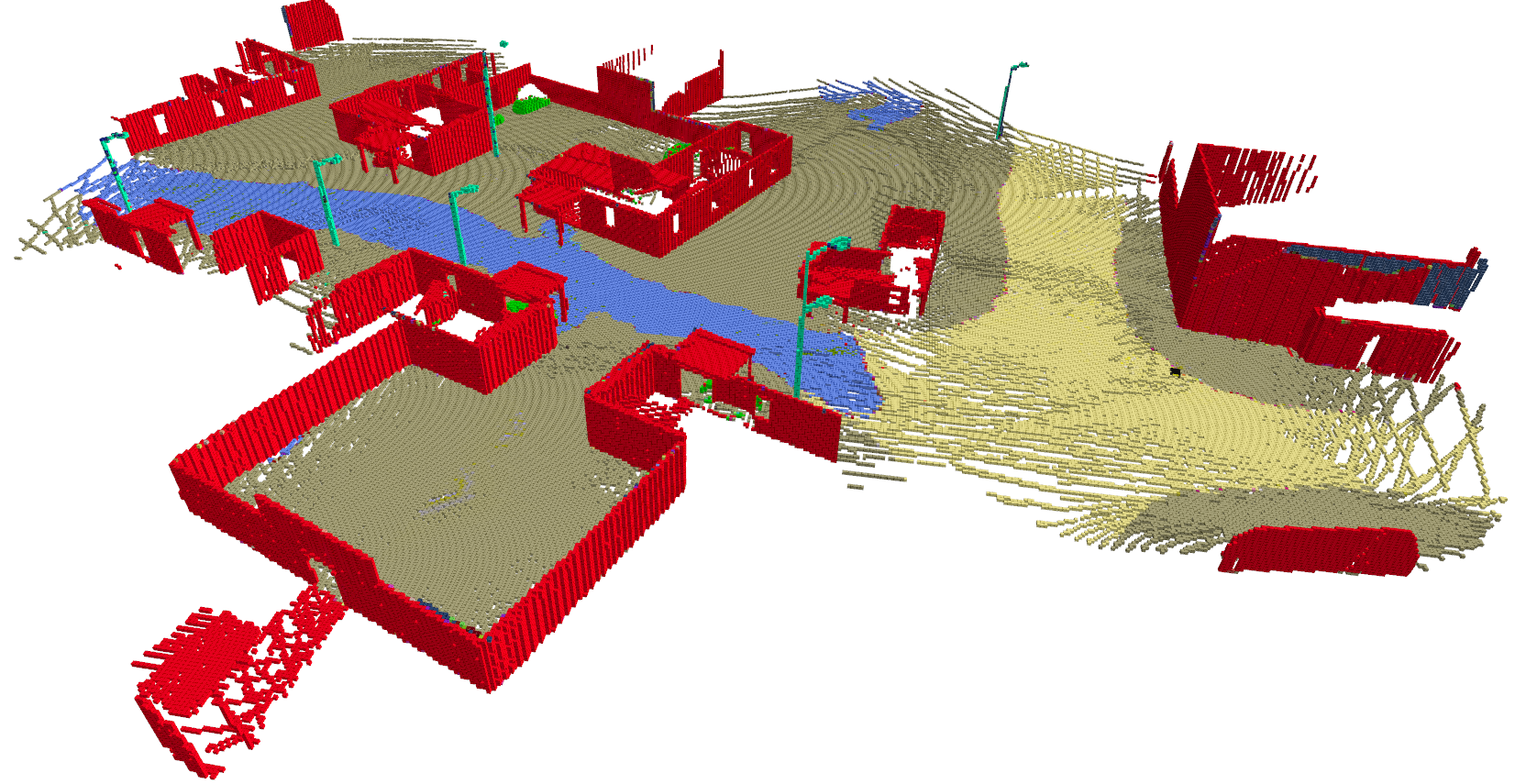}
    \caption{The robot explores unknown regions located on the boundaries of the explored area at iteration $40$.}
    \label{fig:3d_sim_env_c}
    \end{subfigure}\\
    \begin{subfigure}[t]{0.49\linewidth}
    \centering
    \includegraphics[width=\linewidth]{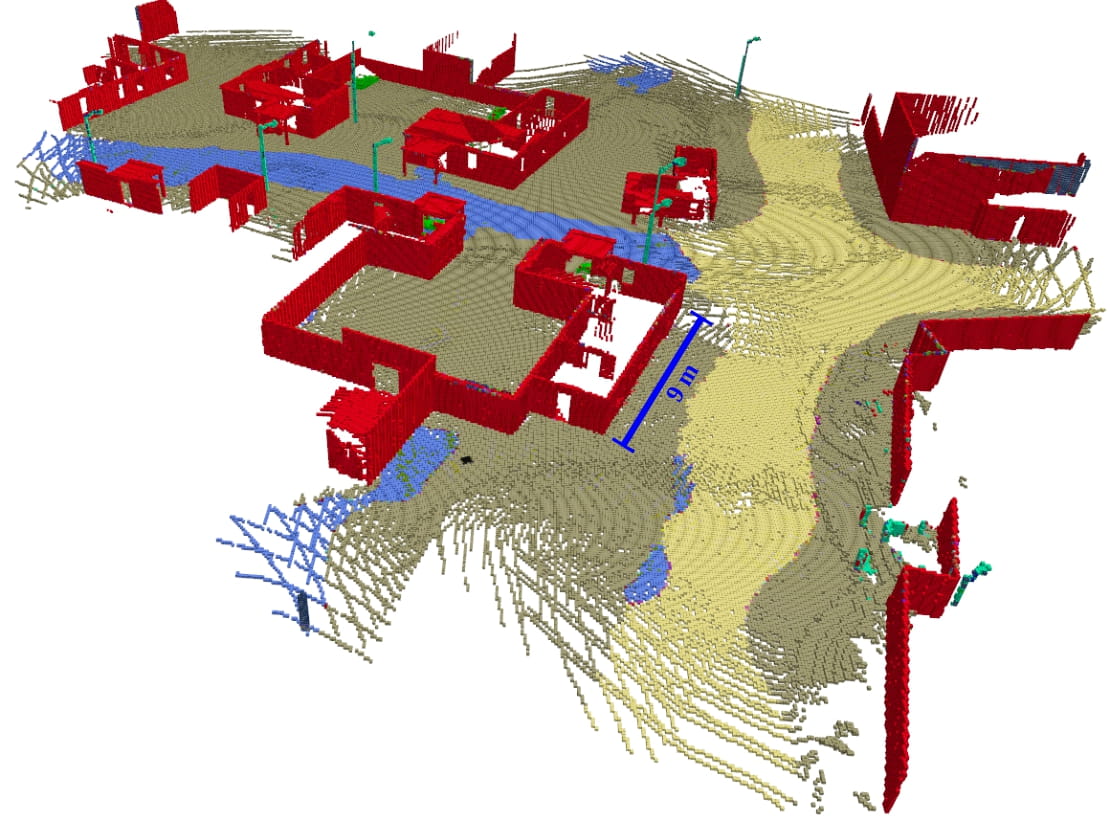}
    \captionsetup{justification=centering}
    \caption{Multi-class occupancy map after $60$ exploration iterations}
    \label{fig:3d_sim_env_d}
    \end{subfigure}%
    \hfill%
    \begin{subfigure}[t]{0.49\linewidth}
    \centering
    \includegraphics[width=\linewidth]{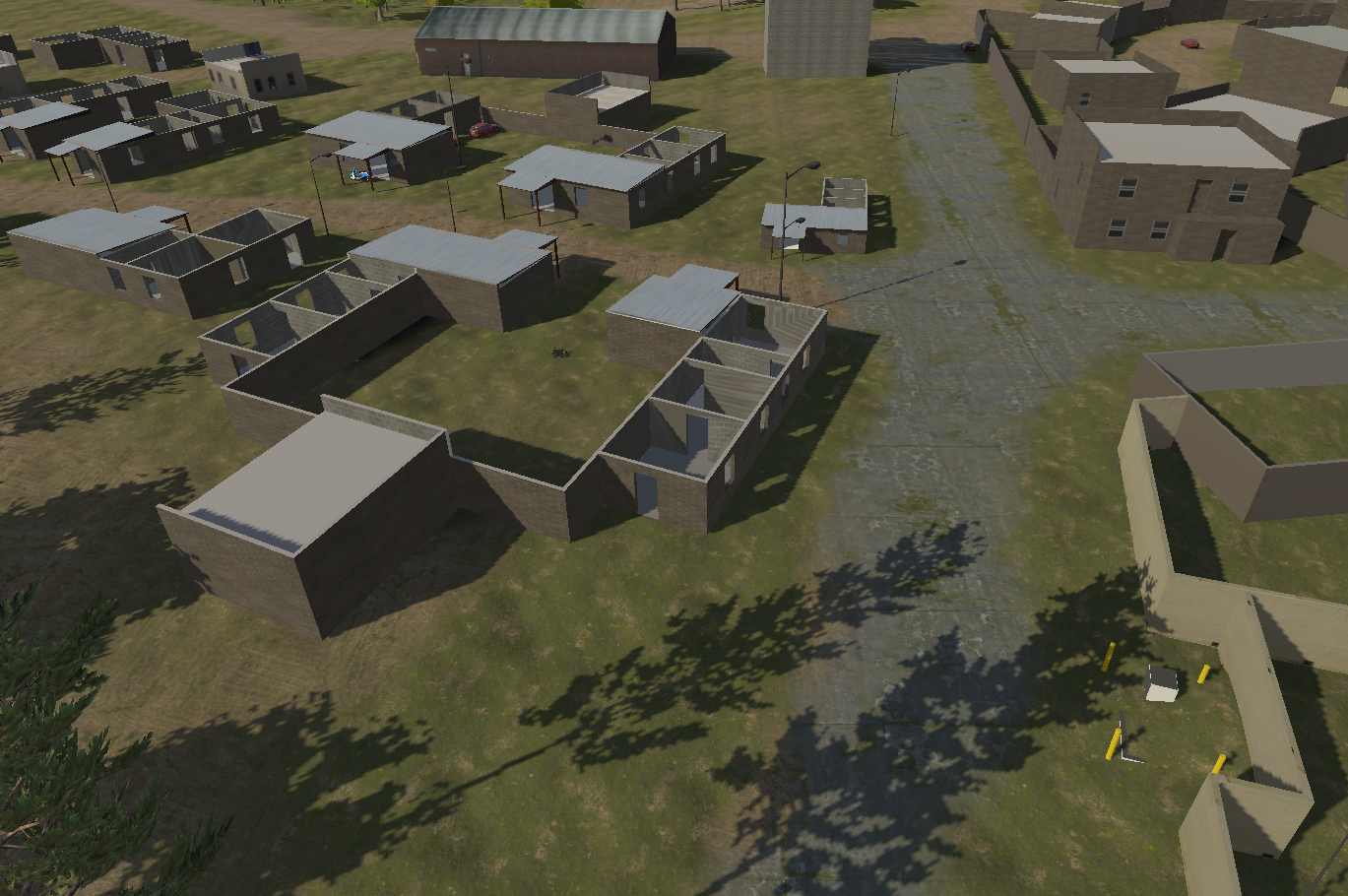}
    \captionsetup{justification=centering}
    \caption{Photo-realistic Unity simulation environment}
    \label{fig:3d_sim_env_e}
    \end{subfigure}
    \caption{Time lapse of autonomous exploration and multi-class mapping in a simulated Unity environment. The robot is equipped with an RGBD sensor and runs semantic segmentation. Different colors represent different semantic categories (grass, dirt road, building, etc.).}
    \label{fig:3d_sim_env}
\end{figure*}

\subsection{SRLE Compression for 3-D Ray Tracing}
\label{subsec:SRLE}

In this subsection, we evaluate the ray-tracing compression resulting from SRLE through an experiment in a photo-realistic 3-D Unity simulation, shown in Fig.~\ref{fig:3d_sim_env_e}. We use a Husky robot equipped with an RGBD camera and run a semantic segmentation algorithm over the RGB images. In order to remove irrelevant randomness, the sensors and the semantic segmentation are defined as error-free. We define \textit{map resolution} as the inverse of the dimensions of an OcTree element. For resolutions ranging from $1.3 m^{-1}$ to $6.6 m^{-1}$, we run $5$ exploration iterations using the semantic OctoMap and information computation of Sec.~\ref{sec:info_comp_octomap} and store all ray traces in SRLE format. Fig.~\ref{fig:RLE_curve} shows the change in distribution for the number of OctoMap cells $Q$ and OcTree elements $N$ visited during each ray trace, as well as the time required to execute each exploration episode as a function of map resolution. In other words, $N$ represents the number of cells to be processed during mapping and information computation as if the environment was represented as a regular 3-D grid, while $Q$ represents the actual number of processed semantic OctoMap cells. The pruning mechanism of the OcTree representation results in a substantial gain in terms of the number of cells visited for each ray tracing. As opposed to the almost linear growth of $N$, the distribution for $Q$ is effectively independent of the map resolution, except for very fine resolutions where void areas between observations rays prevent efficient pruning. However, for map resolutions larger than $2 m^{-1}$, the exploration time tends to grow larger with the increase of map resolution. This is attributed to the recursive ray insertion method of OctoMap in which it is required to re-compute log odds for each OcTree element along an observation ray whenever an observation ray does not carry the same (free or object class) state as the visited cell. In the subsequent 3-D experiments, we choose map resolution of $2 m^{-1}$ in order to balance between performance and map accuracy.

\begin{figure}[t]
  \centering
  \begin{subfigure}[t]{0.9\linewidth}
  \includegraphics[width=\linewidth]{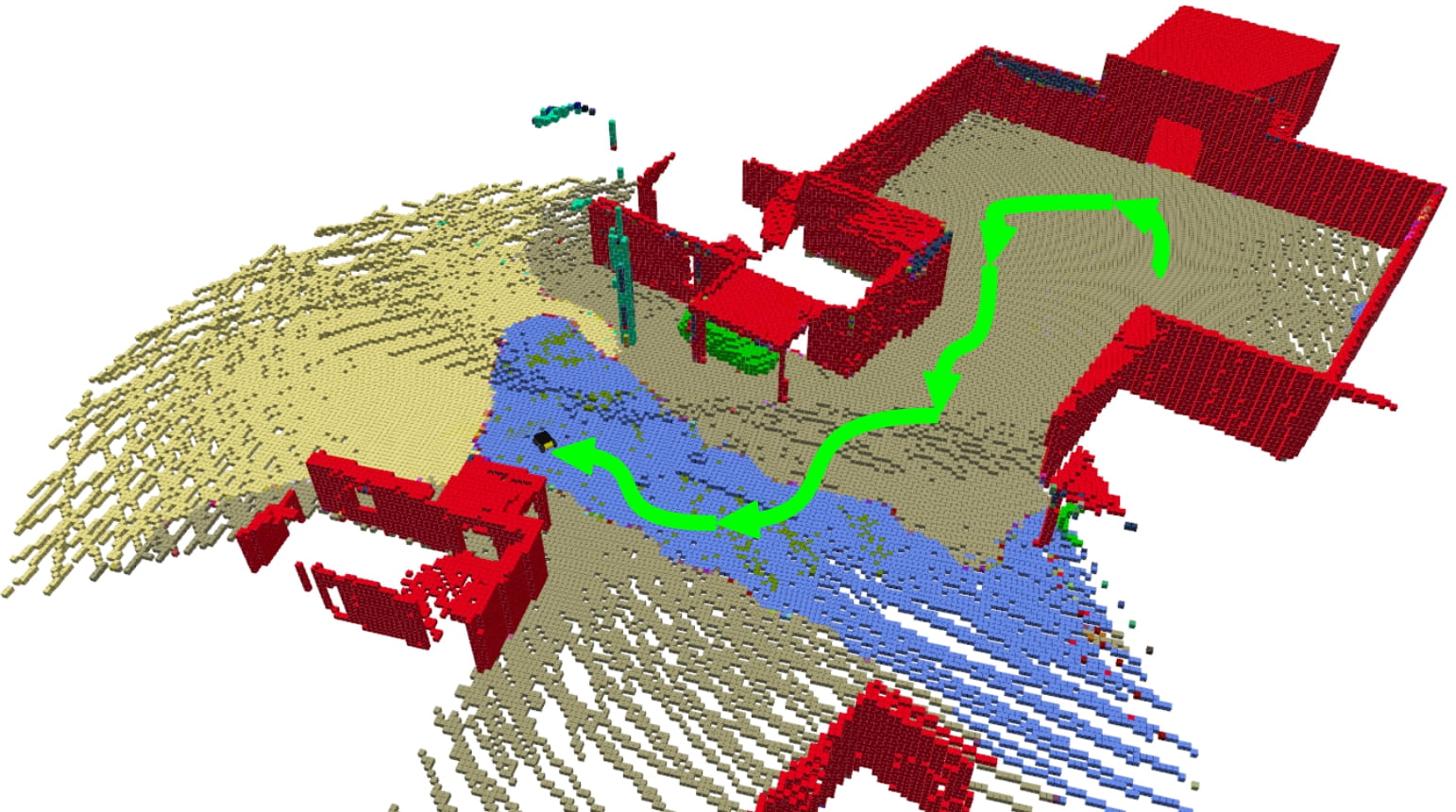}
  \label{fig:map_fps_path}
  \end{subfigure}\\
  \begin{subfigure}[t]{\linewidth}
  \includegraphics[width=\linewidth]{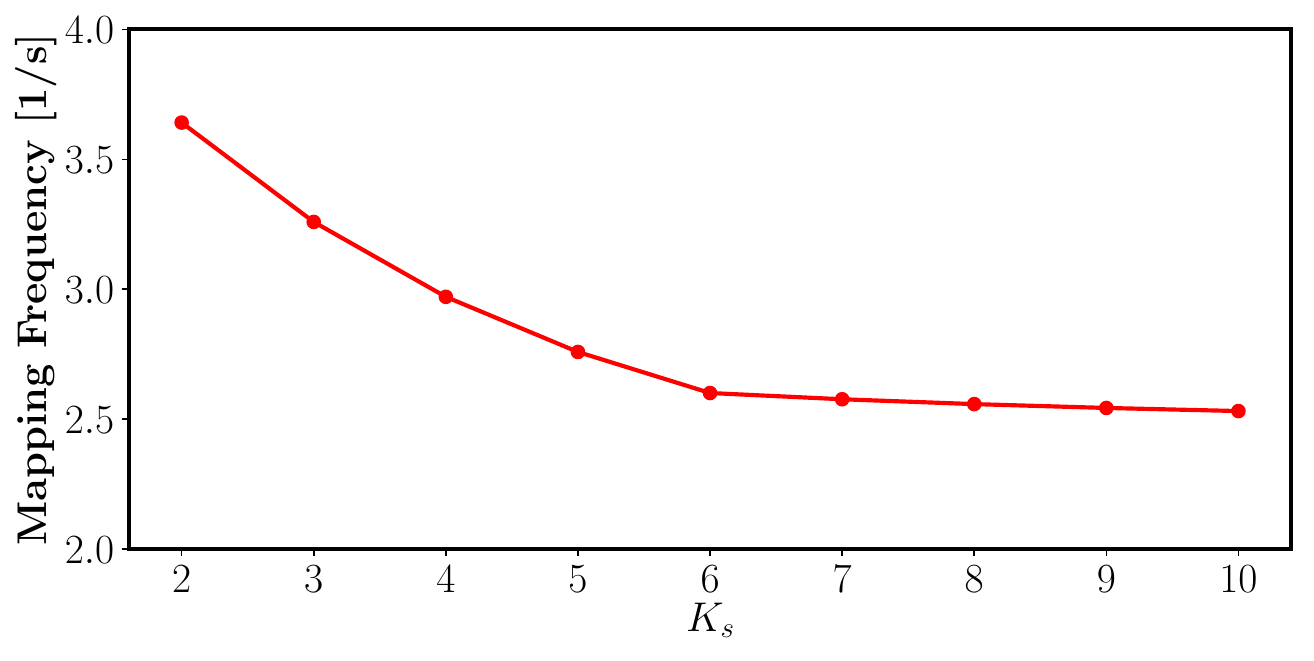}
  \label{fig:map_fps_vs_K_s}
  \end{subfigure}%
  \caption{Mapping time vs. number of stored classes. Top: Robot trajectory (in green) used for all mapping frequency evaluations. Bottom: Average mapping frequency as a function of the number of stored semantic classes $K_s$.}
\label{fig:map_fps}
\end{figure}

\subsection{Mapping Time vs. Number of Stored Classes}
\label{subsec:octomap_class_comp}

We analyse the influence of the number of stored classes in the semantic OctoMap on the mapping time. Let $K_s$ denote the number of stored semantic classes. Alg.~\ref{alg:fuse_obs} has $O(K_s)$ memory and $O(K_s \log{K_s})$ computational complexity (due to sorting in line \ref{alg:sorting}).
%in the semantic OctoMap, denoted as $K_s$, on mapping time. As it can be seen from Alg.~\ref{alg:fuse_obs}, our implementation of semantic OcTree has $O(K_s)$ memory and $O(K_s \log{K_s})$ computational complexity (due to sorting).
Furthermore, let $p_{\text{miss}}$ be the misclassification probability assumed to be uniformly distributed among all incorrect classes. Regarding accuracy, for a classifier with $p_{\text{miss}} < \frac{K-1}{K}$, where $K$ is the number of all object classes, the true class will be always asymptotically recoverable as long as $K_s \geq 2$, thanks to the auxiliary \textit{others} class that stores the accumulated probability of the $K - K_s$ least likely classes (see line \ref{alg:others} of Alg.~\ref{alg:fuse_obs}). In general, $K_s$ controls how fast the true class will be detected with the cost of additional memory use and computation. In order to quantitatively evaluate the effect of $K_s$ on mapping time, we consider the same Husky robot as the previous subsection with a fixed trajectory, shown in Fig.~\ref{fig:map_fps}~(top), and measure the mapping frequency as a function of $K_s$. Fig.~\ref{fig:map_fps}~(bottom) shows the decrease in average mapping frequency as $K_s$ increases. It is important to mention that the trajectory along which the data is collected only visits $6$ object classes, which explains the change in slope for $K_s > 6$.

\subsection{3-D Exploration in a Unity Simulation}
\label{subsec:3-D_exp_sim}

We evaluate SSMI in the same 3-D simulation environment as two previous subsections, however, this time the range measurements have an additive Gaussian noise of $\calN(0,0.1)$ and the semantic segmentation algorithm detects the true class with a probability of $0.95$ while the misclassification happens uniformly in the pixel space. Fig.~\ref{fig:3d_sim_env} shows several iterations of the exploration process. For comparison, we implemented a 3-D version of FSMI~\cite{fsmi} that utilizes run-length encoding to accelerate the information computation for a binary OctoMap. Moreover, we deploy the state-of-the-art hierarchical exploration method of TARE~\cite{tare} in our 3-D Unity simulation environment. Fig.~\ref{fig:3d_sim_res} shows the change in map entropy versus distance traveled and total elapsed time for all exploration strategies. We observe that SSMI is the most efficient in terms of solving the trade-off between path length and information gathered along the path. SSMI achieves the lowest entropy in the multi-class OctoMap. Similar to the discussion in Sec.~\ref{subsec:exp_2d_multi}, this observation can be ascribed to the fact that, among the compared methods, the only objective function which captures the uncertainty in both semantic classes and occupancy of the environment is the one used by SSMI. On the other hand, SSMI and FSMI require evaluation of mutual information along each candidate trajectory, which has the same cardinality as the number of all frontiers in the current map estimate $p_t(\bfm)$, whereas the hierarchical planning method employed by TARE only requires local trajectory computation with a global coverage path obtained at a coarse level. As a result, TARE exploration can be performed over a relatively shorter time period compared to SSMI and FSMI in scenarios where the number of frontiers is large, e.g. outdoor areas. Parallel computation of mutual information for each candidate trajectory or using heuristics such as frontier size in order to sort candidate solutions would improve the computation time of SSMI; however we believe these are outside of the scope of this paper. Fig.~\ref{fig:3d_sim_res_prec} compares the mapping precision of various object classes for the tested methods. SSMI exhibits higher precision for object categories that appear rarely, such as the \textit{Animal} or \textit{Tree} classes while Frontier slightly outperforms SSMI when it comes to mapping the \textit{Grass} and \textit{Dirt Road} categories. This can be explained by the tendency of SSMI towards achieving high overall classification precision even if it requires slight reduction of precision for certain object categories. Furthermore, TARE achieves the highest precision for the \textit{Building} class, which can be justified by the observation that the computed global coverage path tends to traverse near building walls.

\begin{figure}[t]
  \centering
  \includegraphics[width=\linewidth]{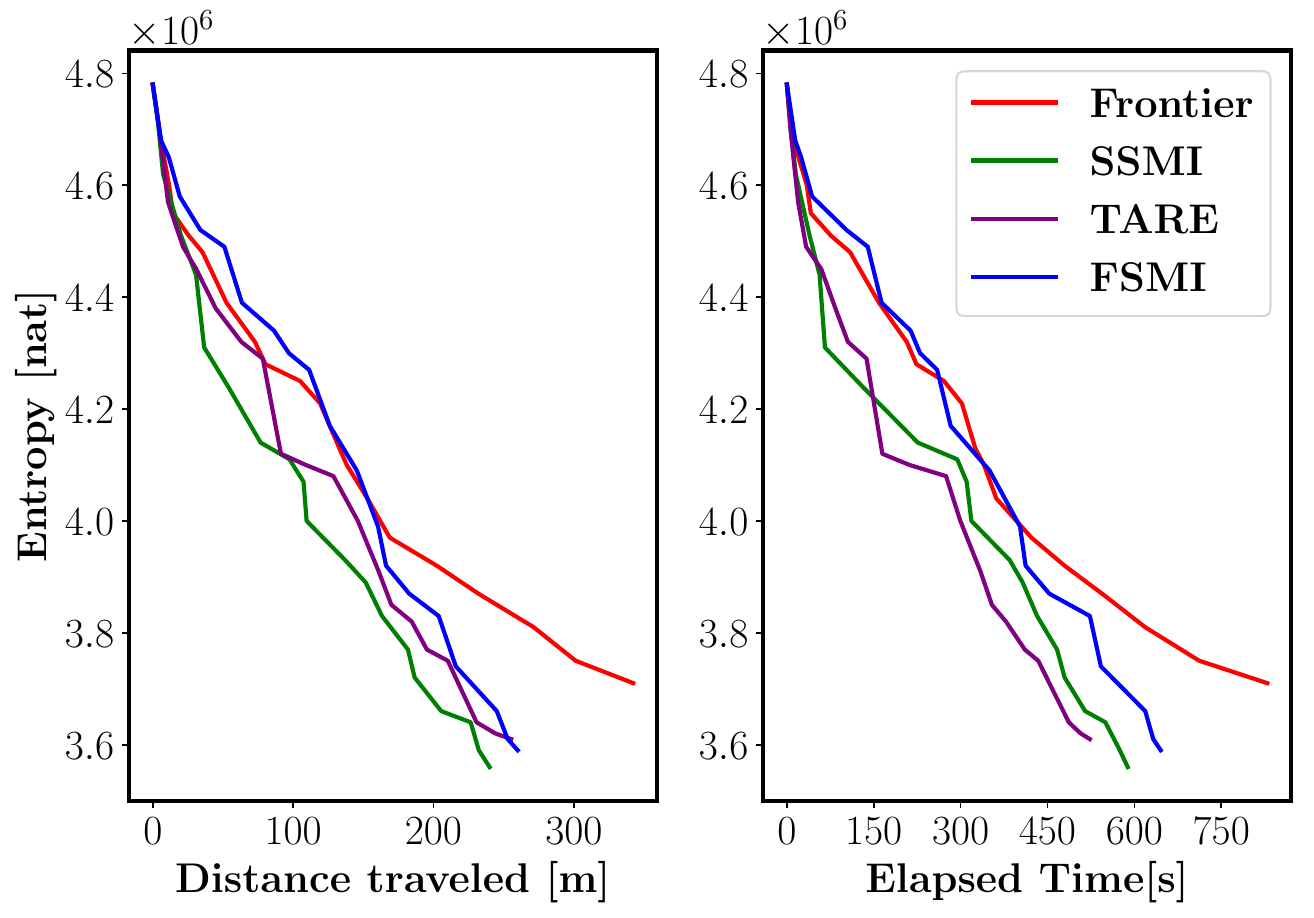}
  \caption{Simulation results for exploration in Unity 3-D environment.}
\label{fig:3d_sim_res}
\end{figure}

\begin{figure}[t]
  \centering
  \includegraphics[width=\linewidth]{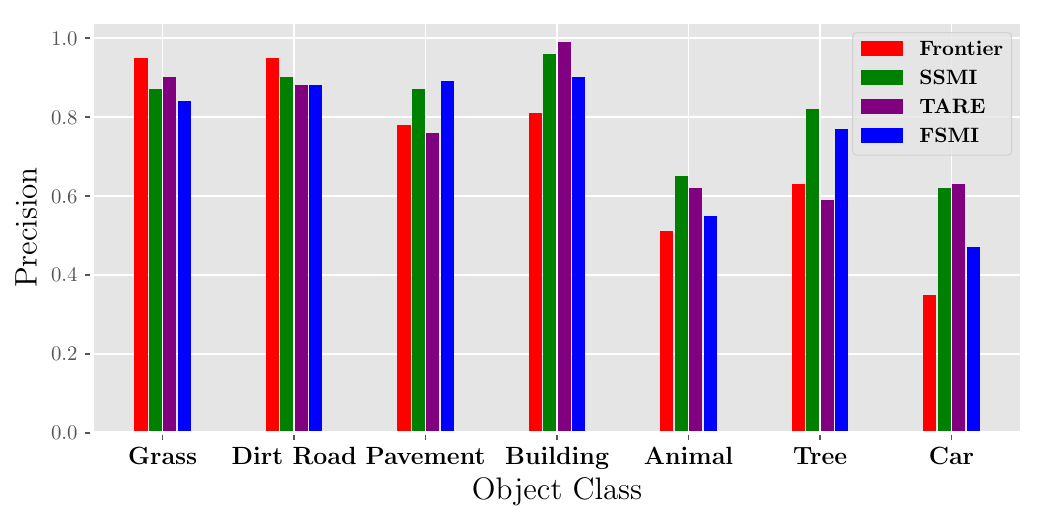}
  \caption{Mapping precision for observed semantic classes.}
\label{fig:3d_sim_res_prec}
\end{figure}

\subsection{3-D Mapping in a Real-world Outdoor Environment}
\label{subsec:real_world_map}

\begin{figure}[t]
    \centering
    \includegraphics[width=\linewidth]{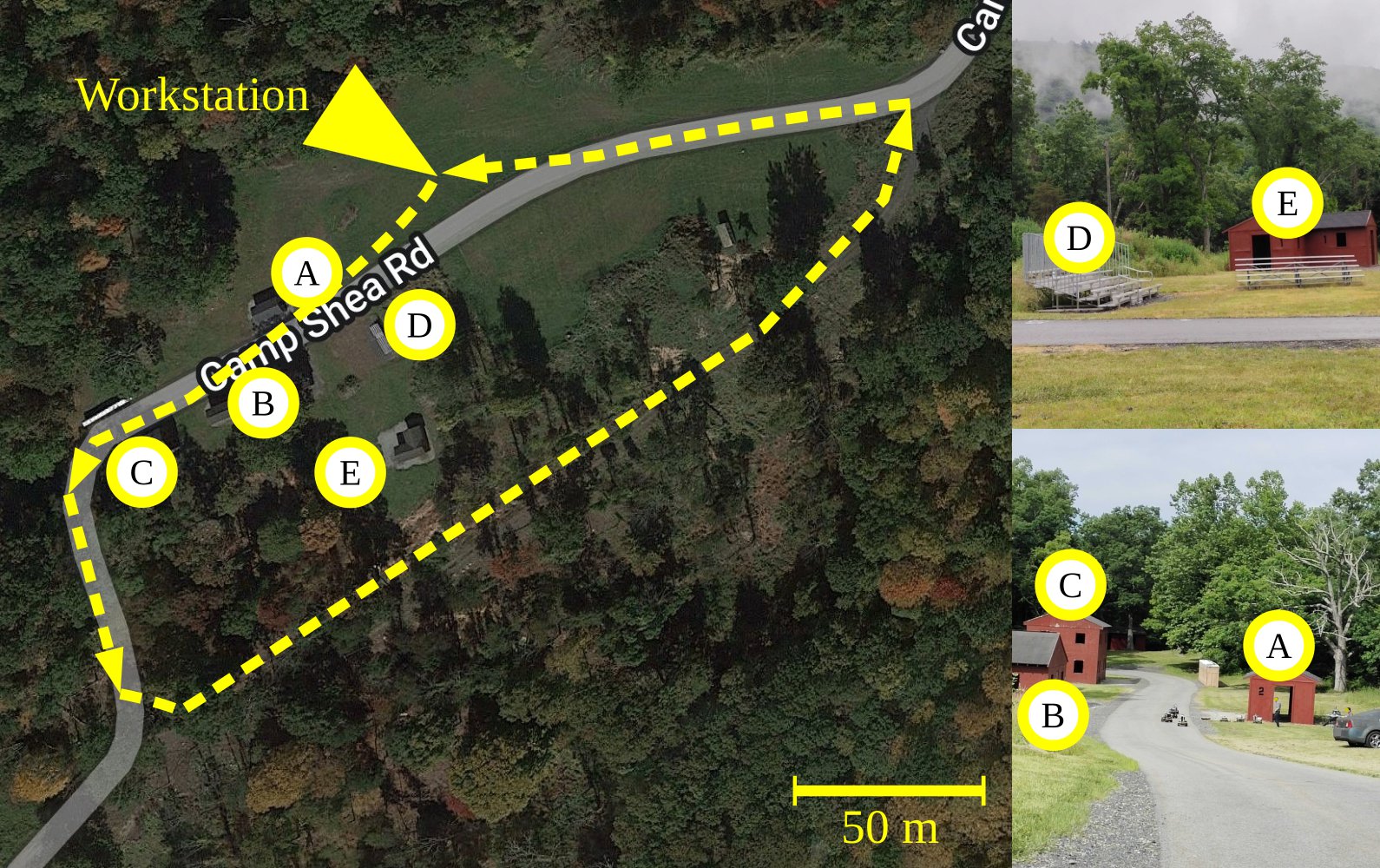}
    \caption{Environment for the outdoor mapping experiment. Left: Satellite image of the experiment locale with robot trajectory shown in yellow. Right: Corresponding locations from the ground level point of view.}
    \label{fig:3d_wp}
\end{figure}

We deployed our semantic OcTree mapping approach on a Husky robot equipped with an Ouster OS1-32 LiDAR and an Intel RealSense D455 RGBD camera. Our software stack is implemented using the \textit{Robot Operating System} (ROS) \cite{ros}. The LiDAR is used for localization via iterative closest point (ICP) scan matching \cite{icp}. A neural network based on a \textit{FCHarDNet} architecture \cite{fchardnet} and trained on the RUGD dataset \cite{rugd} was used for semantic segmentation. The RGBD camera produces color and depth images with size $640 \times 480$ at $30$ frames per second. The semantic segmentation algorithm takes a 2-D color image and outputs a semantic label for each pixel in the image, at an average frame rate of $28.7$ frames per second. By aligning the semantic image and the depth map, we derive a semantic 3-D point cloud which is utilized for Bayesian multi-class mapping. Our implementation was able to update the semantic OctoMap every $0.12$ s, on average, while all of the computations where performed on the mobile robot. The experiment was carried out in an approximately 6 acre forested area shown in Fig.~\ref{fig:3d_wp}. The environment contained various terrain features, including asphalt road, gravel, grass, densely forested areas, and hills. Additionally, a number of buildings and other structures such as bleachers, tents, and cars add to the diversity of the type of object categories within the locale. The robot was manually controlled via joystick, and traveled the path shown in Fig.~\ref{fig:3d_wp} (left) while incrementally building the semantic OctoMap. Fig.~\ref{fig:3d_wp_res} shows the semantic mapping result overlaying the satellite image obtained via 2-D projection of the semantic OctoMap. We computed the memory size of the semantic OctoMap, and compared it with the corresponding regular voxel grid representation, where each voxel contains the same amount of data as an OcTree leaf node at the lowest depth. Fig.~\ref{fig:3d_wp_size} shows an almost five-fold saving in memory when using OcTree data structure. The importance of the memory savings of the OctoMap representation becomes more apparent when communication is considered. Our semantic OctoMap implementation resulted in a network bandwidth requirement of $238$ KB/s for OctoMap, whereas a regular grid required $1173$ KB/s for map communication.

\begin{figure}[t]
    \centering
    \includegraphics[width=\linewidth]{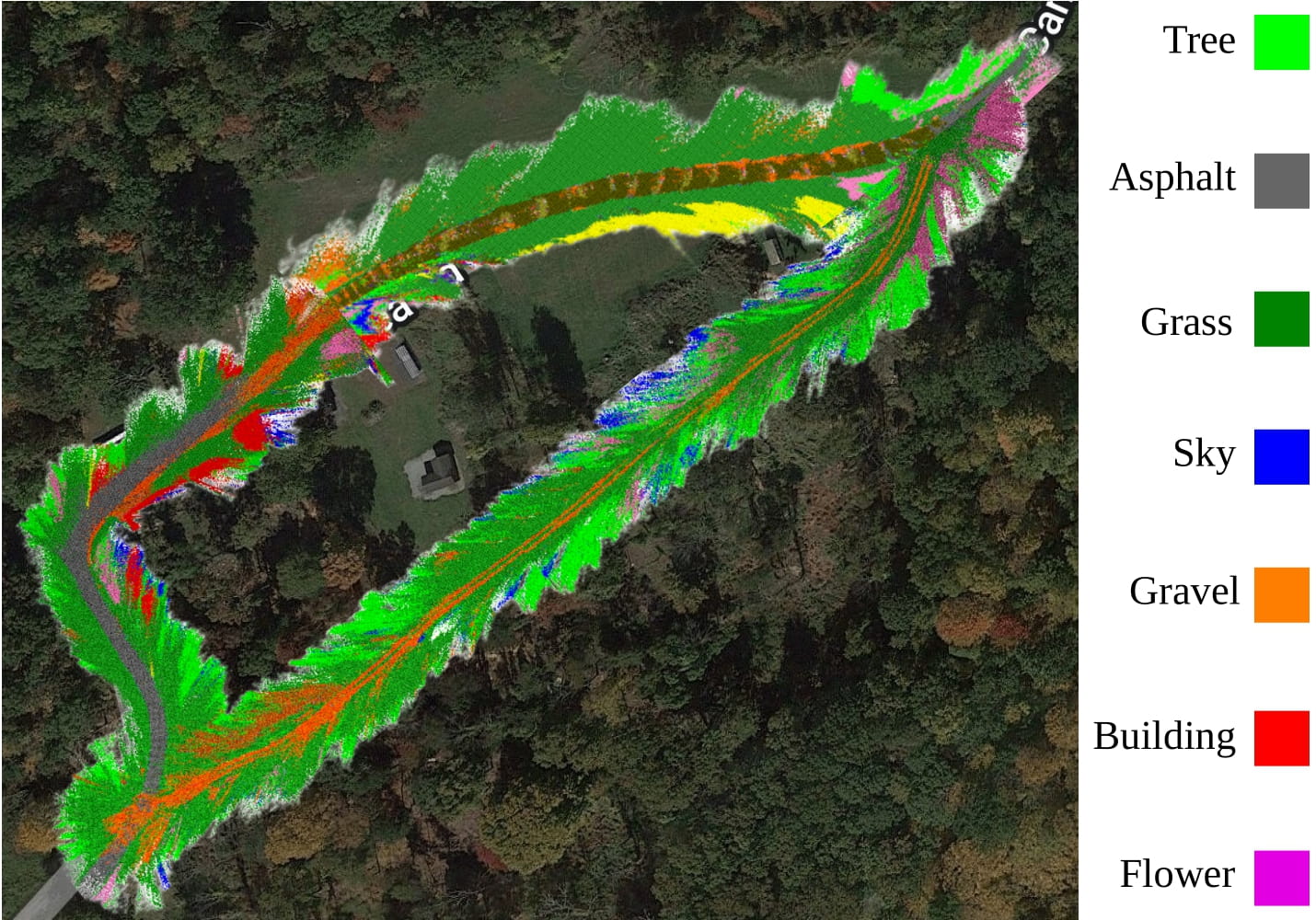}
    \caption{Semantic mapping output overlaying the satellite image. The map is obtained via 2-D projection of the 3-D semantic OctoMap.}
    \label{fig:3d_wp_res}
\end{figure}

\begin{figure}[t]
    \centering
    \includegraphics[width=\linewidth]{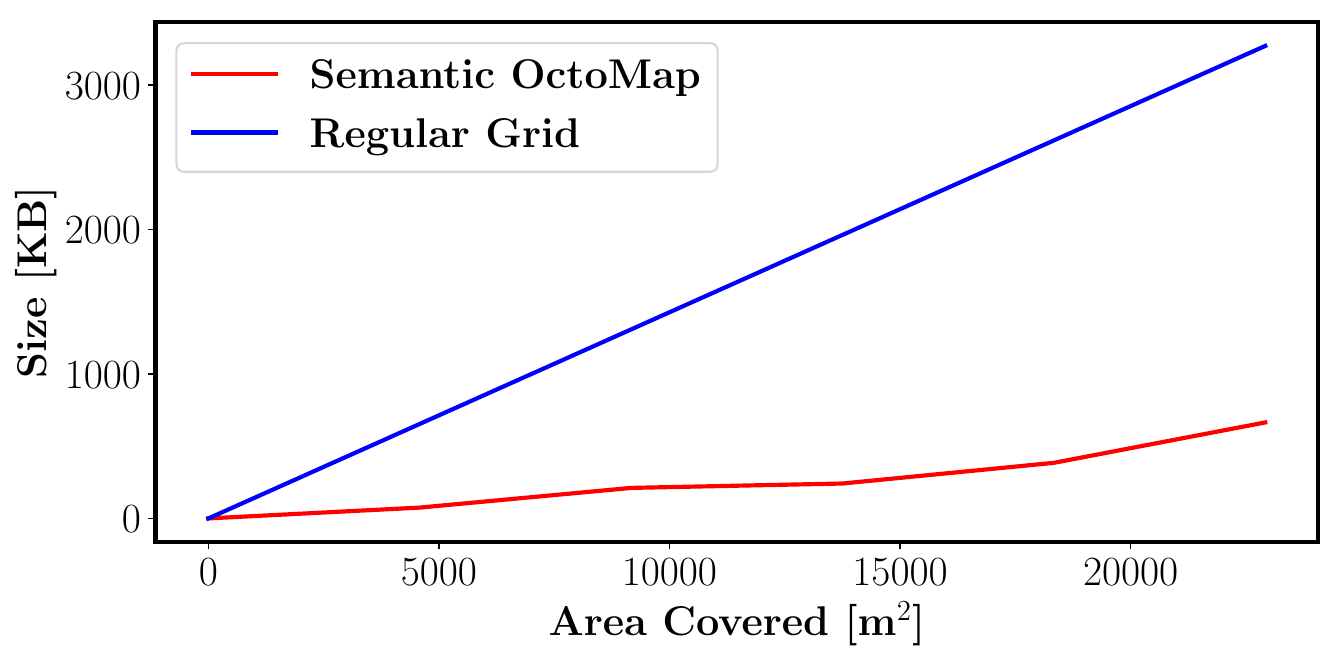}
    \caption{Memory use of regular grid vs. semantic OctoMap}
    \label{fig:3d_wp_size}
\end{figure}

\subsection{3-D Exploration in a Real-world Office Environment}
\label{subsec:real_world_exp}

\begin{figure}[t]
  \centering
  \includegraphics[width=0.8\linewidth]{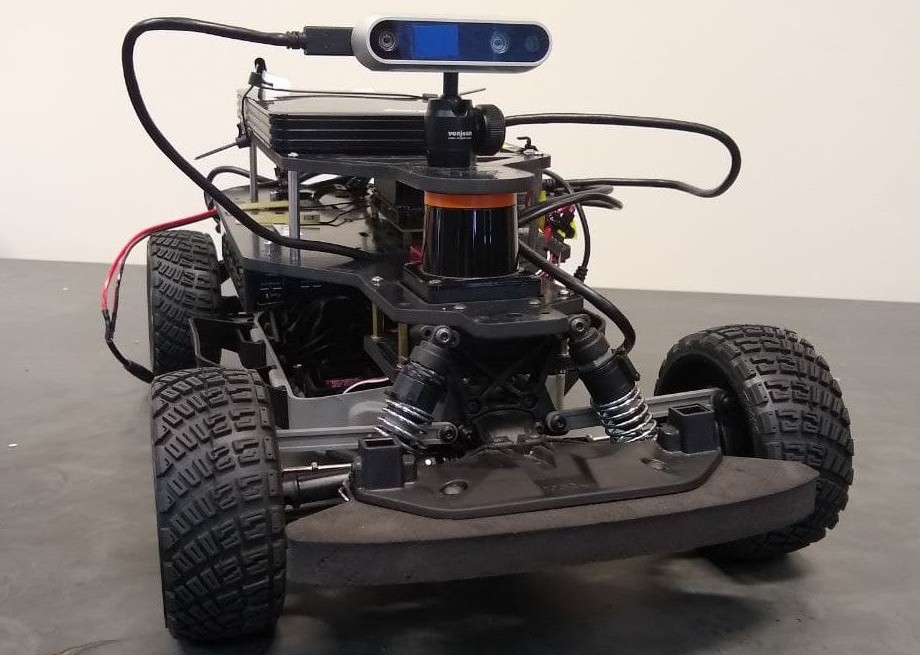}
  \caption{Robot car used in indoor real-world experiments.}
\label{fig:racecar}
\end{figure}

\begin{figure}[t]
  \centering
  \includegraphics[width=\linewidth]{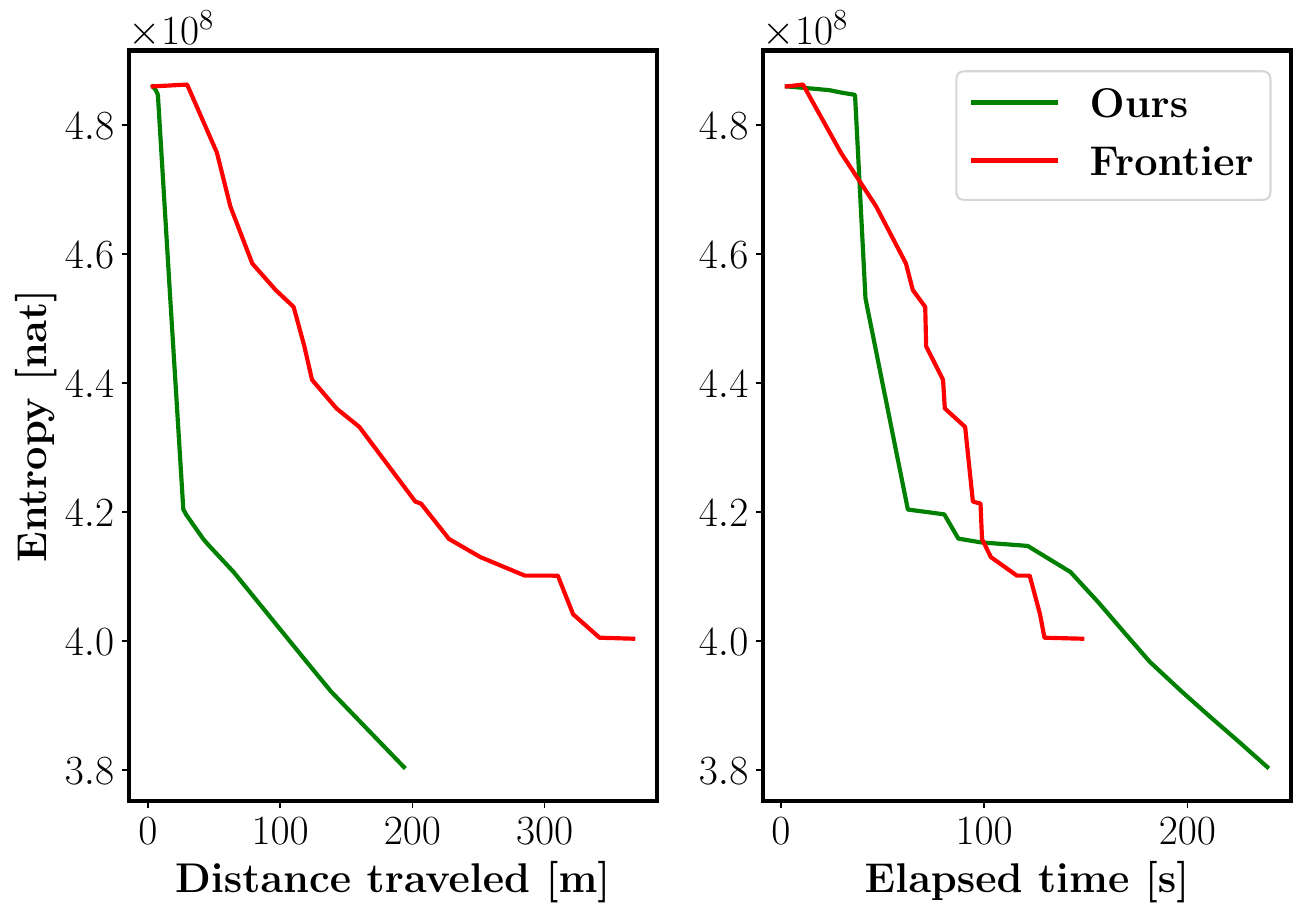}
  \caption{Real-world experiment results for active mapping for $20$ exploration iterations.}
\label{fig:3d_real_res}
\end{figure}

We implemented SSMI on a ground wheeled robot to autonomously map an indoor office environment. Fig.~\ref{fig:racecar} shows the robot equipped with a NVIDIA Xavier NX computer, a Hokuyo UST-10LX LiDAR, and an Intel RealSense D435i RGBD camera. Similar to the outdoor experiments, ROS was used for software deployment on the robot, and ICP laser scan matching provided localization. This time, we utilized a \textit{ResNet18}~\cite{resnet} neural network architecture pre-trained on the SUN RGB-D dataset \cite{sun_dataset} for semantic segmentation. In particular, we employed the deep learning inference ROS nodes provided by NVIDIA \cite{jetson}, which are optimized for Xavier NX computers via TensorRT acceleration. Due to limited computational power available on the mobile platform, we operated the RGBD camera at a lower frame rate of $15$ Hz with color and depth image size set to $640 \times 480$. The semantic segmentation algorithm was able to produce pixel classification images (resized to $512 \times 400$) at an average rate of $9.8$ frames per second. Our implementation was able to publish semantic OctoMap ROS topics every $0.34 s$, on average, with all of the processing occurred on the mobile platform. Fig.~\ref{fig:3d_real_env} depicts the exploration process, while Fig.~\ref{fig:3d_real_res} shows the performance of SSMI compared to frontier-based over $20$ exploration iterations. We observe that, similar to the simulations, SSMI outperforms frontier-based exploration in terms of distance traveled. Also, SSMI shows on par performance compared to Frontier in terms of entropy reduction per time. This can be explained by the fact that large depth measurement noise and classification error in the real-world experiments result in (a) the need for re-visiting explored areas in order to estimate an accurate map, leading to poor entropy reduction for the frontier-based method and (b) a small number of safe candidate trajectories, leading to fewer computations to be performed by SSMI. Overall, our experiments show that SSMI outperforms Frontier in indoor exploration scenarios where the number, and length, of candidate trajectories is constrained by the size of the environment.

\begin{figure*}[t]
    \begin{subfigure}[t]{0.49\linewidth}
    \centering
    \includegraphics[width=0.7\linewidth]{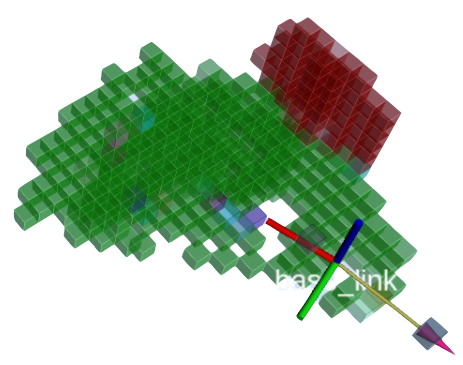}
    \captionsetup{justification=centering}
    \caption{The robot begins exploration.}
    \end{subfigure}%
    \hfill%
    \begin{subfigure}[t]{0.49\linewidth}
    \centering
    \includegraphics[width=0.7\linewidth]{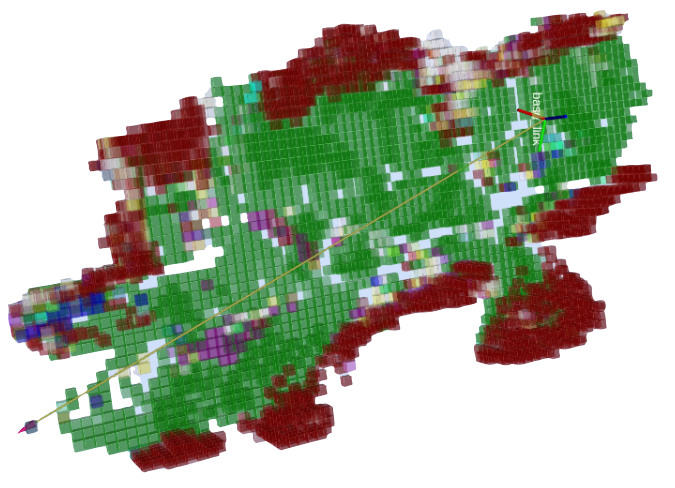}
    \caption{The robot visits neighboring unexplored regions while trying to refine the map of visited areas.}
    \end{subfigure}\\
    \begin{subfigure}[t]{0.49\linewidth}
    \centering
    \raisebox{0.12\textwidth}{\includegraphics[width=\linewidth]{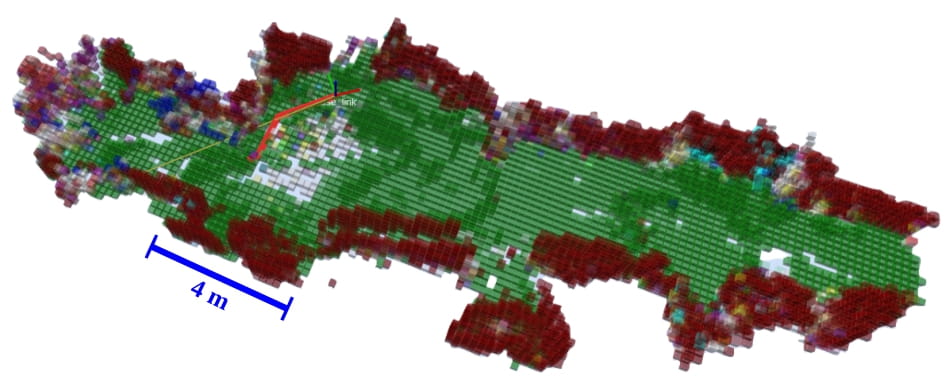}}
    \captionsetup{justification=centering}
    \caption{Semantic OctoMap after $20$ exploration iterations.}
    \end{subfigure}%
    \hfill%
    \begin{subfigure}[t]{0.49\linewidth}
    \centering
    \includegraphics[width=\linewidth]{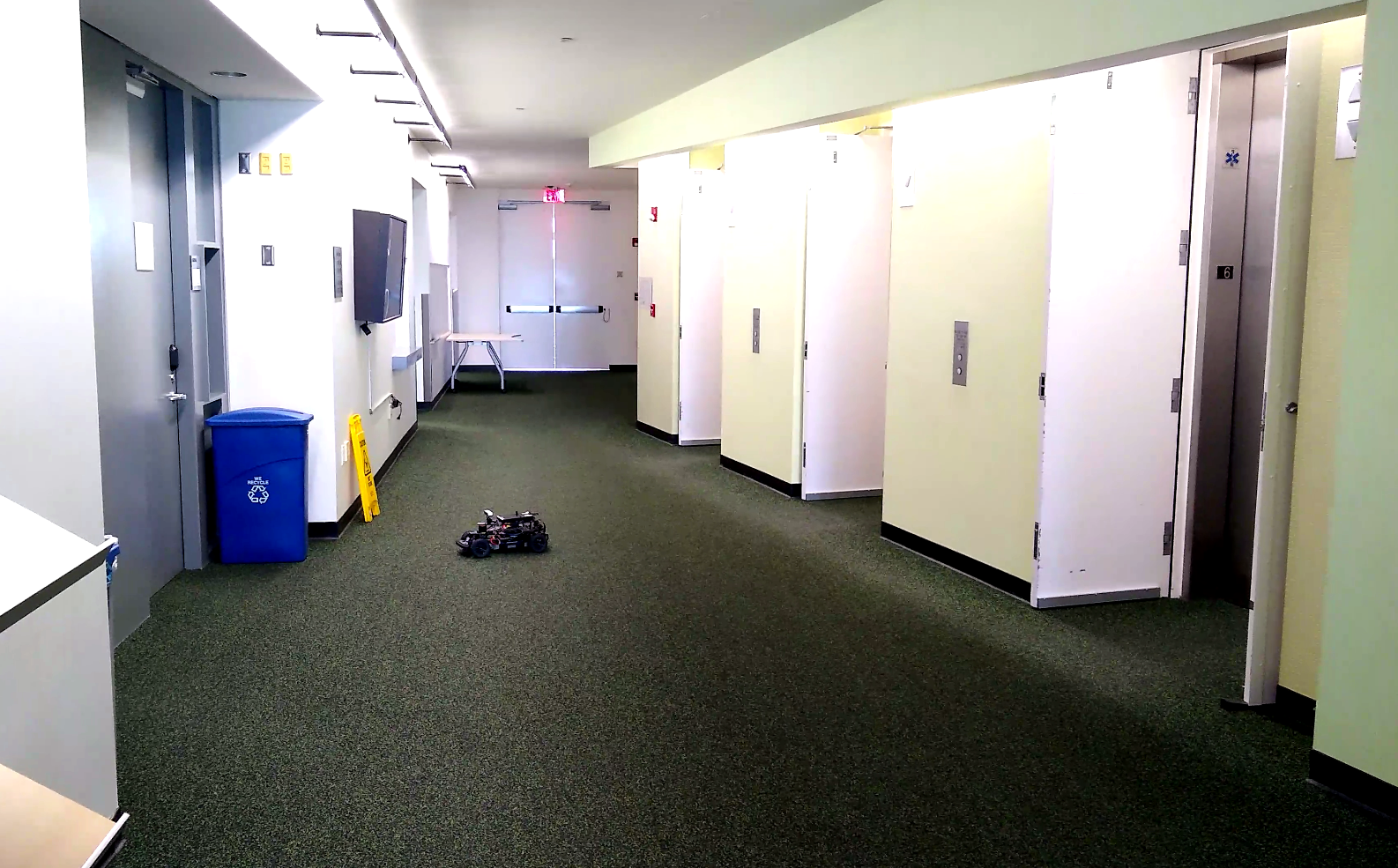}
    \captionsetup{justification=centering}
    \caption{Office environment featuring corridors, furniture, signs, and doors.}
    \end{subfigure}
    \caption{Time lapse of autonomous exploration and multi-class mapping in the environment shown in (d). The exploration is run for $20$ iterations. Different colors represent different semantic categories (floor, wall, furniture, etc.).}
    \label{fig:3d_real_env}
\end{figure*}

\section{Conclusion}
\label{sec:conclusion}

This paper developed techniques for active multi-class mapping of large 3-D environments using range and semantic segmentation observations. Our results enable efficient mutual information computation over multi-class maps and make it possible to optimize for per-class uncertainty. Our experiments show that SSMI performs on par with the state of the art FSMI method in binary active mapping scenarios. However, when semantic information is considered SSMI outperforms existing algorithms and leads to efficient exploration and accurate multi-class mapping even in the presence of domain shift due to the difference between the classification training data and the testing environment. Experiments in both simulated and real-world environments showed the scalability of SSMI for large-scale 3-D exploration scenarios.

\appendices
\section{Proof of Proposition~\ref{prop:log-odds-bayes-rule}}
\label{app:log-odds-bayes-rule}

Applying Bayes rule in \eqref{eq:bayes_rule} and the factorization in \eqref{eq:pdf_factorization} to $p_t(\bfm)$ for some $\bfz \in \calZ_{t+1}$ leads to:
\begin{equation}
\scaleMathLine[0.89]{\prod_{i=1}^N p(m_i | \calZ_{1:t}, \bfz) = \frac{p(\bfz)}{p(\bfz | \calZ_{1:t})} \prod_{i=1}^N \frac{p(m_i | \bfz)}{p(m_i)} p(m_i | \calZ_{1:t}).}
\end{equation}
The term $\frac{p(\bfz)}{p(\bfz | \calZ_{1:t})}$ may be eliminated by considering the odds ratio of an arbitrary category $m_i = k_i \in \calK$ versus the free category $m_i = 0$ for each cell $i$:
\begin{align}
\prod_{i=1}^N &\frac{ p(m_i = k_i | \calZ_{1:t}, \bfz) }{ p(m_i = 0 | \calZ_{1:t}, \bfz) } \\
&= \prod_{i=1}^N \frac{p(m_i = k_i | \bfz)}{p(m_i = 0 | \bfz)} \frac{p(m_i = 0)}{p(m_i = k_i)} \frac{p(m_i = k_i | \calZ_{1:t})}{p(m_i = 0| \calZ_{1:t})}.\notag
\end{align}
Since each term in both the left- and right-hand side products only depends on one map cell $m_i$, the expression holds for each individual cell. Re-writing the expression for cell $m_i$ in vector form, with elements corresponding to each possible value of $k_i \in \calK$, and taking an element-wise log leads to:
\begin{equation}
\label{eq:13}
\begin{aligned}
\begin{bmatrix} \log \frac{p(m_i = 0 | \calZ_{1:t},\bfz)}{p(m_i = 0| \calZ_{1:t},\bfz)} & \cdots & \log \frac{p(m_i = K| \calZ_{1:t},\bfz)}{p(m_i = 0| \calZ_{1:t},\bfz)} \end{bmatrix}^\top\\
\qquad = (\bfl_i(\bfz) - \bfh_{0,i}) + \bfh_{t,i}.
\end{aligned}
\end{equation}
Applying \eqref{eq:13} recursively for each element $\bfz \in \calZ_{t+1}$ leads to the desired result in \eqref{eq:log-odds-bayes-rule}.\qed

%%%%%%%%%%%%%%%%%%%%%%%%%%%%%%%%%%%%%%%%%%%%%%%%%%%%%%%%%%%%%%%%%%%%%%%%%%%%%%%%%%%5
\section{Proof of Proposition~\ref{prop:mut_inf_semantic}}
\label{app:mut-inf-semantic}

Let $\calR_{t+1:t+T}(r_{max}) := \cup_{\tau,b} \calR_{\tau,b}(r_{max})$ be the set of map indices which can potentially be observed by $\underline{\calZ}_{t+1:t+T}$. Using the factorization in \eqref{eq:pdf_factorization} and the fact that Shannon entropy is additive for mutually independent random variables, the mutual information only depends on the cells whose index belongs to $\calR_{t+1:t+T}(r_{max})$, i.e.:
\begin{align}
    I(\bfm&; \underline{\calZ}_{t+1:t+T}  \mid \calZ_{1:t})\notag\\
    &= \sum_{\tau = t+1}^{t+T} \sum_{b = 1}^B \sum_{i \in \calR_{\tau,b}(r_{max})} I(m_i; \bfz_{\tau,b} \mid \calZ_{1:t}). \label{eq:mut_inf_decomp}
\end{align}
This is true because the measurements $\bfz_{\tau,b} \in \underline{\calZ}_{t+1:t+T}$ are independent by construction and the terms $I(m_i; \underline{\calZ}_{t+1:t+T} \mid \calZ_{1:t})$ can be decomposed into sums of mutual information terms between single-beam measurements $\bfz_{\tau,b}$ and the respective observed map cells $m_i$. The mutual information between a single map cell $m_i$ and a sensor ray $\bfz$ is:
\begin{align}
  &I(m_i; \bfz \mid \calZ_{1:t}) = \label{eq:mut_inf_single_cell}\\
  &\scaleMathLine{\int p(\bfz \mid \calZ_{1:t}) \sum_{k=0}^K p(m_i = k \mid \bfz, \calZ_{1:t}) \log{\frac{p(m_i = k \mid \bfz, \calZ_{1:t})}{p_t(m_i = k)}} d\bfz.} \notag
\end{align}
Using the inverse observation model in \eqref{eq:log_inverse_observation_model} and the Bayesian multi-class update in \eqref{eq:log-odds-bayes-rule}, we have:
\begin{align}
    &\sum_{k=0}^K p(m_i = k \mid \bfz, \calZ_{1:t}) \log{\frac{p(m_i = k \mid \bfz, \calZ_{1:t})}{p_t(m_i = k)}} \notag\\ 
    &\scaleMathLine{ = (\bfl_i(\bfz) - \bfh_{0,i})^\top \sigma(\bfl_i(\bfz) - \bfh_{0,i} + \bfh_{t,i} ) +\log{\frac{p(m_i = 0 \mid \bfz, \calZ_{1:t})}{p_t(m_i = 0)}}}\notag\\
    & = f(\bfl_i(\bfz) - \bfh_{0,i}, \bfh_{t,i}), \label{eq:f_func}
\end{align}
where \eqref{eq:log_inverse_observation_model} and \eqref{eq:log-odds-bayes-rule} were applied a second time to the log term above. Plugging \eqref{eq:f_func} back into the mutual information expression in \eqref{eq:mut_inf_single_cell} and returning to \eqref{eq:mut_inf_decomp}, we have:
\begin{align}
\label{eq:mi-integral}
I(&\bfm; \underline{\calZ}_{t+1:t+T}  \mid \calZ_{1:t})\\
&= \sum_{\tau = t+1}^{t+T} \sum_{b = 1}^B \sum_{y = 1}^K \int_0^{r_{max}} \biggl( p(\bfz_{\tau,b} = (r,y) \mid \calZ_{1:t}) \notag\\
&\qquad\qquad\qquad \sum_{i \in \calR_{\tau,b}(r_{max})} \negquad f(\bfl_i((r,y)) - \bfh_{0,i}, \bfh_{t,i}) \biggr) dr. \notag
\end{align}
For $\bfz_{\tau,b}=(r,y)$, the second term inside the integral above can be simplified to:
\begin{equation}
\label{eq:C_cal}
\begin{aligned}
\Tilde{C}_{\tau,b}(r,y) &:= \negquad \sum_{i \in \calR_{\tau,b}(r_{max})} \negquad f(\bfl_i((r,y)) - \bfh_{0,i}, \bfh_{t,i})\\
&\phantom{:}= f(\bfphi^+ + \bfE_{y+1}\bfpsi^+ - \bfh_{0,i_{\tau,b}^*}, \bfh_{t,i_{\tau,b}^*})\\
&\qquad + \negquad\sum_{i \in \calR_{\tau,b}(r) \setminus \{i_{\tau,b}^*\}} \negquad f(\bfphi^- - \bfh_{0,i}, \bfh_{t,i})
\end{aligned}
\end{equation}
because for map indices $i \in \calR_{\tau,b}(r_{max}) \setminus \calR_{\tau,b}(r)$ that are not observed by $\bfz_{\tau,b}$, we have $\bfl_i((r,y)) = \bfh_{0,i}$ according to \eqref{eq:log_inverse_observation_model} and $f(\bfh_{0,i} - \bfh_{0,i}, \bfh_{t,i}) = 0$.

Next, we apply the definition of \eqref{eq:cond_prob_approx} for the first term in the integral in \eqref{eq:mi-integral}, which turns it into an integration over $\tilde{p}_{\tau,b}(r,y) \tilde{C}_{\tau,b}(r,y)$. Note that $\Tilde{p}_{\tau,b}(r,y)$ and $\Tilde{C}_{\tau,b}(r,y)$ are piecewise-constant functions since $\calR_{\tau,b}(r)$ is constant with respect to $r$ as long as the beam $\bfz$ lands in cell $m_{i^*}$. Hence, we can partition the integration domain over $r$ into a union of intervals where the beam $\bfz$ hits the same cell, i.e., $\calR_{\tau,b}(r)$ remains constant:
\begin{equation*}
\int_{0}^{r_{max}} \negquad\Tilde{p}_{\tau,b}(r,y) \Tilde{C}_{\tau,b}(r,y)\,dr = \sum_{n=1}^{N_{\tau,b}} \int_{r_{n-1}}^{r_n} \negquad\Tilde{p}_{\tau,b}(r,y) \Tilde{C}_{\tau,b}(r,y)\,dr,
\end{equation*}
where $N_{\tau,b} = |\calR_{\tau,b}(r_{max})|$, $r_0 = 0$, and $r_N = r_{max}$. From the piecewise-constant property of $\Tilde{p}_{\tau,b}(r,y)$ and $\Tilde{C}_{\tau,b}(r,y)$ over the interval $(r_{n-1},r_n]$, one can obtain:
\begin{align}
    \int_{r_{n-1}}^{r_n} &\Tilde{p}_{\tau,b}(r,y) \Tilde{C}_{\tau,b}(r,y)\,dr \\ 
    &= \Tilde{p}_{\tau,b}(r_n,y) \Tilde{C}_{\tau,b}(r_n,y) \gamma(n) = p_{\tau,b}(n,y) C_{\tau,b}(n,y), \notag
\end{align}
where $p_{\tau,b}(n,y)$ and $C_{\tau,b}(n,y)$ are defined in the statement of Proposition~\ref{prop:mut_inf_semantic}. Substituting $y$ with $k$ and plugging the integration result into \eqref{eq:mi-integral} yields the lower bound in \eqref{eq:mut_inf_semantic} for the mutual information between $\bfm$ and $\calZ_{t+1:t+T}$.\qed

%%%%%%%%%%%%%%%%%%%%%%%%%%%%%%%%%%%%%%%%%%%%%%%%%%%%%%%%%%%%%%%%%%%%%%%%%%%%%%%%%%%5
\section{Proof of Proposition~\ref{prop:mut_inf_semantic_octomap}}
\label{app:mut-inf-semantic-octomap}

Consider a single beam $\bfz_{\tau, b}$, passing through cells $\left\{m_i\right\}_i$, $i \in \calR_{\tau, b}(r_{max})$. As shown in Appendix~\ref{app:mut-inf-semantic}, the mutual information between the map $\bfm$ and a beam $\bfz_{\tau, b}$ can be computed as:
\begin{equation}
\label{eq:mut-inf-single-ray}
    I(\bfm; \bfz_{\tau, b} | \calZ_{1:t}) = \sum_{k=1}^{K} \sum_{n=1}^{N_{\tau, b}} p_{\tau, b}(n,k) C_{\tau, b}(n,k).
\end{equation}
Assuming piece-wise constant class probabilities, we have:
\begin{equation}
\begin{split}
    \sum_{n=1}^{N_{\tau, b}} p_{\tau, b}(n,k) &C_{\tau, b}(n,k) = \\ \sum_{q=1}^{Q_{\tau, b}} &\sum_{n= \omega_{\tau, b, 1:q-1} + 1}^{\omega_{\tau, b, 1:q}} p_{\tau, b}(n,k) C_{\tau, b}(n,k),
\end{split}
\label{eq:piece_wise_const_info}
\end{equation}
where $\omega_{\tau, b, 1:q} = \sum_{j=1}^q \omega_{\tau, b, j}$. For each $\omega_{\tau, b, 1:q-1} < n \leq \omega_{\tau, b, 1:q}$, the terms $p_{\tau, b}(n,k)$ and $C_{\tau, b}(n,k)$ are expressed as:
\begin{equation}
\label{eq:p_and_C_const}
\begin{aligned}
    p_{\tau, b}(n,k) &= \pi_t(q,k) \pi_t^{(n-1-\omega_{\tau, b, 1:q-1})}(q,0) \prod_{j=1}^{q-1} \pi_t^{\omega_{\tau, b, j}}(j,0),\\
    C_{\tau, b}(n,k) &= f(\bfphi^+ + \bfE_{k+1} \bfpsi^+ - \bfchi_{0,q}, \bfchi_{t,q}) + \\ &(n-1-\omega_{\tau, b, 1:q-1}) f(\bfphi^--\bfchi_{0,q}, \bfchi_{t,q}) +\\ &\qquad\qquad\qquad\sum_{j=1}^{q-1} \omega_{\tau, b, j} f(\bfphi^--\bfchi_{0,j}, \bfchi_{t,j}).\notag
\end{aligned}
\end{equation}
Plugging this into the inner summation of \eqref{eq:piece_wise_const_info} leads to:
\begin{equation}
\begin{split}
    \sum_{n= \omega_{\tau, b, 1:q-1} + 1}^{\omega_{\tau, b, 1:q}} &p_{\tau, b}(n,k) C_{\tau, b}(n,k) = \\ \rho_{\tau, b}&(q,k) \big[\beta_{\tau, b}(q,k) \sum_{j = 0}^{\omega_{\tau, b, q} - 1} \pi_t^{j}(q,0) + \\ &f(\bfphi^--\bfchi_{0,q}, \bfchi_{t,q}) \sum_{j = 0}^{\omega_{\tau, b, q} - 1} j \pi_t^j(q,0) \big],
\end{split}
\label{eq:within_section_sum}
\end{equation}
The summations in \eqref{eq:within_section_sum} can be computed explicitly, leading to the following closed-form expression:
\begin{align} \label{eq:within_section_sum_res}
    &\beta_{\tau, b}(q,k) \sum_{j = 0}^{\omega_{\tau, b, q} - 1} \pi_t^{j}(q,0) + \notag\\ 
    &\qquad f(\bfphi^--\bfchi_{0,q}, \bfchi_{t,q}) \sum_{j = 0}^{\omega_{\tau, b, q} - 1} j \pi_t^j(q,0) = \notag\\ 
    &\beta_{\tau, b}(q,k) \frac{1 - \pi_t^{\omega_{\tau,b,q}}(q,0)}{1 - \pi_t(q,0)} + \\ 
    &\qquad\frac{f(\bfphi^--\bfchi_{0,q}, \bfchi_{t,q})}{(1 - \pi_t(q,0))^2} \Big[(\omega_{\tau,b,q} - 1) \pi_t^{\omega_{\tau,b,q} + 1}(q,0) \notag\\ 
    &\qquad\;\;\:- \omega_{\tau,b,q} \pi_t^{\omega_{\tau,b,q}}(q,0) + \pi_t(q,0)\Big] = \Theta_{\tau, b}(q,k).\notag
\end{align}
Therefore, the Shannon mutual information between a semantic OctoMap $\bfm$ and a range-category measurement $\bfz_{\tau, b}$ can be computed as in \eqref{eq:mut_inf_semantic_octomap_single}.\qed

% use section* for acknowledgment
%\section*{Acknowledgment}

% The authors would like to thank...

% Can use something like this to put references on a page
% by themselves when using endfloat and the captionsoff option.
\ifCLASSOPTIONcaptionsoff
  \newpage
\fi

%%%%%%%%%%%%%%%%%%%%%%%%%%%%%%%%%%%%%%%%%%%%%%%%%%%%%%%%%%%%%%%%%%%%%%%%%%%%%%%%

% \balance
{\small
\bibliographystyle{cls/IEEEtran}
\bibliography{bib/IEEEexample.bib}
}
%%%%%%%%%%%%%%%%%%%%%%%%%%%%%%%%%%%%%%%%%%%%%%%%%%%%%%%%%%%%%%%%%%%%%%%%%%%%%%%%

% biography section
% 
% If you have an EPS/PDF photo (graphicx package needed) extra braces are
% needed around the contents of the optional argument to biography to prevent
% the LaTeX parser from getting confused when it sees the complicated
% \includegraphics command within an optional argument. (You could create
% your own custom macro containing the \includegraphics command to make things
% simpler here.)
%\begin{IEEEbiography}[{\includegraphics[width=1in,height=1.25in,clip,keepaspectratio]{mshell}}]{Michael Shell}
% or if you just want to reserve a space for a photo:

% \begin{IEEEbiography}{Michael Shell}
% Biography text here.
% \end{IEEEbiography}

% % if you will not have a photo at all:
% \begin{IEEEbiographynophoto}{John Doe}
% Biography text here.
% \end{IEEEbiographynophoto}

% % insert where needed to balance the two columns on the last page with
% % biographies
% %\newpage

% \begin{IEEEbiographynophoto}{Jane Doe}
% Biography text here.
% \end{IEEEbiographynophoto}

% You can push biographies down or up by placing
% a \vfill before or after them. The appropriate
% use of \vfill depends on what kind of text is
% on the last page and whether or not the columns
% are being equalized.

%\vfill

% Can be used to pull up biographies so that the bottom of the last one
% is flush with the other column.
%\enlargethispage{-5in}

\begin{IEEEbiography}
[{\includegraphics[width=1in,height=1.25in,clip,keepaspectratio]{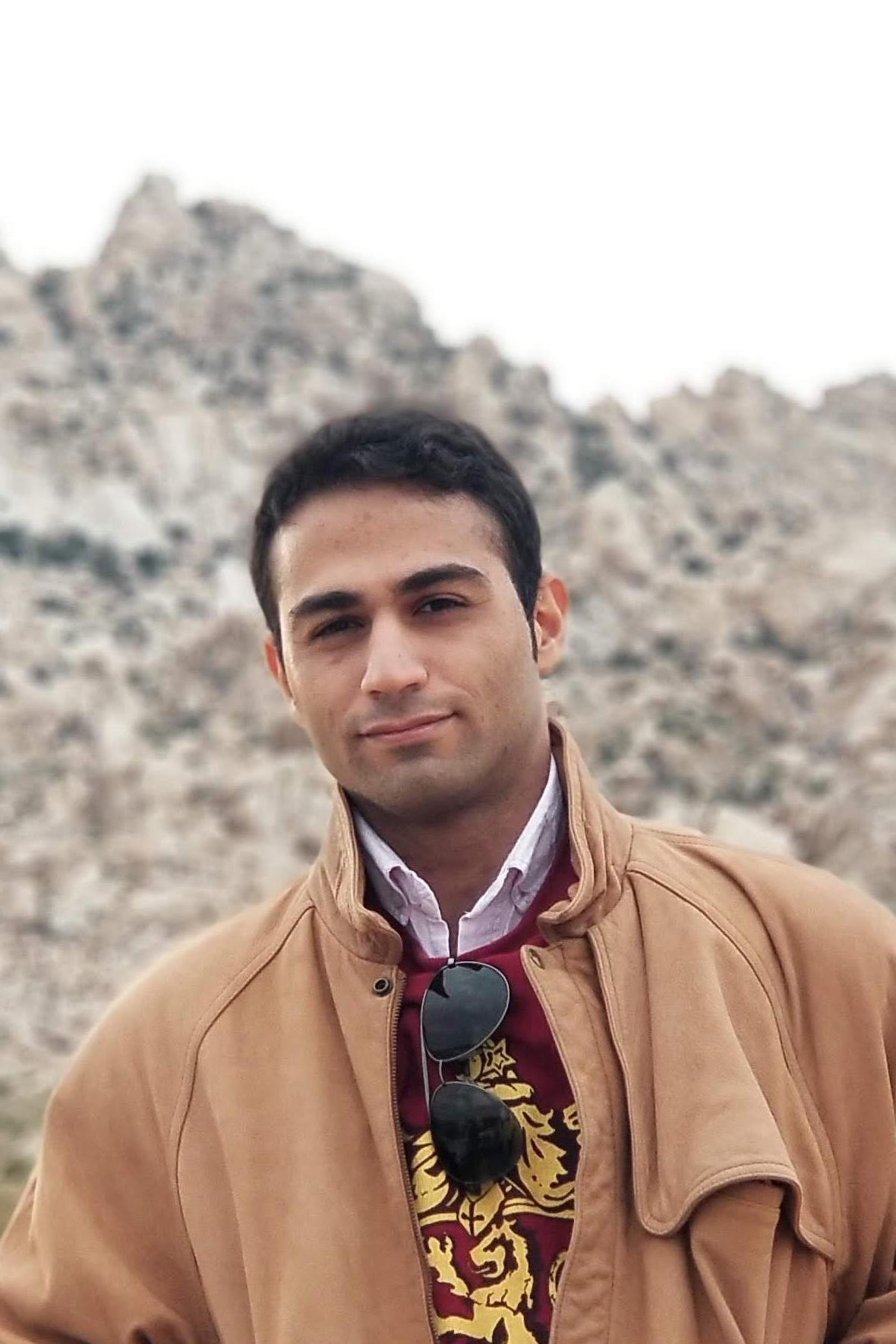}}]{Arash Asgharivaskasi}
(S'22) is a Ph.D. student of Electrical and Computer Engineering at the University of California San Diego, La Jolla, CA. He obtained a B.S. degree in Electrical Engineering from Sharif University of Technology, Tehran, Iran, and an M.S. degree in Electrical and Computer Engineering from the University of California San Diego, La Jolla, CA, USA. His research focuses on active information acquisition using mobile robots with applications to mapping, security, and environmental monitoring.
\end{IEEEbiography}
\vspace{-150 mm}
\begin{IEEEbiography}
[{\includegraphics[width=1in,height=1.25in,clip,keepaspectratio]{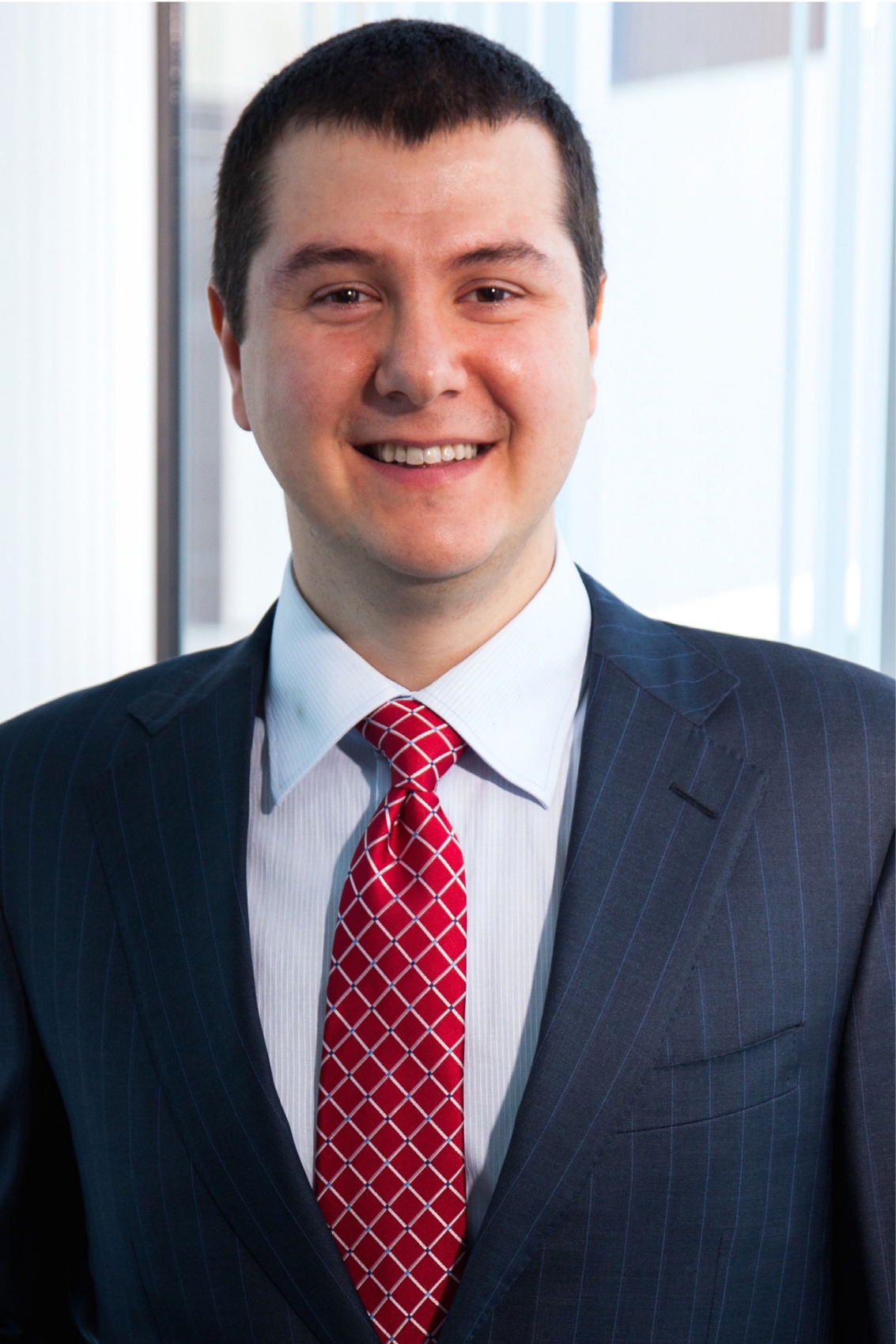}}]{Nikolay Atanasov}
(S'07-M'16) is an Assistant Professor of Electrical and Computer Engineering at the University of California San Diego, La Jolla, CA, USA. He obtained a B.S. degree in Electrical Engineering from Trinity College, Hartford, CT, USA, in 2008 and M.S. and Ph.D. degrees in Electrical and Systems Engineering from the University of Pennsylvania, Philadelphia, PA, USA, in 2012 and 2015, respectively. His research focuses on robotics, control theory, and machine learning, applied to active sensing using ground and aerial robots. He works on probabilistic perception models that unify geometry and semantics and on optimal control and reinforcement learning approaches for minimizing uncertainty in these models. Dr. Atanasov's work has been recognized by the Joseph and Rosaline Wolf award for the best Ph.D. dissertation in Electrical and Systems Engineering at the University of Pennsylvania in 2015, the best conference paper award at the International Conference on Robotics and Automation in 2017, and the NSF CAREER award in 2021.
\end{IEEEbiography}
\end{document}